\documentclass[twoside,11pt]{article}
\usepackage[preprint]{jmlr2e}

\usepackage{graphics}
\usepackage{caption}
\usepackage{subcaption}
\usepackage[ruled,vlined]{algorithm2e}
\usepackage[utf8]{inputenc}
\usepackage{amsmath}
\usepackage{mathtools}
\usepackage{xcolor}
\usepackage{verbatim}

\usepackage{indentfirst}

\usepackage{paralist}
\usepackage[T1]{fontenc}
\usepackage{pstool}
\usepackage{todonotes}

\RequirePackage{algorithm}
\RequirePackage{algorithmic}
\newcommand{\ep}{\epsilon}
\DeclareMathOperator{\Ker}{Ker}

\DeclareMathOperator{\PCech}{\mathcal{P}_0^{\breve{C}ech}}
\DeclareMathOperator{\PCechq}{\mathcal{P}_{\textit{q}}^{\breve{C}ech}}

\DeclareMathOperator{\PDTMq}{\mathcal{P}_{\textit{q}}^{DTM_{\textit{p}}}}
\DeclareMathOperator{\PDTMoneq}{\mathcal{P}_{\textit{q}}^{DTM_1}}
\DeclareMathOperator{\PDTMone}{\mathcal{P}_0^{DTM_1}}

\DeclareMathOperator{\Image}{Im}

\newcommand{\EE}{\mathbb{E}}


\DeclareMathOperator{\db}{d_B}
\DeclareMathOperator{\diam}{diam}



\usepackage[symbol]{footmisc}

\hypersetup{colorlinks=true,linkcolor=blue,anchorcolor=blue,citecolor=blue,filecolor=blue, urlcolor=blue,bookmarksnumbered=true,pdfstartview=FitB}

\usepackage{lastpage}
\jmlrheading{}{}{}{}{}{}{}

\ShortHeadings{Differentially Private Topological Data Analysis}{Kang, Kim, Sohn, and Awan}
\firstpageno{1}

\begin{document}
\title{Differentially Private Topological Data Analysis}

\author{\name Taegyu Kang$^*$ \email kang426@purdue.edu \\
       \addr Department of Statistics\\
       Purdue University\\
       West Lafayette, IN 47907, USA
       \AND
       \name Sehwan Kim$^*$ \email kim3009@purdue.edu \\
       \addr Department of Statistics\\
       Purdue University\\
       West Lafayette, IN 47907, USA
              \AND
       \name Jinwon Sohn$^*$\email sohn24@purdue.edu \\
       \addr Department of Statistics\\
       Purdue University\\
       West Lafayette, IN 47907, USA
              \AND
       \name Jordan Awan$^{\dag }$ \email jawan@purdue.edu \\
       \addr Department of Statistics\\
       Purdue University\\
       West Lafayette, IN 47907, USA
       }
\footnote[0]{*: Kang, Kim, and Sohn are co-first authors and they
contribute equally to this paper. $\dag$: Corresponding author.}

\editor{}

\maketitle

\begin{abstract}%
This paper is the first to attempt differentially private (DP) topological data analysis (TDA), producing near-optimal private persistence diagrams. We analyze the sensitivity of persistence diagrams in terms of the bottleneck distance, and we show that the commonly used \v{C}ech complex has sensitivity that does not decrease as the sample size $n$ increases. This makes it challenging for the persistence diagrams of \v{C}ech complexes to be privatized. As an alternative, we show that the persistence diagram obtained by the $L^1$-distance to measure (DTM) has sensitivity $O(1/n)$. Based on the sensitivity analysis, we propose using the exponential mechanism whose utility function is defined in terms of the bottleneck distance of the $L^1$-DTM persistence diagrams. We also derive upper and lower bounds of the accuracy of our privacy mechanism; the obtained bounds indicate that the privacy error of our mechanism is near-optimal. We demonstrate the performance of our privatized persistence diagrams through simulations as well as on a real data set tracking human movement.
\end{abstract}

\begin{keywords}
  \v{C}ech complex, Distance to a measure, Exponential mechanism, Persistence diagram, Persistent homology
\end{keywords}

\section{Introduction}

Recent advances in technology make it possible to obtain data with such complicated structure that traditional data analysis methodologies cannot deal with them appropriately. To analyze with such complex data, topological data analysis has been an indispensable tool in data science \citep{Niyogi2011, khas;munc;16, Wasserman2018, dind;etal;20, riec;eta;20}. Essentially, topology is the most fundamental mathematical structure where the notion of ``\textit{nearness}'' can be discussed, and its generality makes it an appropriate framework for discussing extremely complicated data which are not expected to have more equipped structures such as vector spaces, manifolds, and so on. Topological data analysis is a novel branch of data analysis which was invented to capture the topological structure of data, and it has been deeply studied for the last couple of decades. See \cite{Carlsson2009} for a comprehensive overview. Especially, persistent homology, its flagship method, has been extensively studied theoretically and applied to many different disciplines such as medicine \citep{Nicolau2011}, biology \citep{McGuirl2020},  neuroscience \citep{Xu2021, Caputi2021}, astronomy \citep{Xu2019}, and machine learning \citep{Hensel2021,Betthauser2022}, to name a few.

At the same time, as bigger and more diverse data have become accessible, the issue of protecting private information of individuals in the data has also gained attention. Due to this concern, there is an increasing demand for privacy protecting procedures with formal guarantees. Such a paradigm has accelerated the growing attention to a well-formulated framework of privacy protection in data science. Differential privacy \citep{dwork2006calibrating} is the state-of-the-art framework that formally quantifies the notion of privacy and its protection.  Differential privacy requires that a privacy-protecting algorithm produces similar results for any two data sets, which differ at only one data point. The exact definition of $\epsilon$-DP will be introduced in the following section and we recommend \cite{Dwork2014} for a comprehensive introduction to DP. Recently, DP has been one of the central research topics in data science due, tackling problems in deep learning \citep{shokri2015privacy,abadi2016deep}, functional data analysis \citep{hall2013differential,mirshani2019formal}, social networks \citep{karwa2016inference,karwa2017sharing}, as well as many others.

While the DP framework has been widely adapted to numerous methodologies in data science as mentioned above, its application to TDA has yet to be discussed. To the best of our knowledge, the only work involved with both DP and TDA is \cite{Hehir;Vishwanath;Slakovic;Niu:2022}, which solely used persistence diagrams as a method of communicating the utility of a randomized response algorithm, and did not attempt to produce a private version of a TDA object. 
We believe that introducing the DP framework to TDA will be an emerging direction of research because many areas where TDA methods have been successfully utilized use data containing people's sensitive information. For example, \cite{Shnier:2019} applied persistence diagrams to differentiate gene expressions in individuals with autism spectrum disorders from those in a control group. 
Furthermore, TDA methods are used in several other problems in medical domain and neuroscience, as mentioned above, such as brain connectivity \citep{Caputi2021}, breast cancer \citep{Nicolau2011}, and neurological disorder \citep{Lee_et_al:2011}. 
Finally, TDA has recently been combined with other popular machine learning methods such as convolutional neural networks \citep{Love;Filippenko;Maroulas;Carlsson:2023}, auto-encoders \citep{hofe;etal;19}, etc. Hence, introducing DP to TDA may have far-reaching influence in data science.


\vspace{2mm}

{\bf Our Contributions}:  This paper is concerned with how to introduce the concept of differential privacy (DP) into the framework of topological data analysis (TDA). Our key observation is that, to exploit currently available privacy mechanisms, one needs an outlier-robust TDA method. 
Such an observation agrees with a long-standing intuitive principle in differential privacy saying that the specific data of any one individual should not have a significant effect on the outcome of the analysis to achieve privacy protection; for instance, see \cite{Dwork2009}, \cite{avella2021privacy}. To illuminate the adaptation of this principle to TDA, we examine the sensitivity of the bottleneck distance of persistence diagrams, which is the most widely used presentation of persistent homology, obtained by two different types of construction: persistence diagrams obtained from \v{C}ech complexes, which we see is not outlier-robust; and persistence diagrams obtained from the distance to a measure (DTM), which is outlier-robust. Our examination shows why persistence diagrams of \v{C}ech complexes are not readily privatized, and how persistence diagrams of the DTM can overcome such a difficulty. Moreover, we discuss how the magnitude of outlier-robustness affects the rate of sensitivity of the bottleneck distance, and propose to use $L^1$-DTM in order to achieve a minimal sensitivity. Based on the sensitivity analysis, we propose the first differentially private mechanism for persistence diagrams that provides $\epsilon$-differential privacy, using the exponential mechanism. We also establish upper and lower bounds for the accuracy error of our mechanism. The established bounds indicate that the privacy error of our mechanism is near-optimal. Our contributions can be summarized more specifically as follows:

\begin{itemize}
\item We prove that the sensitivity of the persistence diagram of \v{C}ech complexes, defined in terms of the bottleneck distance, does not diminish to zero as the sample size increases.

\item We propose using the persistence diagram of the distance to measure (DTM) as an alternative, and we prove that the $L^p$-DTM persistence diagram is guaranteed to have sensitivity, which is defined in terms of the bottleneck distance, $O(n^{-1/p})$. This leads us to use the $L^1$-DTM persistence diagram that guarantees the sensitivity $O(n^{-1})$.

\item We apply the exponential mechanism whose utility function is defined in terms of the bottleneck distance of $L^1$-DTM persistence diagrams in order to produce differentially privatized persistence diagrams. To the best of our knowledge, our algorithm is the first attempt of developing a mechanism generating differentially privatized persistence diagrams. We also find upper and lower bounds of the accuracy error of our mechanism.

\item We prove that any privacy mechanism applied to the $L^1$-DTM persistence diagrams cannot have accuracy, whose decay order is superior to the upper bound of the decay order of the privacy error corresponding to our mechanism. This result indicates that our mechanism may have optimal privacy error.
\end{itemize}

{\bf Organization:} 
The remainder of the paper is organized as follows.  In Section \ref{sec: prelim}, we briefly review the background and notation of TDA and DP. In Section \ref{sec: sensitivity}, we first examine the sensitivity of persistence diagrams constructed from the \v{C}ech complexes, as well as for an outlier-robust construction of persistence diagrams obtained from the DTM, introduced by \cite{Chazal;Cohen-Setiner;Merigot:2011}. In Section \ref{sec: ExpMech}, based on the sensitivity analysis given in Section \ref{sec: sensitivity}, we employ the exponential mechanism to generate privatized persistence diagrams. We also derive upper and lower bounds of its accuracy. Simulation studies which implement our algorithm are given in Section \ref{sec: simulation}. In Section \ref{sec: real data}, we apply our algorithm to a real-world data set including information about the locations of three people walking in a building recorded on smartphones over time. All proofs as well as additional results of real data analyses are presented in the appendices.

\section{Preliminaries} \label{sec: prelim}
 In this section, we introduce the persistence diagram, which is a statistic about the shape of the data, and look at bottleneck distance, a metric in the persistence diagram space as well as its stability. Also, we review the $\ep$-differential privacy ($\ep$-DP) and the exponential mechanism, which is one of the algorithms that satisfies $\ep$-DP. 
 
\paragraph{Notation} Throughout the paper, for real numbers $A$ and $B$ which possibly depend on a parameter $n \in \mathbb{Z}_+$, we use the asymptotic notation $A \lesssim B$ or $A = O(B)$ to denote the bound $|A| \leqslant CB$ for some absolute constant $C > 0$. If	the constant $C$ depends on some parameters, we will explicitly indicate them; for instance, $A \lesssim_{k,d} B$ means the bound $|A| \leqslant C_{k,d} B$ with a constant $C_{k,d}$ depending only on $k$ and $d$. If $A \lesssim B$ and $A \gtrsim B$, we write denote it by $A \approx B$. We also use the notation $A = o(B)$ to denote the asymptotic $\lim_{n \to \infty} A/B = 0$. In addition, for random variables $X$ and $Y$ which possibly depend on a parameter $n \in \mathbb{Z}_+$, we write $X = O_p(Y)$ to mean that $X/Y$ is bounded in probability and $X = o_p(Y)$ to mean that $X/Y$ converges to zero in probability, which are standard notations in probability theory. Let $(X_n)_{n=1}^{\infty}$ be a sequence of random variables. We write $X_n = \tilde{O}_p \big( f(n) \big)$ to mean that $X_n = O_p \big(f (n) \log^k n \big)$ for some $k \in \mathbb{Z}_+$. Basically, all such notations describe the asymptotic relationships in terms of large enough $n$. 

For a given metric space $(\mathcal{X}, \mathrm{d})$, $\mathcal{D}_n := \mathcal{D}_n(\mathcal{X}) := \mathcal{X}^n$ denotes the set of all $n$-tuples of elements in $\mathcal{X}$ for every $n \in \mathbb{Z}_+$. 

\subsection{Persistent Homology and Diagrams}
\label{sec: prelim-persistent homology}

Here, we briefly introduce two methods of constructing persistent homology and corresponding persistence diagrams of data, which will show up in our main discussion. The former one is the persistent homology of \v{C}ech complex and the latter one is the persistent homology of the sub-level sets of a continuous function. We believe that an intuitive and illustrative description of persistent homology will suffice to understand the results of this paper. More detailed background knowledge about persistent homology along with some fundamental knowledge about simplicial homology is presented in Appendix \ref{appendix: TDA}. For a deeper and comprehensive understanding for persistent homology, we refer the reader to the literature of persistent homology; for instance, \cite{Edelsbrunner2008, Edelsbrunner2009, Zomorodian2005}. For fundamental concepts about algebraic topology, we refer the reader to standard texts in algebraic topology such as \cite{Munkres1984, Bredon1997}.

Let $D = \{x_1, \dots, x_n\}$ be a finite subset of a metric space $(\mathcal{X}, d)$. Let $r >0$ be a positive real number. At every point $x_i$, we place a ball $B(x_i; r)$ with radius $r$ centered at $x_i$. The persistence homology of the \v{C}ech complexes on $D$ captures the evolution of the homological structure of the union $\cup_{j=1}^n B(x_j; r)$ as $r$ varies. For instance, the $0$th homological feature represents the connected components of it and the $1$st homological feature represents the loops in it. Figure~\ref{fig:homoex} portrays how to construct the persistent homology of \v{C}ech complexes. As the radius $r$ varies, some homological features show up and disappear, and such ``birth'' and ``death'' of homological features are presented as multisets called persistence diagrams. More precisely, a $q$th persistence diagram of the \v{C}ech complexes on the data set $D$ a multiset that consists of finitely many, say $m$, points $(b_i, d_i)$ satisfying $0 \leq b_i \leq d_i \leq \infty$ for every $i = 1, \dots, m$; the presence of each point $(b_i, d_i)$ means that there exists a $q$-dimensional homological feature that shows up at radius $b_i$ and disappears at radius $d_i$. 

As for the other method, let $f_D : \mathcal{X} \to \mathbb{R}$ be a continuous function defined on metric space $(\mathcal{X}, d)$, possibly depending on the given set $D$. For each $r \in \mathbb{R}$, one can consider the sub-level set $L_r := \{x \in \mathcal{X} \: : \: f_D(x) \leqslant r\}$. As we consider the evolution of the union of balls in the previous way of construction, we now consider the evolution of the sub-level sets $L_r$ as $r$ varies. Figure \ref{fig1: contruction} illustrates such an evolution of a certain continuous function. In the figure, three connected components and one loop show up once and disappear at some time except for a single connected component. The birth-death pairs at each dimension can be presented as a persistence diagram, just as for the \v Cech complex. 

In general, let a filtration of topological spaces $\{U_r\}_{r \in R}$
be given, where $R$ is a linearly ordered set; that is, for any $r_1$ and $r_2$ in $R$ satisfying $r_1 \leq r_2$, $U_{r_1} \subseteq U_{r_2}$. Then, one can define the persistent homology and the corresponding persistence diagram of the sequence. In our first example, each $U_r$ is the union of balls with radius $r$ (or, the simplicial complex obtained from the balls); in our second example, each $U_r$ is the sub-level set $L_{r}$.

\begin{figure}
    \centering
    \includegraphics[width=\textwidth]{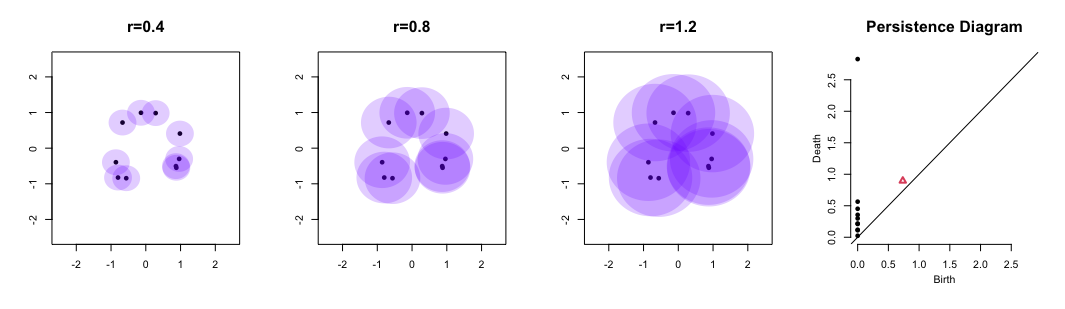}
    \caption{Constructing of \v{C}ech complexes and its persistence diagram: The left three figures illustrate how \v{C}ech complexes on nine points supported on a circle are constructed. When $r = 0.4$, there are several connected components; but there is no loop. When $r= 0.8$, there exists a $1$-dimensional loop that captures the shape of the circle. When $r = 1.2$, the loop disappears and there is only a single contractible connected component. The right-most figure is the persistence diagram of the \v{C}ech complexes. Each black dot represents the birth-and-death times of each connected component and the red triangle represents the birth-and-death times of the loop.}
    \label{fig:homoex}
\end{figure}

\begin{figure}
    \centering
    \includegraphics[width=\textwidth]{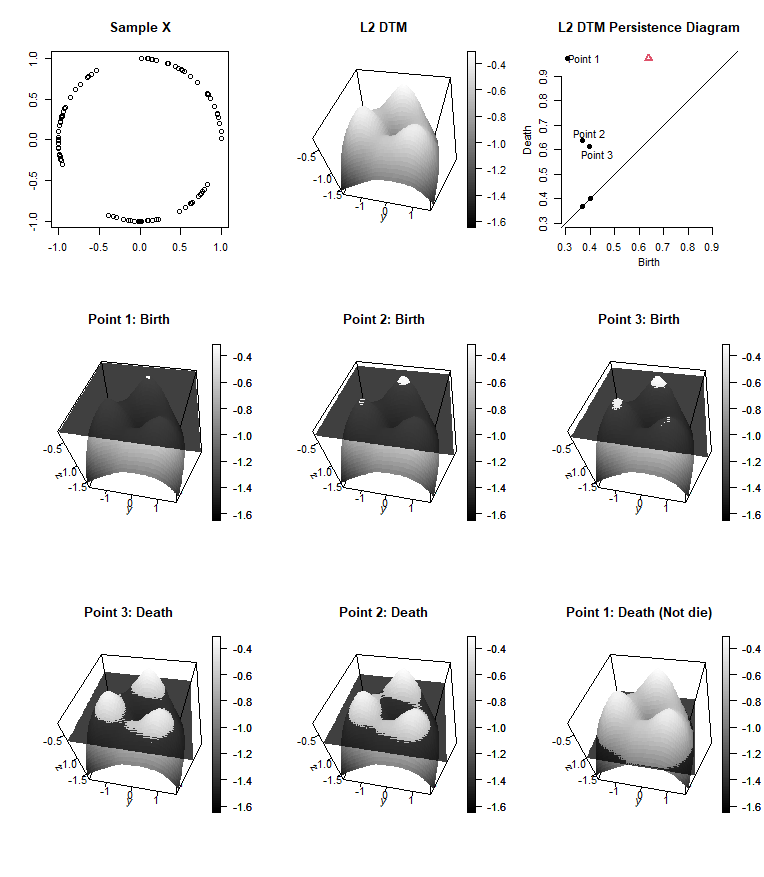}
    \caption{Filtration corresponding to the $L^2$-DTM of a circle data set: The data set is supported on a circle, and the $L^2$-DTM function of the data is visualized (for convenience, the function multiplied by $-1$ is presented). The persistence diagram constructed from the function is presented. The second and third columns present how the filtration of the sub-level sets of the function evolves. As the filtration evolves, three connected components show up at values $0.308$, $0.367$, and $0.397$ respectively. One component dies at a value of $0.613$, another one dies at $0.636$. The last one lives until the end of evolution.}
    \label{fig1: contruction}
\end{figure}

\subsection{Stability of Persistence Diagrams in the Bottleneck Distance}
\label{sec: prelim-stability}

A persistence diagram $\mathcal{P} = \{(b_i, d_i)\}_{i=1}^m$ is essentially a multiset of birth-death pairs $b_i$ and $d_i$, which satisfy $b_i \leq d_i$. There are numerous ways to ``vectorize'' a persistence diagram into an element in some vector space. One of the most popular ways is to represent each birth-death pair $(b, d)$ by the Dirac measure $\delta_{(b, d)}$ at $(b, d)$, and represent the whole diagram $\mathcal{P}$ by the point measure $\sum_{i=1}^m \delta_{(b_i, d_i)}$ which is a measure on the set $\mathcal{T} := \{(x, y) \: : \: 0 \leq x \leq y \leq \infty \}$ (for a detailed description of such a way of vectorization, see Section 2 in \cite{Owada2021}). By realizing a persistence diagram as a measure, it is possible to define the distance between two persistence diagrams by means of a distance between measures. One of the most popular choices is using the $L^{\infty}$ Wasserstein distance of the measures, which is called the bottleneck distance. Specifically, let $\mathcal{P}, \mathcal{P}'$ be two persistence diagrams. Then the bottleneck distance between $\mathcal{P}$ and $\mathcal{P}'$ is defined as
\[
\db (\mathcal{P}, \mathcal{P}') := \min_{g : \breve{\mathcal{P}} \leftrightarrow \breve{\mathcal{P}}'} \max_{z \in \breve{\mathcal{P}}} \parallel z - g(z) \parallel_{\infty},
\]
where $\breve{\mathcal{P}}$ and $\breve{\mathcal{P}}'$ denote the persistence diagrams $\mathcal{P}$ and $\mathcal{P}'$ along with the copies of all points on the diagonal respectively; $g: \breve{\mathcal{P}} \leftrightarrow \breve{\mathcal{P}}'$ ranges over all bijections between $\breve{\mathcal{P}}$ and $\breve{\mathcal{P}}'$. In words, $\db (\mathcal{P}, \mathcal{P}')$ the minimax cost of pairing the birth-death points in one diagram to the other diagram one-by-one in terms of $\ell_{\infty}$ distance. When two diagrams contain different numbers of birth-death points, then the remaining points in one diagram pair up with the points on the diagonal.

A key property of the bottleneck distance is the following stability property (for more details, see \cite{Cohen-Steiner2007, Chazal2016_book}). Suppose that $\mathcal{P}_q(D)$ and $\mathcal{P}_q(D')$ are $q$th persistence diagrams constructed from the \v{C}ech complexes of two sets $D$ and $D'$ in a metric space $(\mathcal{X}, d)$, then
\[ \label{exp: bottleneck stability Cech} \tag{2.1}
\db (\mathcal{P}_q (D), \mathcal{P}_q ( D')) \lesssim d_H(D, D'),
\]
where

\[
d_H(D, D') := \max \big\{ \sup_{x \in D} \inf_{y \in D'} d(x, y), \sup_{y \in D'} \inf_{x \in D} d(x, y) \big\}
\]
denotes the Hausdorff distance between $D$ and $D'$. Analogously, if $\mathcal{P}_q(D)$ and $\mathcal{P}_q(D')$ are obtained from the filtrations of the sub-level sets of continuous tame functions $f_{D}$ and $f_{D'}$, respectively. Then
\[ \label{exp: bottlneck stability} \tag{2.2}
\db (\mathcal{P}_q(D), \mathcal{P}_q(D'))) \leq \sup_{x \in \mathcal{X}} |f_{D}(x) - f_{D'}(x)|.
\]
The precise definition of tame functions is presented in Definition \ref{def: tame}. For more comprehensive discussion, please refer to \cite{Cohen-Steiner2007}. Intuitively, a $\mathbb{R}$-valued function $f$ is said to be tame if the homology of its sub-level sets changes at most finitely many times.

\subsection{Differential Privacy}

 DP is a mathematical framework designed to quantify the privacy leakage of a proposed randomized algorithm (called a mechanism), introduced by \citet{dwork2006calibrating}.  The first step for measuring such privacy risk starts from specifying which databases are considered to ``differ in one entry,'' which we refer to as \emph{adjacent databases}. 
We say $D$ and $D'$ are adjacent if $d(D,D')\leq 1$, for some metric $d(\cdot,\cdot)$ between databases.  In this paper, we use  Hamming distance $H(\cdot,\cdot)$, which counts the number of entries that differ between $D$ and $D'$. 
A privacy mechanism $\mathcal{M}:\mathcal{D}_n \rightarrow \mathcal{Y}$  returns a random variable $\mathcal{M}(D)$ for any $D \in \mathcal{D}_n$, and the privacy risk of the algorithm $\mathcal{M}$ can be evaluated by definition as follows:

\begin{definition}[$\epsilon$-Differential privacy ($\epsilon$-DP): \citealp{dwork2006calibrating}] Given $\epsilon \geq 0$, a privacy mechanism $\mathcal{M}$ on the output space $\mathcal{Y}$ satisfies $\epsilon$-DP if  
\[
\label{exp: pure DP} \tag{2.2}
\mathbb{P} (\mathcal{M}(D)\in S)\leq e^{\epsilon} \mathbb{P}(\mathcal{M}(D')\in S), 
\]
for every measurable set $S \subset \mathcal{Y}$ and all $D$ and $D'$ satisfying $H(D, D') \leq 1$.
\end{definition}

This definition characterizes how much privacy leakage could occur via the privacy budget parameter $\epsilon$ when a single entity in $D$ is not the same one in $D'$. 
The smaller that $\epsilon$ is, the harder it is to distinguish the probability distributions of $\mathcal{M}(D)$ and $\mathcal{M}(D')$, which accordingly makes it harder to identify whether data set $D$ or $D'$ was used in the analysis by $\mathcal{M}$ \citep{wasserman;zhou;2010}.

For conceptual understanding, let us imagine that a data set $D$ contains information of 100 people obtained by a survey. 
Let us call one person in $D$ Person A, and let us assume that a $\epsilon$-DP privacy mechanism $\mathcal{M}$ is employed so that a summary from $\mathcal{M}(D)$ is released, containing some useful information about $D$. Then, it is known that, for any other data set $D'$ that shares 99 people with $D$, except for Person A, the probability distributions of $\mathcal{M}(D)$ and $\mathcal{M}(D')$ cannot be easily distinguished. Thus, it is difficult to identify whether Person A is included in the actual data set $D$ or not. Notice that Person A was chosen arbitrarily in $D$, so the privacy of each individual in $D$ is protected.

While any $\ep$-DP mechanism preserves privacy, not all mechanisms ensure good performance with respect to an underlying utility. It is straightforward to imagine that sanitized statistics can devastate the performance of the utility due to excessive noises for privacy. In contrast, the exponential mechanism is a general technique that takes care of the utility while being able to control the privacy leakage within the budget $\epsilon$.

\begin{proposition}[Exponential mechanism: \citealp{McSherry2007exponential}] 
\label{prop: ExpMech ep-DP}
Let $n \in \mathbb{Z}_+$ and let $\{u_{D} : {\cal Y} \rightarrow {\mathbb R} : D \in \mathcal{D}_n \}$ be a collection of utility functions. Assume that the \emph{sensitivity} $\Delta(u)$ is finite:
\[ 
\label{exp: ExpMech-sensitivity} \tag{2.3}
\Delta(u) = \underset{H(D,D')\leq 1}{\sup}~~~\underset{y\in {\cal Y}}{\sup}~~~|u_D(y)-u_{D'}(y)| < \infty, 
\]
where the supremum is over all adjacent $D$ and $D'$ and assume that $\int \exp\left(u_D(y)\right) d\nu(y) < \infty$ for all $D\in {\cal D}$ where $\nu$ is a measure in ${\cal Y}$. If $\Delta$ satisfies $\Delta(u)\leq \Delta<\infty$, then the collection of mechanisms $\{{\cal M}(D): D\in{\cal D}\}$, each of which has the probability density with respect to $\nu$
\begin{align*} \label{exp: density} \tag{2.4}
p_D(y) \propto \exp\left(\dfrac{\epsilon}{2\Delta}u_D(y)\right), 
\end{align*}
satisfies $\epsilon$-DP.
\end{proposition}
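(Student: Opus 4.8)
The plan is to verify the defining inequality of $\epsilon$-DP directly, by controlling the pointwise ratio of the densities of $M(D)$ and $M(D')$ for any pair of adjacent databases $D, D'$ with $H(D,D') \leq 1$. First I would write the density explicitly together with its normalizing constant: setting $Z_D := \int_{\mathcal{Y}} \exp\left(\frac{\epsilon}{2\Delta} u_D(y)\right) d\nu(y)$, which is finite and strictly positive by the integrability hypothesis, the density becomes $p_D(y) = \exp\left(\frac{\epsilon}{2\Delta} u_D(y)\right)/Z_D$. The target then reduces to the pointwise bound $p_D(y) \leq e^{\epsilon} p_{D'}(y)$ for $\nu$-almost every $y$, since integrating this over an arbitrary measurable $S \subset \mathcal{Y}$ immediately gives $\mathbb{P}(M(D) \in S) = \int_S p_D \, d\nu \leq e^{\epsilon} \int_S p_{D'} \, d\nu = e^{\epsilon}\,\mathbb{P}(M(D') \in S)$.

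The core of the argument is to factor the density ratio $p_D(y)/p_{D'}(y)$ into the ratio of the unnormalized densities and the ratio $Z_{D'}/Z_D$ of the normalizing constants, and to bound each factor by $e^{\epsilon/2}$. For the first factor, the sensitivity bound $|u_D(y) - u_{D'}(y)| \leq \Delta$ directly yields $\exp\left(\frac{\epsilon}{2\Delta}(u_D(y) - u_{D'}(y))\right) \leq e^{\epsilon/2}$. For the second factor, I would apply the same pointwise sensitivity inequality $u_{D'}(y') \leq u_D(y') + \Delta$ inside the integral defining $Z_{D'}$, pull the resulting constant $e^{\epsilon/2}$ out of the integral, and conclude $Z_{D'} \leq e^{\epsilon/2} Z_D$. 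Multiplying the two estimates gives $p_D(y)/p_{D'}(y) \leq e^{\epsilon}$, completing the reduction.

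The step deserving the most attention — and the conceptual reason the mechanism uses $\epsilon/(2\Delta)$ rather than $\epsilon/\Delta$ in the exponent — is the recognition that the privacy cost is incurred \emph{twice}: once through the change in utility at the queried point $y$ in the numerator, and once through the change in the normalizing constant, which aggregates the utility shift over all of $\mathcal{Y}$, in the denominator. Each contributes a factor of $e^{\epsilon/2}$, and the factor of $2$ in the denominator of the exponent is calibrated precisely so that their product equals $e^{\epsilon}$. Beyond ensuring $Z_D$ is finite and positive (guaranteed by the integrability assumption), no genuine analytic obstacle arises; the remainder is an elementary manipulation of the exponential and the monotonicity of the integral.
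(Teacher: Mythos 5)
Your proof is correct: it is the standard argument for the exponential mechanism (bounding the unnormalized density ratio by $e^{\epsilon/2}$ via the sensitivity, bounding the ratio $Z_{D'}/Z_D$ of normalizing constants by another $e^{\epsilon/2}$, and integrating the pointwise bound over $S$), which is exactly the proof in \citet{McSherry2007exponential}; the paper itself states this proposition by citation and gives no proof of its own. The only caveat worth noting is that the integrability hypothesis as stated involves $\int \exp(u_D(y))\,d\nu(y)$ while your normalizing constant requires $\int \exp\bigl(\tfrac{\epsilon}{2\Delta}u_D(y)\bigr)\,d\nu(y) < \infty$; these coincide in intent but not literally, so your reading of the hypothesis as guaranteeing $Z_D < \infty$ is the right (charitable) interpretation of a slightly imprecise statement rather than a logical consequence of it.
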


The exponential mechanism can be easily applied to a wide variety of problems having utility functions. One of the simplest examples is a count statistic. Let us define the count statistic $\mathrm{count}(D)$ of a data set $D$ to be the number of data points in $D$ having a certain property, and define a utility function $u_D(y) := -|y - \text{count}(D)|$, which has sensitivity 1. This utility puts higher values when $y$ is close to $\text{count}(D)$. 
Many machine learning and statistical inference problems can be privately handled using the exponential mechanism when it is possible to define appropriate utility functions such as empirical risk or likelihood functions \citep{huan;etal;12,awan2019benefits,cumm;etal;19,lu;etal;22}.

\begin{proposition}[Utility of the exponential mechanism: \citealp{Dwork2014}]\label{prop: ExpMechUtility}
Let $\mathrm{OPT}_{D}=\max_{y\in \mathcal{Y}}~u_D(y)$ be the optimal value that can be achieved by the utility function over all outputs, given database $D$. Let $Y$ be a random variable with the density given in (\ref{exp: density}). Then,
\[ 
\label{exp: ExpMech-utility} \tag{2.5}
\mathbb{P} \bigg[ u_D(Y) \leq \mathrm{OPT}_D - \frac{2 \Delta}{\epsilon} \big( \log |\mathcal{Y}| + t \big) \bigg] \leq e^{-t},
\]
for every $t \geq 0$. Consequently, 
\[ \label{exp: ExpMech-utility O_p} \tag{2.6}
u_D(Y) = \mathrm{OPT}_D + O_p \bigg( \frac{\Delta \log |\mathcal{Y}|}{\epsilon} \bigg).
\]
\end{proposition}


The utility function in the exponential mechanism must be carefully chosen to ensure that the error rate given in Proposition \ref{prop: ExpMechUtility} translates to optimal rates for the private output. For example, \citet{awan2019benefits} showed that when the utility function has a quadratic Taylor expansion at its maximum,  the randomness for privacy in the exponential mechanism often gives rise to $O_p(1/\sqrt{n})$ noise, which in general is of the same order as the non-private statistical estimation problems. On the other hand, \citet{reimherr2019kng} showed that for some utility functions which are locally approximated by the absolute value function, the randomness for privacy may be as low as $O_p(1/n)$. 

With the goal of producing differentially private persistence diagrams, we propose using exponential mechanism whose utility function is the negative bottleneck distance between the private and non-private persistence diagrams.

\section{Sensitivity of Persistence Diagrams in the Bottleneck Distance} \label{sec: sensitivity}

Most DP algorithms require quantifying how much the value of a statistic is changed by changing a single point in a given data. The largest possible amount of that change in the statistic is colloquially called the sensitivity of the statistic. In this study, we regard a persistence diagram constructed from a data set $D$ as a statistic that estimates the homological structure of the space underlying the data, and we use the bottleneck distance to define a metric on the space of persistence diagrams. Hence, to apply a DP mechanism to persistence diagrams, our first step should be estimating the sensitivity of persistence diagrams in terms of the bottleneck distance; namely, we are going to analyze how big the bottleneck distance $\db(\mathcal{P}_D, \mathcal{P}_{D'})$ can be, where the pair $(D, D')$ denotes a pair of adjacent data sets. Note $\mathcal{P}_D$ and $\mathcal{P}_{D'}$ mean the persistence diagrams constructed from the data sets $D$ and $D'$ respectively under a given way of constructing persistent homology. 

 In differential privacy, to ensure consistent estimators, it is necessary that the sensitivity goes to $0$ as the size of the data grows. Our key observation is that if a chosen way of constructing persistent homology is not outlier-robust, the sensitivity of the corresponding persistence diagrams may not tend to $0$ even if the size of data, say $n$, grows.

We demonstrate that the sensitivity of the persistence diagrams of \v{C}ech complexes cannot converge to $0$ even if the size of data grows to infinity. To overcome such an issue, we propose using the notion called distance to a measure (DTM), which was thoroughly discussed by \cite{Chazal2018} to give birth to outlier-robust persistence diagrams. Moreover, among various versions of construction of DTM, we propose using $L^1$-DTM which gives the smallest sensitivity.

Before moving on to the main sensitivity analysis, we would like to make the terminologies clear. In the introduction to this section, we have been using the word sensitivity for two different quantities: sensitivity of the bottleneck distance and the sensitivity of utility functions of the exponential mechanism. To avoid a confusion, going forward we refer to the sensitivity of the bottleneck distance of persistence diagrams as the \textit{base sensitivity}, which is the terminology introduced in \cite{Awan;Wang:2022}. The base sensitivity of the bottleneck distance of $q$th persistence diagrams from \v{C}ech complexes is denoted by $\Delta^{\mathrm{\breve{C}ech}}_q$ and that from the $L^1$-DTM is denoted by $\Delta^{\mathrm{DTM}}_q$. The precise definition of them will be presented in each of the following subsections. Otherwise, we will reserve the term ``sensitivity'' for the sensitivity of a given utility function of the exponential mechanism. 

\subsection{Sensitivity of the Persistence Diagrams of \v{C}ech Complexes} \label{sec: sensitivity-Cech}

Let us illustrate how the construction of \v{C}ech complexes fails to have a decreasing base sensitivity. The situation of the following example is well illustrated in Figure \ref{fig: two clusters}. Note that Figure \ref{fig: two clusters} draws figures by means of the Vietoris-Rips complex instead of the \v{C}ech complex. The Vietoris-Rips complex is a variant of the \v{C}ech complex which has a computational advantage than the \v{C}ech complex. In fact, the filtration of Vietoris-Rips complexes has essentially the same information with that of \v{C}ech complexes. The definition of the Vietoris-Rips complex and its relationship with the \v{C}ech complex is presented in Appendix \ref{appendix: TDA-3}.

\begin{example}\label{ex: cech}
Let $D$ be a set of $n$ points in $\mathbb{R}^2$ that is tightly clustered into exactly two clusters. Write $x$ to denote the point located at the midpoint of the clusters, and take $D'$ to be the data set obtained by moving one point in $D$ to $x$. Now, further imagine that $n$ grows while the configuration of the points in $D$ and $D'$ remains the same, and derive the $0$th dimensional persistence diagrams obtained from the \v{C}ech complexes of $D$ and $D'$. Then, the connected components in $D$ collapse into the two clusters quickly, while the isolated point $x \in D'$ produces an additional connected component that lives longer. Such a discrepancy between two persistence diagrams prohibits the bottleneck distance between them from going to $0$. More precisely, the bottleneck distance between them remains as big as the distance of the point $x$ from the clusters in $D$. 
\end{example}

The following theorem establishes that this phenomenon is widespread. We denote the $q$th persistence diagram constructed from the \v{C}ech complexes on the data set $D$ by $\mathcal{P}_q^{\text{\v{C}ech}}(D)$.

\begin{lemma} \label{lem: Cech sensitivity lower}
Let $D = \{x_1, \dots, x_n\}$ be a subset of an Euclidean space $\mathbb{R}^d$. Let $\{d_1, \dots, d_m\}$ be the set of distinct finite death times in $\PCech(D)$ with $0 < d_1 < \dots < d_m < \infty$. Let $\delta = d_m - d_{m-1}$ (if $m = 1$, let $\delta = d_1$). Then, it is possible to take a set $D'$ with $|D \setminus D'| + |D' \setminus D| \leqslant 1$ satisfying that
\[
\db \left(\PCech(D), \PCech(D')\right) \geq \min\{\delta, d_m / 2 \}.
\]
\end{lemma}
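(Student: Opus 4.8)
The plan is to reduce the bottleneck lower bound to a statement about connected-component counts at a single radius, and then to realize the required count change by adding one well-chosen point. Throughout, recall that in the $0$th \v{C}ech persistence diagram every feature is born at radius $0$, so each birth--death pair has the form $(0,d)$, and that for any $t \geq 0$ the number of pairs with death strictly larger than $t$ equals $\mathrm{cc}_D(t)$, the number of connected components of $\bigcup_{j} B(x_j;t)$ (one of these is the immortal feature, with death $\infty$). I set $c := \min\{\delta, d_m/2\}$ and $d^* := d_m - c = \max\{d_{m-1}, d_m/2\}$, using the convention $d_{m-1}=0$ when $m=1$, which matches the stated definition $\delta = d_1$. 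Since no death occurs in $(d_{m-1}, d_m)$ and $d^* \in [d_{m-1},d_m)$, we have $\mathrm{cc}_D(d^*) = \mu + 1$, where $\mu \geq 1$ is the number of points of $\PCech(D)$ with death equal to $d_m$.

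First I would prove the following reduction: if $D'$ satisfies $\mathrm{cc}_{D'}(d^*) < \mathrm{cc}_D(d^*)$, then $\db(\PCech(D),\PCech(D')) \geq c$. To see this, take any bijection $g$ between the two diagonally augmented diagrams and suppose toward a contradiction that it moves every point by less than $c$ in $\|\cdot\|_\infty$. Let $S$ be the set of points of $\PCech(D)$ with death $> d^*$, so $|S| = \mathrm{cc}_D(d^*)$; these are the $\mu$ copies of $(0,d_m)$ together with the immortal point. A point $(0,d_m)$ cannot be matched to the diagonal, since its $\ell_\infty$ distance to any diagonal point is at least $d_m/2 \geq c$; nor to a point with death $\leq d^*$, which costs at least $d_m - d^* = c$; and the immortal point must match an immortal point. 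Hence $g$ injects $S$ into the off-diagonal points of $\PCech(D')$ with death $> d^*$, forcing $\mathrm{cc}_D(d^*) \leq \mathrm{cc}_{D'}(d^*)$, a contradiction. Thus every matching has cost at least $c$, i.e.\ $\db \geq c$.

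Next I would construct $D'$. The death $d_m$ is realized by a pair $a,b \in D$ that lie in distinct components just below radius $d_m$ with $d(a,b) = 2 d_m$ (this is exactly the event producing the last finite merge). I would set $D' := D \cup \{x\}$ for a point $x$ with $d(x,a) \leq 2 d^*$ and $d(x,b) \leq 2 d^*$; the midpoint of $a$ and $b$ works, since then $d(x,a) = d(x,b) = d_m \leq 2 d^*$. At radius $d^*$ the new point is within range of both $a$ and $b$, hence merges the (distinct, since $d^* < d_m$) components of $a$ and $b$ while itself being non-isolated; therefore $\mathrm{cc}_{D'}(d^*) \leq \mathrm{cc}_D(d^*) - 1$, and the reduction gives $\db \geq c$. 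Finally $|D \setminus D'| + |D' \setminus D| = 1$, as required.

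The step I expect to require the most care is guaranteeing that $x$ is a genuine new point, so that the modification is a single addition, and that the construction is insensitive to degeneracies. When $d^* > d_m/2$ the balls $\bar B(a,2d^*)$ and $\bar B(b,2d^*)$ overlap in a set with nonempty interior, so $x$ can be chosen outside the finite set $D$; when $d^* = d_m/2$ the midpoint is forced, but it cannot already lie in $D$, for otherwise $a$ and $b$ would be joined through it at radius $d_m/2 < d_m$, contradicting that they merge only at $d_m$. This same insensitivity is what makes the argument robust to ties among the death times and to the boundary case $m=1$: because the entire proof refers only to the component count at the single radius $d^*$, I never have to track the multiplicity of $d_m$ or the full filtration, which is precisely the obstacle on which a naive ``shift the largest death time'' argument would founder.
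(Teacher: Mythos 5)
Your proof is correct, and although it uses the same one\nobreakdash-point construction as the paper (insert the midpoint of a pair $a,b$ at distance $2d_m$ lying in distinct components just below radius $d_m$), the way you extract the bottleneck lower bound is genuinely different --- and in fact more robust. The paper argues directly on the diagrams: it asserts that after adding the midpoint, $\PCech(D')$ consists of the immortal point, points with death at most $d_{m-1}$, and a point $(0,d_m/2)$ of multiplicity at least two, and then runs a case analysis over where a copy of $(0,d_m)$ can be matched. That intermediate description of $\PCech(D')$ can fail when $d_m$ has multiplicity $\mu\geq 2$ in $\PCech(D)$: for an equilateral triangle with side $2d_m$, adding the midpoint of one side leaves a death at $\sqrt{3}\,d_m/2\in(d_{m-1},d_m)$, so one copy of $(0,d_m)$ can be matched cheaply, and the bound survives only because the \emph{other} copy must be matched expensively --- a bookkeeping point the paper's case analysis does not address. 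Your reduction sidesteps this entirely: by showing that any matching of cost $<c$ must inject the points of $\PCech(D)$ with death $>d^*$ into those of $\PCech(D')$ (diagonal matches cost at least $d_m/2\geq c$, matches to deaths $\leq d^*$ cost at least $d_m-d^*=c$, and the immortal point must go to the immortal point), you reduce the lemma to the single combinatorial inequality $\mathrm{cc}_{D'}(d^*)<\mathrm{cc}_D(d^*)$, which your construction delivers regardless of multiplicities or ties among death times. What the paper's approach buys is brevity and an explicit picture of the perturbed diagram in the generic multiplicity-one case; what yours buys is insensitivity to the multiplicity of $d_m$, a cleaner treatment of the degenerate case where the midpoint might already lie in $D$, and the correct $\ell_\infty$ distance from $(0,d_m)$ to the diagonal ($d_m/2$, where the paper writes $d_m/\sqrt{2}$) --- none of which changes the final bound, but which makes your argument the more complete one.
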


Roughly, the theorem can be proved by constructing a data set $D'$ having an additional point at the middle of the most ``significant connected components'' in the filtration of \v{C}ech complexes of $D$, i.e., the connected components that die at time $d_m$. The detailed proof is presented in Appendix \ref{appendix: proofs-section3}.

From now on, we assume that all the data sets are supported in a bounded subset $E$ of $\mathbb{R}^d$ unless there is any additional specification.  We define the base sensitivity $\Delta^{\mathrm{\breve{C}ech}}_q$ concerning \v{C}ech complexes:
\[
\Delta^{\mathrm{\breve{C}ech}}_q := \sup_{H(D, D') \leq 1} \db \left( \PCechq (D), \PCechq (D') \right).
\]

Note that the stability theorem (\ref{exp: bottleneck stability Cech}) implies the following upper bound of the base sensitivity:
\[
\Delta_q^{\mathrm{\breve{C}ech}} \leq \mathrm{diam} E
\]
for every non-negative integer $q$. Lemma \ref{lem: Cech sensitivity lower} provides the matching lower bound of the base sensitivity for $q = 0$. Moreover such  upper and lower bounds show that the sensitivity of the utility function $v_D$ defined as
\[ \label{exp: util Cech} \tag{3.1}
v_D(\mathcal{P}) := - \db \left( \PCech (D), \mathcal{P} \right),
\]
has sensitivity of constant order:

\begin{theorem}\label{thm: Cech sensitivity lower}
Suppose that a given data generating process is supported on a bounded subset $E$ of a Euclidean space. Then, we have
\[
\Delta_0^{\mathrm{\breve{C}ech}} \geq \frac{\mathrm{diam} E}{4}.
\]
Moreover, the utility function $v_D$ defined in (\ref{exp: util Cech}) satisfies
\[
\frac{1}{4} \mathrm{diam} E \leq \sup_{H(D, D') \leq 1} \sup_{\mathcal{P}} |v_D(\mathcal{P}) - v_{D'}(\mathcal{P})| \leq \mathrm{diam} E
\]
\end{theorem}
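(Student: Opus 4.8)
The plan is to reduce both claims to the single quantity $\Delta_0^{\mathrm{\breve{C}ech}}$. First I would prove the lower bound $\Delta_0^{\mathrm{\breve{C}ech}} \ge \mathrm{diam}\,E/4$ by exhibiting one adjacent pair $(D,D')$, both supported on $E$; then I would show that the utility sensitivity in the statement equals $\Delta_0^{\mathrm{\breve{C}ech}}$ exactly, so it inherits the bounds $\mathrm{diam}\,E/4 \le \Delta_0^{\mathrm{\breve{C}ech}} \le \mathrm{diam}\,E$, the upper one already recorded after Lemma~\ref{lem: Cech sensitivity lower}.

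For the lower bound, fix $a,b \in E$ with $\|a-b\|$ as close to $\mathrm{diam}\,E$ as desired, let $D$ consist of $n-1$ copies of $a$ and one copy of $b$, and let $D'$ consist of $n$ copies of $a$ (replacing exact coincidences by tight clusters and letting the cluster radius tend to $0$ if one prefers distinct points). These differ in a single coordinate, so $H(D,D')\le 1$, and crucially both lie in $E^n$. In the $0$th diagram of $D$ the coincident copies of $a$ merge at radius $0$, leaving one off-diagonal feature born at $0$ and dying when the $a$-cluster meets $b$, that is at $\|a-b\|/2$, together with the single essential (infinite) component; the diagram of $D'$ retains only the essential component. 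Pairing the two essential components costs $0$, and the leftover point $(0,\|a-b\|/2)$ must be sent to the diagonal at $\ell_\infty$-cost equal to its distance to the diagonal, namely $\|a-b\|/4$. Hence $\db(\PCech(D),\PCech(D')) = \|a-b\|/4$, and letting $\|a-b\|\to \mathrm{diam}\,E$ and taking the supremum over adjacent pairs yields $\Delta_0^{\mathrm{\breve{C}ech}} \ge \mathrm{diam}\,E/4$.

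For the utility function $v_D(\mathcal{P}) = -\db(\PCech(D),\mathcal{P})$, I would establish $\sup_{H(D,D')\le 1}\sup_{\mathcal{P}} |v_D(\mathcal{P})-v_{D'}(\mathcal{P})| = \Delta_0^{\mathrm{\breve{C}ech}}$. The upper bound is the reverse triangle inequality for the bottleneck metric: for every $\mathcal{P}$ one has $|v_D(\mathcal{P})-v_{D'}(\mathcal{P})| = |\db(\PCech(D),\mathcal{P}) - \db(\PCech(D'),\mathcal{P})| \le \db(\PCech(D),\PCech(D'))$, and taking suprema gives $\le \Delta_0^{\mathrm{\breve{C}ech}} \le \mathrm{diam}\,E$. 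The matching lower bound follows from the single test point $\mathcal{P} = \PCech(D')$, where $v_{D'}(\mathcal{P}) = 0$ and $v_D(\mathcal{P}) = -\db(\PCech(D),\PCech(D'))$, so the inner supremum is at least $\db(\PCech(D),\PCech(D'))$; taking the supremum over adjacent pairs and invoking the first part gives $\ge \Delta_0^{\mathrm{\breve{C}ech}} \ge \mathrm{diam}\,E/4$.

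The main obstacle, and essentially the only delicate point, is the lower-bound construction: one must keep the adjacent data set inside $E$ while still forcing a long-lived $0$-dimensional class. The midpoint-insertion argument behind Lemma~\ref{lem: Cech sensitivity lower} can leave $E$ when $E$ is non-convex, so I deliberately use the merging pair above, which introduces no new location and reduces the bottleneck distance to a transparent distance-to-diagonal computation. A secondary bookkeeping issue is the treatment of essential classes in the bottleneck distance, but here every $0$th diagram has exactly one essential class, so they pair at zero cost and affect none of the estimates.
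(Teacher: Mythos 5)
Your proof is correct, and its second half is essentially the paper's own argument: the upper bound via the reverse triangle inequality, and the lower bound by evaluating the utilities at a single test diagram (you use $\mathcal{P} = \PCech(D')$, the paper uses $\PCech(D)$; these are symmetric). Where you genuinely depart from the paper is in the witness pair for the first claim. The paper takes $D$ to consist of $n/2$ copies of $a$ and $n/2$ copies of $b$ with $\|a-b\| = \mathrm{diam}\,E$, and forms $D'$ by moving one $a$ to the midpoint $c$ of $a$ and $b$, turning the diagram $\{(0,\mathrm{diam}\,E/2),(0,\infty)\}$ into $\{(0,\mathrm{diam}\,E/4),(0,\mathrm{diam}\,E/4),(0,\infty)\}$, which again yields bottleneck distance $\mathrm{diam}\,E/4$. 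Your merging construction ($n-1$ copies of $a$ and one $b$, versus $n$ copies of $a$) buys two things the paper's does not: it never introduces a new location, whereas the paper's midpoint $c$ need not lie in $E$ when $E$ is non-convex—and the sensitivity supremum must implicitly range over data sets supported in $E$ for the companion upper bound $\Delta_0^{\mathrm{\breve{C}ech}} \leq \mathrm{diam}\,E$ to hold at all; and your limit $\|a-b\| \to \mathrm{diam}\,E$ removes the paper's implicit assumption that the diameter is attained (i.e., that $E$ is compact). Your bottleneck computation is also simpler: a single off-diagonal point $(0,\|a-b\|/2)$ matched to the diagonal at cost $\|a-b\|/4$, with the two essential classes paired at zero cost, instead of the paper's case analysis over matchings of three off-diagonal points. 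Both routes produce the same constant, so nothing is lost by your variant, and it quietly repairs the convexity gap in the published construction.
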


Theorem \ref{thm: Cech sensitivity lower} shows that why it is challenging to develop a privacy mechanism for \v{C}ech complexes: \v{C}ech complexes are so sensitive, in terms of the bottleneck distance of their persistence diagrams, that the sensitivity of the utility function $v_D(\cdot)$ remains constant regardless of the size $n$ of the data set. This implies that the exponential mechanism using this utility function keeps adding a constant size of noise even if $n$ gets bigger. This prevents the bottleneck distance from becoming small even in the case of huge $n$.

\begin{figure}[t]
\centering
        \psfrag{x1}{$x$}
        \psfrag{x2}{$y$}
        \psfrag{i}{(I)}
        \psfrag{ii}{(II)}
        \psfrag{ii}{(III)}
      \includegraphics[width=1.0\textwidth]{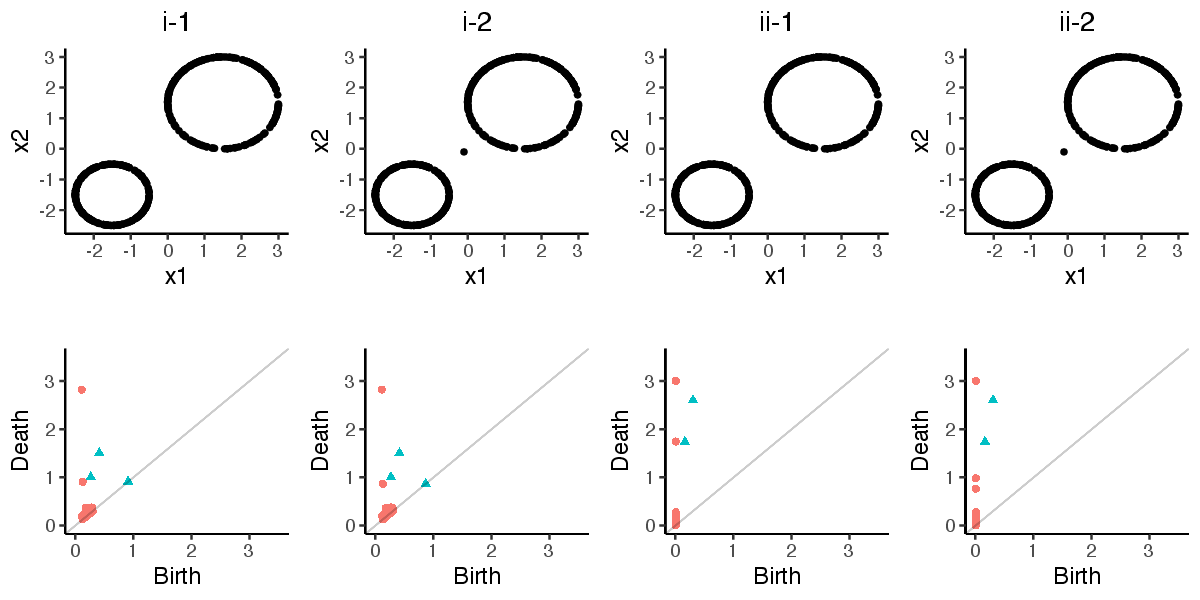}
\caption{Persistence diagrams on $D$ and $D'$: the red circles and the green triangles are the connected components and the loops respectively. The columns (i-1) and (i-2) correspond to the results with $L^1$-DTM on $D$ and $D'$ respectively, and the two diagrams have 0.042 bottleneck distance in terms of the connected components. The columns (ii-1) and (ii-2) are from the Vietoris-Rips complex and the distance between the two diagrams have 0.762.
} 
\label{fig: two clusters}
\end{figure}

\subsection{Sensitivity of the Persistence Diagrams of the DTM} \label{sec: sensitivity-DTM}

The DTM, which was introduced by \cite{Chazal;Cohen-Setiner;Merigot:2011}, provided a novel way to overcome the sensitivity to outliers. 
DTM proposes measuring how far each point is from the dense part of the support of the probability measure. By doing so, an outlier corresponds to a relatively large distance. Thus, when it comes to concerning the filtration of the sub-level sets of a DTM, the topological features produced by the outlier would occur in the late period of the filtration, or it might not occur through the whole filtration. More thorough discussions on the DTM can be found in \cite{Chazal2018, Anai2020, Oudot:2015}.  By virtue of the properties of the DTM, it is very likely that DTM-based persistence diagrams give rise to a much smaller sensitivity, so it may provide us with a suitable TDA statistic to build our privatized mechansim upon. In fact, we show that DTM-based persistence diagrams achieve sensitivity converging to $0$ as $n$ grows to infinity, but the rate of decay depends on which class of DTM designs we use. 

Basically, a DTM is defined to be a $L^p$ norm of a certain function. The original version of DTM was defined to be a $L^2$-type quantity. We show that the $L^2$-type DTM produces persistence diagrams whose base sensitivity is bounded by $O(n^{-1/2})$, and we recognize that each $L^p$-DTM results in an analogous upper bound of the base sensitivity: $O(n^{-1/p})$. From this observation, we focus on the $L^1$-DTM that has the fastest decay rate in the base sensitivity. Furthermore, we also verify the base sensitivity of the persistence diagrams obtained from the $L^1$-DTM is bounded below by $n^{-1}$ up to a constant. In other words, our sensitivity analysis for $L^1$-DTM is sharp up to constants.

We present the definition of the general $L^p$-DTM and its empirical realization. The key property to obtain  upper bounds of the persistence diagrams is the so-called Wasserstein stability of a DTM, which was extensively discussed in the past literature; for instance, see \cite{Chazal2016}. As a result of the Wasserstein stability, we deduce the upper bound of rate $n^{-1/p}$ for the $L^p$-DTM. The matching lower bound of rate $n^{-1}$ for the $L^1$-DTM is established by constructing a specific example that exactly gives the lower bound. All the proofs are presented in the appendix.

\begin{definition}[Distance to a measure]
Let $\mu$ be a probability measure and $X$ be a random variable whose probability distribution is $\mu$. For the given $\mu$, $0 < m < 1$, and $p \geq 1$, the $L^p$ distance to the measure $\mu$ at resolution $m$ is defined by
\[
\delta^{(p)} (x) := \delta_{\mu,m}^{(p)}(x) := \bigg[ \frac{1}{m} \int_{u = 0}^m \big( G_x^{-1}(u) \big)^p du \bigg]^{1/p},
\]
where $G_x(t) = P \big[ \parallel X - x \parallel \leq t \big]$. Here, $\parallel \cdot \parallel$ denotes the $\ell^2$-norm in Euclidean spaces.

\end{definition}

The hyperparameter $m$ determines how much smoothing effect will be employed, which is reminiscent of the role of the bandwidth in a kernel density estimation. A natural empirical approximation would be the following.

\begin{definition}[Empirical version of the DTM]
Let $X_1, \dots, X_n$ be i.i.d. samples obtained from a probability distribution $\mu$ and $\mu_n$ the empirical probability measure defined on this sample, i.e., 
\[
\mu_n := \frac{1}{n}  \sum_{i=1}^n \delta_{X_i}
\]
The empirical $L^p$-DTM to $\mu$ at resolution $m$, denoted by $\hat{\delta}^{(p)}$, is defined to be the $L^p$-DTM to $\mu_n$ at resolution $m$; namely,
\[
\hat{\delta}^{(p)}(x) := \delta_{\mu_n, m}^{(p)}(x) = \bigg[ \frac{1}{k} \sum_{X_i \in N_k(x)} \parallel X_i - x \parallel^p \bigg]^{1/p},
\]
where $k = \lceil mn \rceil$ and $N_k(x)$ is the set containing the $k$ nearest neighbours of $x$ among $X_1, \dots, X_n$. Here, the distance between data points is measured by the $\ell^2$-norm in Euclidean space.
\end{definition}

The key quantitative property of the $L^p$-DTM, which is called its Wasserstein stability, is the following: let $\mu$ and $\nu$ be probability measures defined on a common metric space, then
\[ \label{exp: Wasserstein stability} \tag{3.1}
\sup_x \left|\delta^{(p)}_{\mu}(x) - \delta^{(p)}_{\nu}(x)\right| \leq \frac{1}{m^{1/p}} W_p(\mu, \nu),
\]
where $W_p(\mu,\nu)$ denotes the $p$-Wasserstein distance between $\mu$ and $\nu$. For more details, see \cite{Chazal2016}. Let $D$ and $D'$ be adjacent data sets. 

Let $\PDTMq (D)$ denote the $q$th persistence diagram constructed from the filtration of sub-level sets of the $L^p$-DTM to the empirical distribution of the data set $D$. The base sensitivity of  $\Delta_q^{\mathrm{DTM}_p}$ concerning the DTM is
\[
\Delta_q^{\mathrm{DTM}_p} := \sup_{H(D, D') = 1} \db \left( \PDTMq (D), \PDTMq (D') \right).
\]
By virtue of the stability theorem (\ref{exp: bottlneck stability}) and the Wasserstein stability (\ref{exp: Wasserstein stability}), The following upper bound of the base sensitivity $\Delta_q^{\mathrm{DTM}_p}$ can be estalished by quantifying the $p$-Wasserstein distance between empirical distributions on adjacent data sets.

\begin{theorem}[Sensitivity of the persistence diagrams constructed from the $L^p$-DTM] \label{thm: DTM sensitivity upper}
Let $D$ and $D'$ be finite subsets of a bounded set $E$ in $\mathbb{R}^d$ satisfying $|D| = |D'| = n$ and $H(D, D') = 1$. Then, for every $q \in \mathbb{Z}^{\geq 0}$,
\[
\Delta_q^{\mathrm{DTM}_p} \leq \frac{\mathrm{diam} E} {m^{1/p} n^{1/p}}.
\]
\end{theorem}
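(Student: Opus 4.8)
The plan is to obtain the bound by composing the two stability estimates already available in the paper and then reducing everything to an estimate of a $p$-Wasserstein distance between two empirical measures. First I would unwind the definition: $\PDTMq(D)$ is the $q$th persistence diagram of the filtration of sub-level sets of the $L^p$-DTM function $\delta^{(p)}_{\mu_D}$, where $\mu_D$ is the empirical measure on $D$, and likewise for $D'$. Provided these DTM functions are tame---which holds because the DTM is a distance-like function, so the homology of its sub-level sets changes only finitely often---the sub-level-set stability bound (\ref{exp: bottlneck stability}) applies and gives
\[
\db\big(\PDTMq(D), \PDTMq(D')\big) \leq \sup_{x}\big|\delta^{(p)}_{\mu_D}(x) - \delta^{(p)}_{\mu_{D'}}(x)\big|.
\]
Next I would invoke the Wasserstein stability of the DTM (\ref{exp: Wasserstein stability}) to replace the right-hand side by $m^{-1/p}\, W_p(\mu_D, \mu_{D'})$, so that the entire problem collapses to bounding $W_p(\mu_D, \mu_{D'})$ for empirical measures on data sets with $H(D,D') = 1$.

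The core of the argument is this last bound, which I would establish by exhibiting an explicit coupling rather than solving the optimal transport problem. Since $H(D,D')=1$ and $|D|=|D'|=n$, we may assume the differing coordinate is the last one and write $D = \{x_1, \dots, x_{n-1}, x_n\}$, $D' = \{x_1, \dots, x_{n-1}, x_n'\}$, so that
\[
\mu_D = \tfrac{1}{n}\sum_{i=1}^{n-1}\delta_{x_i} + \tfrac{1}{n}\delta_{x_n}, \qquad \mu_{D'} = \tfrac{1}{n}\sum_{i=1}^{n-1}\delta_{x_i} + \tfrac{1}{n}\delta_{x_n'}.
\]
Taking the transport plan that keeps the shared mass $\tfrac{n-1}{n}$ fixed in place and moves the remaining mass $\tfrac{1}{n}$ from $x_n$ to $x_n'$ is admissible (its marginals are exactly $\mu_D$ and $\mu_{D'}$) and has cost $\tfrac{1}{n}\|x_n - x_n'\|^p$, whence $W_p(\mu_D, \mu_{D'}) \leq n^{-1/p}\|x_n - x_n'\|$. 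Because $x_n, x_n' \in E$, we have $\|x_n - x_n'\| \leq \mathrm{diam}\, E$. Chaining the three inequalities and taking the supremum over all adjacent pairs yields $\Delta_q^{\mathrm{DTM}_p} \leq \mathrm{diam}\, E/(m^{1/p} n^{1/p})$, as claimed.

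I expect the only genuine subtlety to be the tameness justification needed to invoke (\ref{exp: bottlneck stability}); this should be dispatched by citing the distance-like structure of the DTM. The coupling step is routine once the correct transport plan is written down, and a point worth stating explicitly is that this plan is merely \emph{admissible}, so it furnishes an upper bound on $W_p$ and no optimality of the coupling is required. The reduction through the two stability estimates is otherwise purely mechanical.
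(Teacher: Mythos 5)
Your proposal is correct and follows essentially the same route as the paper's proof: chain the sub-level-set stability bound (\ref{exp: bottlneck stability}) with the Wasserstein stability of the DTM (\ref{exp: Wasserstein stability}), then bound $W_p(\mu_D,\mu_{D'})$ by the explicit coupling that fixes the $n-1$ shared points and transports mass $1/n$ between the two differing points. The only addition beyond the paper's argument is your explicit remark on tameness, which the paper leaves implicit.
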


In fact, as a result of the Wasserstein stability (\ref{exp: Wasserstein stability}), the result of the theorem can be obtained by estimating the $p$-Wasserstein distance between the empirical distributions on $D$ and $D'$. The detailed proof is presented in Appendix \ref{appendix: proofs-section3}.

According to Theorem \ref{thm: DTM sensitivity upper}, each $L^p$-DTM is guaranteed to have base sensitivity bounded above by $O(n^{-1/p})$. In particular, such a guaranteed upper bound becomes smallest when $p$ is taken to be $1$:
\[ \label{exp: L1-DTM sensitivity upper}
\db \left(\PDTMoneq (D), \PDTMoneq (D')\right) \leq \frac{\mathrm{diam} E}{m n},    
\]
In fact, the upper bound of the $L^1$-DTM is sharp up to constants.

\begin{proposition}[Lower bound of the sensitivity of the $L^1$-DTM] \label{prop: DTM sensitivity lower}
Suppose that $m < 1/2$. Then, for every positive integer $n$, there exists a pair of sets $D$ and $D'$ that satisfies $|D| = |D'| = n$, $H(D, D') = 1$, and 
\[
\db \left(\PDTMone (D), \PDTMone (D')\right) = \frac{\mathrm{diam} E}{2k},
\]
where $k = \lceil mn \rceil$.
\end{proposition}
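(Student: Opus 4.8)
The plan is to prove the lower bound by exhibiting an explicit adjacent pair $(D,D')$ whose $0$th $L^1$-DTM persistence diagrams differ in a single, exactly computable coordinate. Write $L := \mathrm{diam}\,E$ and fix two points $a,b \in E$ realizing the diameter, so $\|a-b\| = L$; by placing all sample points on the segment $[a,b]$, every DTM value reduces to a one-dimensional nearest-neighbour computation. Throughout set $k = \lceil mn \rceil$, coordinatize the segment by $a=0$, $b=L$, and let $c=L/2$ be the midpoint. I choose cluster sizes so that in both data sets each cluster has at least $k$ coincident points; since $m<1/2$ this is possible whenever $n \ge 2k+1$, and the finitely many boundary values of $n$ for which $k$ attains its maximum $\lceil n/2\rceil$ are treated by a small modification noted at the end.

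First I construct $D$: place $n_A = n-k$ points at $a$ and $n_B = k$ points at $b$, with $n_A \ge k+1$. Since each cluster has at least $k$ members, $\hat\delta^{(1)}_D$ vanishes at $a$ and at $b$. A direct nearest-neighbour count on $[a,b]$ shows that for $x\in(0,c)$ the cluster at $a$ is nearer and supplies all $k$ neighbours, so $\hat\delta^{(1)}_D(x)=x$, and symmetrically $\hat\delta^{(1)}_D(x)=L-x$ on $(c,b)$; thus the profile is the ``tent'' that rises linearly from $0$ at each endpoint to its maximum $L/2$ at $c$, and grows away from $[a,b]$ on both sides. Reading off the sub-level-set filtration, the two valleys at $a$ and $b$ are each born at height $0$ and merge at the peak, so $\PDTMone(D) = \{(0,\infty),\,(0,L/2)\}$, the point $(0,\infty)$ being the essential component.

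Next I obtain $D'$ by moving one of the points at $a$ to the midpoint $c$; then $H(D,D')=1$, $D'\subset E$, and the clusters at $a,b$ still have at least $k$ points, so $\hat\delta^{(1)}_{D'}$ again vanishes at $a$ and $b$. The only change is near $c$: the lone point there is the unique nearest neighbour of $c$ while the remaining $k-1$ neighbours lie at distance $L/2$, giving $\hat\delta^{(1)}_{D'}(c)=\frac{k-1}{k}\cdot\frac{L}{2}=\frac{(k-1)L}{2k}$. A short piecewise-linear check confirms that this is a strict local maximum and that the profile stays monotone on $[a,c]$ and on $[c,b]$, so no spurious feature is created; hence the merge height drops to $\frac{(k-1)L}{2k}$ and $\PDTMone(D') = \{(0,\infty),\,(0,\frac{(k-1)L}{2k})\}$. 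Matching the two essential points at cost $0$ and the two finite points to each other costs $\big\|(0,\tfrac{L}{2})-(0,\tfrac{(k-1)L}{2k})\big\|_\infty = \frac{L}{2k}$, whereas any matching sending a finite point to the diagonal costs at least $\frac{L}{4}\ge \frac{L}{2k}$ for $k\ge 2$; therefore the direct matching is optimal and $\db(\PDTMone(D),\PDTMone(D')) = \frac{L}{2k}$, as claimed.

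The main work, and the only genuine obstacle, is pinning down the two DTM functions exactly: one must verify the nearest-neighbour bookkeeping on each sub-interval, confirm that the claimed births and merge are the true critical values of the sub-level filtration, and check that inserting the point at $c$ yields a local maximum rather than a new valley (otherwise a third diagram point would appear and could shorten the bottleneck matching). Once the profiles are fixed, the bottleneck computation and the optimality of the matching are immediate. Finally, the boundary values of $n$ with $k=\lceil n/2\rceil$ (where two disjoint clusters of size $\ge k$ cannot both fit) and the degenerate case $k=1$ are accommodated by a minor variant of the same idea, using one dominant cluster of at least $k$ points together with a smaller group and perturbing a birth height instead of the merge height so that the perturbed coordinate again shifts by exactly $L/2k$; I would verify this as a short separate case.
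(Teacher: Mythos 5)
Your proposal is correct and follows essentially the same route as the paper's own proof: place two clusters at diametrically opposite points $a,b$ realizing $\mathrm{diam}\,E$, move a single point to the midpoint $c$, compute the $L^1$-DTM values exactly to obtain the diagrams $\{(0,\infty),(0,\mathrm{diam}\,E/2)\}$ and $\{(0,\infty),(0,\tfrac{k-1}{k}\cdot\tfrac{\mathrm{diam}\,E}{2})\}$, and read off the bottleneck distance $\tfrac{\mathrm{diam}\,E}{2k}$, the only difference being your cluster sizes $n-k$ and $k$ versus the paper's $n/2$ and $n/2$. If anything you are more careful than the paper, which neither verifies the optimality of the matching nor addresses the edge cases ($k=1$, or $n$ too small for both clusters to retain $k$ points) that you explicitly flag for separate treatment.
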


The proof can be obtained by constructing a pair of adjacent data sets $D$ and $D'$ that achieve the proposed distance. In fact, the data sets illustrated in Figure \ref{fig: two clusters} achieve it. For a detailed proof, see Appendix \ref{appendix: proofs-section3}.

Now, we introduce the utility function that we use to design our privacy mechanism. Let $\mathsf{Pers}$ denote the space of persistence diagrams, equipped with the bottleneck distance. For any given data set $D$ and any non-negative integer $q$, we define the function $u_D^{(q)}: \mathsf{Pers} \to \mathbb{R}$ as follows:
\[
u_D^{(q)}(\mathcal{P}) := -\db \left(\mathcal{P}, \PDTMoneq (D)\right),
\]
Let $\ell$ be a chosen non-negative integer. Our utility function $u_D: \big( \mathsf{Pers} \big)^{\ell + 1} \to \mathbb{R}$ is defined as follows:
\[ \label{exp: utility DTM} \tag{3.2}
u_D(\mathcal{P}_0, \dots, \mathcal{P}_{\ell}) := \sum_{q = 0}^{\ell} u_D^{(q)}(\mathcal{P}_q).
\]
As a result of the upper and lower bounds for the base sensitivity, we can establish the following upper and lower bounds of the sensitivity:

\begin{corollary} \label{cor: DTM sensitivity}
For a chosen $\ell \geq 0$, let the utility function $u_D$ be defined as in (\ref{exp: utility DTM}). Then the following is satisfied.
\[
\frac{\mathrm{diam} E}{2 \lceil mn \rceil} \leq \sup_{H(D, D') = 1} \sup_{\mathcal{P} \in \mathsf{Pers}} |u_D(\mathcal{P}) - u_{D'}(\mathcal{P})| \leq (\ell + 1) \frac{\mathrm{diam} E}{mn}
\]
\end{corollary}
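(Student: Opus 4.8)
The plan is to establish the two inequalities separately. The upper bound will follow from the triangle inequality together with the base-sensitivity bound already in hand, and the lower bound will be obtained by evaluating the sensitivity at the specific adjacent pair furnished by Proposition \ref{prop: DTM sensitivity lower}.

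For the upper bound, I would fix adjacent $D, D'$ and a tuple $\mathcal{P} = (\mathcal{P}_0, \dots, \mathcal{P}_\ell)$, expand the difference across the sum, and apply the triangle inequality for real numbers:
\[
|u_D(\mathcal{P}) - u_{D'}(\mathcal{P})| \leq \sum_{q=0}^{\ell} \big| \db(\mathcal{P}_q, \PDTMoneq(D')) - \db(\mathcal{P}_q, \PDTMoneq(D)) \big|.
\]
Since $\db$ is a genuine metric, each summand is controlled by the reverse triangle inequality $|\db(\mathcal{P}_q, A) - \db(\mathcal{P}_q, B)| \leq \db(A, B)$ with $A = \PDTMoneq(D')$ and $B = \PDTMoneq(D)$. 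Feeding in the base-sensitivity bound (\ref{exp: L1-DTM sensitivity upper}) for each of the $\ell + 1$ terms gives $(\ell+1)\,\mathrm{diam}E/(mn)$, and taking suprema over $\mathcal{P}$ and over adjacent $D, D'$ completes this direction.

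For the lower bound, I would take $D, D'$ to be exactly the adjacent pair from Proposition \ref{prop: DTM sensitivity lower}, so that $\db(\PDTMone(D), \PDTMone(D')) = \mathrm{diam}E/(2k)$ with $k = \lceil mn \rceil$. The key step is to choose the test tuple with every coordinate equal to the corresponding $D$-diagram, namely $\mathcal{P}_q = \PDTMoneq(D)$ for all $q$. Then the $q$-th term of the difference equals
\[
u_D^{(q)}(\mathcal{P}_q) - u_{D'}^{(q)}(\mathcal{P}_q) = -\db(\PDTMoneq(D), \PDTMoneq(D)) + \db(\PDTMoneq(D), \PDTMoneq(D')) = \db(\PDTMoneq(D), \PDTMoneq(D')) \geq 0,
\]
so every summand is non-negative, and the $q = 0$ summand is exactly $\mathrm{diam}E/(2k)$. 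Summing yields $u_D(\mathcal{P}) - u_{D'}(\mathcal{P}) \geq \mathrm{diam}E/(2k)$, whence the supremum defining the sensitivity is at least $\mathrm{diam}E/(2\lceil mn \rceil)$.

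The only genuine subtlety is ensuring that the higher-dimensional terms ($q \geq 1$) in the lower-bound construction do not cancel the contribution of the $0$-dimensional term; choosing every coordinate to be the $D$-diagram sidesteps this entirely, since it forces each summand to be a non-negative bottleneck distance rather than a signed difference, so no control over the higher-dimensional diagrams of the Proposition's example is needed. I would also flag the minor notational point that the outer supremum runs over tuples $\mathcal{P} \in (\mathsf{Pers})^{\ell+1}$, the domain of $u_D$, rather than over a single diagram. Apart from these observations, the argument is a direct assembly of results already available: the reverse-triangle property of $\db$, the upper base-sensitivity bound (\ref{exp: L1-DTM sensitivity upper}), and the exact value supplied by Proposition \ref{prop: DTM sensitivity lower}.
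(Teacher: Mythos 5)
Your proposal is correct and follows essentially the same route as the paper's own proof: the upper bound via the reverse triangle inequality for $\db$ applied termwise with the base-sensitivity bound, and the lower bound by evaluating the sensitivity at the tuple $(\PDTMoneq(D))_{q=0}^{\ell}$ for the adjacent pair from Proposition \ref{prop: DTM sensitivity lower}, so that every summand is a non-negative bottleneck distance and the $q=0$ term supplies $\mathrm{diam}E/(2\lceil mn \rceil)$. Your explicit remark about why the higher-dimensional terms cannot cancel is exactly the content of the paper's inequality reducing the tuple distance to its $0$th coordinate, just spelled out more carefully.
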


\begin{remark}
The additive nature of the utility function $u_D$ is what allows us to establish the upper and lower bounds in Corollary \ref{cor: DTM sensitivity}. Notice that the lower bound of the corollary is derived from the result of Proposition \ref{prop: DTM sensitivity lower} which is only valid for the $0$th persistence diagrams; but, the additive form of $u_D$ allows it to be a lower bound for the sensitivity of the entire utility function.
\end{remark}

\begin{remark}
Note that while the lower bound of $L^1$-DTM matches the rate of its upper bound, we do not at this time obtain such matching lower bounds of the other $L^p$-DTMs. Hence, it might be the case that the base sensitivity of the general $L^p$-DTM can be improved. For example, in the situation of Figure \ref{fig: two clusters}, we empirically verified that the bottleneck distance between $L^2$-DTM persistence diagrams of $D$ and $D'$ is also  $O(n^{-1})$. 
\end{remark}

\section{Employment of the Exponential Mechanism with the $L^1$-DTM} \label{sec: ExpMech}

In this section, we describe how to implement the exponential mechanism in order to generate privatized persistence diagrams. More specifically, the exact probability distribution from which we generate our privatized persistence diagrams is presented, and a Markov chain Monte Carlo procedure to approximate the distribution is summarized step-by-step. 

Let $D = \{X_1, \dots, X_n\}$ be a data set that consists of i.i.d. samples having a common probability distribution $\mu$. For brevity, we denote by $\mathcal{P}_q(D)$ for each $q \geq 0$ the $q$th persistence diagram obtained from the $L^1$-DTM to the empirical measure $\mu_n$, which was denoted by $\mathcal{P}_q^{\mathrm{DTM}_1}(D)$ in the previous section. And, we set $\mathcal{P}(D) := (\mathcal{P}_0(D), \dots, \mathcal{P}_{\ell}(D))$ for the given $\ell$. Let $\mathcal{P}_q(\mu)$ be the $q$th persistence diagram obtained from the $L^1$-DTM $\delta^{(1)}_{\mu}$ 
to the measure $\mu$, 
and let us define $\mathcal{P}(\mu) := (\mathcal{P}_0(\mu), \dots, \mathcal{P}_{\ell}(\mu))$ for the given $\ell$. Also, let $\mathcal{P}_{\mathrm{DP}} = (\mathcal{P}_{0, \mathrm{DP}}, \dots, \mathcal{P}_{\ell, \mathrm{DP}})$ be a tuple of privatized persistence diagrams generated from our algorithm. 

We analyze the error of our privatized persistence diagrams from two different points of view. First, we estimate $\db (\mathcal{P}(\mu), \mathcal{P}_{\mathrm{DP}})$. This quantity represents the error of the privatized persistence diagrams from the persistence diagram of the original data-generating process. From a statistical perspective, $\mathcal{P} (\mu)$ can be regarded as a parameter characterizing the true data-generating process. Hence, the quantity $\db (\mathcal{P} (\mu), \mathcal{P}_{\mathrm{DP}})$ quantifies the amount of error in estimating the parameter $\mathcal{P}(\mu)$ by the privatized statistic $\mathcal{P}_{\mathrm{DP}}$ that is obtained by privatizing the actual statistic $\mathcal{P}(D)$. The second approach is to estimate the quantity $\db (\mathcal{P}(D), \mathcal{P}_{\text{DP}})$ which quantifies how much the privatization process distorts the original non-privatized statistic.

\subsection{Generating Privatized Persistence Diagrams} \label{sec: ExpMech-algorithm}

The design of an exponential mechanism is formulated by specifying an output space $\mathcal{Y}$, and a utility function $u_D: \mathcal{Y} \to \mathbb{R}$ for each data set $D$. Since our target to be privatized is a persistence diagram, it would be a natural choice to take the space of all possible persistence diagrams, which we denoted by $\mathsf{Pers}$, as the output space. The first candidate for the utility function would be the function $u_D$ defined in (\ref{exp: utility DTM}). However, its output space, $\mathsf{Pers}$, contains persistence diagrams which have arbitrary many numbers of birth-death pairs; that is, to sample persistence diagrams from the whole $\mathsf{Pers}$ is inevitably an infinite-dimensional problem, which is technically difficult in computation. To bypass such an issue, we pre-specify a hyperparameter $M \in \mathbb{Z}_+$, a positive integer, and only take care of the space $\mathsf{Pers}_{M}$ of persistence diagrams having at most $M$ birth-death pairs at each dimension $q$. On each restricted space $\mathsf{Pers}_{M}$, for any given data set $D$, we re-define the function $u_D^{(q)}: \mathsf{Pers}_{M} \to \mathbb{R}$ as follows:
\[
u_D^{(q)}(\mathcal{P}) := -\db (\mathcal{P}, \mathcal{P}_q (D)).
\]
The utility function $u_D$ is also re-defined in the same way as in (\ref{exp: utility DTM}) by using the re-defined $u_D^{(q)}$s. Namely, the utility function $u_D : \big( \mathsf{Pers}_{M} \big)^{\ell + 1} \to \mathbb{R}$ is defined as 
\[ \label{exp: util DTM restrict} \tag{4.1}
u_D(\mathcal{P}_0, \dots, \mathcal{P}_{\ell}) := \sum_{q = 0}^{\ell} u_D^{(q)} (\mathcal{P}_q).
\]

Note that the upper and lower bounds established in Corollary \ref{cor: DTM sensitivity} are still valid for the utility $u_D$ defined in (\ref{exp: util DTM restrict}). 

Under the choice of the utility function $u_D$ in (\ref{exp: util DTM restrict}), the probability distribution from which privatized persistence diagrams are generated can be specified. Before describing the exponential mechanism, we introduce a discretized version of $\mathsf{Pers}_M$ that will make the analysis in Section 4.2 convenient. Note that each persistence diagram in $\mathsf{Pers}_M$ can be viewed as a family of at most $M$ points in the upper-left triangle $\bar{\mathcal{T}} := \{(x, y) \: : \: 0 \leq x \leq y \leq \mathrm{diam} E \}$. Instead of using $\bar{\mathcal{T}}$ directly, we discretize it with finitely many discrete points; for example, a set of equally-spaced finitely many points in $\bar{\mathcal{T}}$ can be a such discretization. By discretizing the set $\bar{\mathcal{T}}$ with $N^2$ discrete points, a discretization of $\mathsf{Pers}_M$ can be obtained; namely, the discretized version of $\mathsf{Pers}_M$ is the family of sets having at most $M$ points in the discretized version of $\bar{\mathcal{T}}$. Note that such a discretization of $\mathsf{Pers}_M$ has cardinality at most $N^{2M}$. For a given positive integer $N$, we define $\widetilde{\mathsf{Pers}}_{M,N}$ to be the discretization of $\mathsf{Pers}_M$ obtained by discretizing $\bar{\mathcal{T}}$ with $N^2$ equally spaced discrete points. Therefore, our exponential mechanism is indeed carried out on the space $\widetilde{\mathsf{Pers}}_{M,N}$. The space $\widetilde{\mathsf{Pers}}_{M,N}$ is the actual output space where the private persistence diagrams generated by the following mechanism live.

\begin{proposition} \label{prop: our density}
Let $\ell \geq 0$ be fixed and the utility $u_D$ defined in (\ref{exp: util DTM restrict}). Set $p(\cdot)$ to denote the probability density function characterized by the following expression:
\[ \tag{4.2}
\begin{aligned} \label{exp: our density} 
 p(\mathcal{P}_{\mathrm{DP}})  \propto \exp \bigg( \frac{\epsilon}{2 \Delta} u_D(\mathcal{P}_{\text{DP}}) \Bigg)  = \exp \bigg( -\frac{\epsilon}{2 \Delta} \sum_{q=0}^{\ell} \db \big( \mathcal{P}_q(D), \mathcal{P}_{q, \mathrm{DP}} \big) \bigg)
\end{aligned}
\]
with respect to the uniform distribution on the set $(\widetilde{\mathsf{Pers}}_{M,N})^{\ell + 1}$ as the base measure.
In (\ref{exp: our density}), $\Delta$ is defined as follows.
\[
\Delta := (\ell + 1) \frac{\mathrm{diam} E}{mn}
\]
and
$\mathcal{P}_{\mathrm{DP}} = (\mathcal{P}_{0, \mathrm{DP}}, \dots, \mathcal{P}_{\ell, \mathrm{DP}})$. Then, the exponential mechanism characterized by the density (\ref{exp: our density}) satisfies $\epsilon$-DP.
\end{proposition}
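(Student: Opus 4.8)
The plan is to recognize the density (\ref{exp: our density}) as precisely an instance of the exponential mechanism of Proposition \ref{prop: ExpMech ep-DP}, and then to verify its two hypotheses — the finite-sensitivity condition and the integrability of the unnormalized density — for the utility $u_D$ of (\ref{exp: util DTM restrict}) on the discrete output space $(\widehat{\mathsf{Pers}}_{M,N})^{\ell + 1}$. Once both hypotheses are checked with the stated value of $\Delta$, the $\epsilon$-DP guarantee follows immediately from that proposition.

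First I would take the base measure $\nu$ to be the uniform distribution on the finite set $(\widehat{\mathsf{Pers}}_{M,N})^{\ell + 1}$, so that (\ref{exp: our density}) is literally the density $p_D(y) \propto \exp\big(\tfrac{\epsilon}{2\Delta} u_D(y)\big)$ appearing in (\ref{exp: density}). Integrability is then immediate: the output space is finite (of cardinality at most $N^{M(\ell+1)}$), and the utility $u_D$ is bounded, since every diagram in $\widehat{\mathsf{Pers}}_{M,N}$ lives in the bounded triangle $\bar{\mathcal{T}}$ and hence $\db(\mathcal{P}_q, \mathcal{P}_q(D)) \leq \mathrm{diam} E$; thus the normalizing sum $\int \exp\big(\tfrac{\epsilon}{2\Delta} u_D\big)\, d\nu$ is a finite sum of strictly positive, bounded terms, and is finite.

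The substantive step is verifying the sensitivity bound (\ref{exp: ExpMech-sensitivity}) with $\Delta = (\ell + 1)\,\mathrm{diam} E/(mn)$. Here I would invoke the upper bound of Corollary \ref{cor: DTM sensitivity}. Concretely, for adjacent $D, D'$ and any $\mathcal{P} = (\mathcal{P}_0, \dots, \mathcal{P}_{\ell})$, the additive form (\ref{exp: util DTM restrict}) together with the reverse triangle inequality for the bottleneck distance gives
\[
|u_D(\mathcal{P}) - u_{D'}(\mathcal{P})| \leq \sum_{q=0}^{\ell} \big| \db(\mathcal{P}_q, \mathcal{P}_q(D)) - \db(\mathcal{P}_q, \mathcal{P}_q(D')) \big| \leq \sum_{q=0}^{\ell} \db\big( \mathcal{P}_q(D), \mathcal{P}_q(D') \big),
\]
and each summand is at most the base sensitivity $\Delta_q^{\mathrm{DTM}_1} \leq \mathrm{diam} E/(mn)$ supplied by Theorem \ref{thm: DTM sensitivity upper} at $p = 1$. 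Summing the $\ell + 1$ terms yields exactly $\Delta$. Crucially this bound is uniform in $\mathcal{P}$, so it persists when the supremum is taken only over the subset $(\widehat{\mathsf{Pers}}_{M,N})^{\ell+1} \subseteq (\mathsf{Pers})^{\ell+1}$; restricting and discretizing the output space can only shrink the supremum. The case $H(D, D') = 0$ is vacuous, so the supremum over $H(D, D') \leq 1$ agrees with the supremum over $H(D, D') = 1$ treated in Corollary \ref{cor: DTM sensitivity}.

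With both hypotheses in hand, Proposition \ref{prop: ExpMech ep-DP} applies directly and certifies that the mechanism sampling from (\ref{exp: our density}) is $\epsilon$-DP. The only point demanding any care is ensuring that the per-dimension triangle-inequality reduction and the passage to the discretized subspace preserve the sensitivity bound; everything else is a bookkeeping instantiation of the exponential mechanism.
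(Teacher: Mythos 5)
Your proposal is correct and follows essentially the same route the paper takes: the paper treats this proposition as a direct instantiation of the exponential mechanism (Proposition \ref{prop: ExpMech ep-DP}), with the sensitivity hypothesis supplied by the upper bound of Corollary \ref{cor: DTM sensitivity} (itself proved via the reverse triangle inequality and Theorem \ref{thm: DTM sensitivity upper} at $p=1$), noting as you do that the bound remains valid on the restricted space $(\widehat{\mathsf{Pers}}_{M,N})^{\ell+1}$. Your additional explicit checks (integrability on the finite output space, the vacuity of $H(D,D')=0$) are minor bookkeeping points the paper leaves implicit, not a different argument.
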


To generate a sample from the distribution (\ref{exp: our density}), we utilize the Metropolis-Hastings algorithm. The detailed procedure of the algorithm is summarized in Appendix \ref{appendix: supplement-alg}. 

\begin{remark}
    Note that the discretization is not necessary to establish Proposition \ref{prop: our density}, but is needed to derive the utility results in the following section, such as Proposition \ref{prop: privacy error upper} and Theorem \ref{thm: STAT error upper}. It is possible that this discretization can be removed from our analysis using more sophisticated techniques.
\end{remark}

\subsection{Analysis of privatized persistence diagram} \label{sec: ExpMech-error analysis}

Let $\ell \geq 0$ be determined. Recall that we have restricted the output space of our privatized persistence diagram in terms of $M$ points for each dimension, and that these fall on a discretized version of the continuous persistence diagram space.
To incorporate these limitations into our consideration for the error quantification, we define $\mathcal{P}_{\mathrm{OPT}}$ to be the closest persistence diagram from $\mathcal{P}(D) = (\mathcal{P}_0(D), \dots, \mathcal{P}_{\ell}(D)) $ that can be generated from the privacy algorithm. More precisely, for each $q$
\[
\mathcal{P}_{q, \mathrm{OPT}} := \underset{\mathcal{P} \in \mathsf{Pers}_{M}}{\mathrm{argmin}} \; \db (\mathcal{P}, \mathcal{P}_q(D)),
\]
where $\mathcal{P}$ ranges over all persistence diagrams having at most $M$ elements, and
\[
\mathcal{P}_{\mathrm{OPT}} := (\mathcal{P}_{0, \mathrm{OPT}}, \dots, \mathcal{P}_{\ell, \mathrm{OPT}}).
\]
Similarly, the counterpart of $\mathcal{P}_{\mathrm{OPT}}$ on the discrete space $\widetilde{\mathsf{Pers}}_{M,N}$ is defined as follows. For every $q \geq 0$,
\[
\widetilde{\mathcal{P}}_{q, \mathrm{OPT}} := \underset{\mathcal{P} \in \widetilde{\mathsf{Pers}}_{M,N}}{\mathrm{argmin}} \; \db \big(\mathcal{P}, \mathcal{P}_q(D) \big)
\]
and
\[
\widetilde{\mathcal{P}}_{\mathrm{OPT}} := (\widetilde{\mathcal{P}}_{0, \mathrm{OPT}}, \dots, \widetilde{\mathcal{P}}_{\ell, \mathrm{OPT}}).
\]
Moreover, for any pair of $(\ell + 1)$-tuples of persistence diagrams $ \mathcal{P} = (\mathcal{P}_0, \dots, \mathcal{P}_{\ell})$ and $\mathcal{P}^{'} = (\mathcal{P}_0^{'}, \dots, \mathcal{P}_{\ell}^{'})$, we define
\[
\db (\mathcal{P}, \mathcal{P}^{'}) := \sum_{q = 0}^{\ell} \db(\mathcal{P}_q, \mathcal{P}_q^{'}).
\]

In general, in the literature on the exponential mechanism, there have been broad analyses with regard to the error-minimizing value in the space covered by the exponential mechanism. For instance, see \cite{Dwork2014}. One key result concerning $\mathcal{P}_{q, \mathrm{OPT}}$ is summarized in Proposition \ref{exp: ExpMech-utility}. Consequently, the following estimate can be established. Recall that the discretized space $\widetilde{\mathsf{Pers}}_{M,N}$ has been obtained by discretizing the upper-left triangle $\bar{\mathcal{T}}$ with $N^2$ equally-spaced discrete points. 

\begin{proposition}
Let $\mathcal{P}_{\mathrm{OPT}}$ be defined in the above and $\mathcal{P}_{\mathrm{DP}}$ the private persistence diagram obtained from the exponential mechanism summarized in Section \ref{sec: ExpMech-algorithm}. Suppose that the upper-left triangle $\bar{\mathcal{T}}$ is discretized into $N^2$ equally spaced points. Then the following holds.
\[ \label{prop: privacy error upper}
\db (\mathcal{P}_{\mathrm{OPT}}, \mathcal{P}_{\mathrm{DP}}) = O_p \bigg( \frac{(\ell + 1)^2 M \log N}{n \epsilon} + \frac{1}{N} \bigg)
\]
In particular, if we take $N = n$, it holds that
\[
\db(\mathcal{P}_{\mathrm{OPT}}, \mathcal{P}_{\mathrm{DP}}) = O_p \bigg( \frac{(\ell + 1)^2 M \log n}{n \epsilon} \bigg) = \tilde{O}_p \bigg( \frac{(\ell + 1)^2 M}{n \epsilon} \bigg).
\]
\end{proposition}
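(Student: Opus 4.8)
The plan is to apply the utility guarantee of the exponential mechanism, Proposition stated in (\ref{exp: ExpMech-utility}), to the specific utility function $u_D$ of (\ref{exp: util DTM restrict}) and then translate that guarantee, which is phrased in terms of the utility value $u_D(\mathcal{P}_{\mathrm{DP}})$, into the bottleneck-distance quantity $\db(\mathcal{P}_{\mathrm{OPT}}, \mathcal{P}_{\mathrm{DP}})$ that we actually want to bound. First I would instantiate the tail bound (\ref{exp: ExpMech-utility}) with the output space $\mathcal{Y} = (\widehat{\mathsf{Pers}}_{M,N})^{\ell+1}$ and the sensitivity $\Delta = (\ell+1)\,\mathrm{diam}\,E/(mn)$ coming from Corollary \ref{cor: DTM sensitivity} (valid, as noted, for the restricted utility). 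Since $u_D(\mathcal{P}) = -\db(\mathcal{P}, \mathcal{P}(D))$ and $\mathrm{OPT}_D = \max_{\mathcal{P}} u_D(\mathcal{P}) = -\db(\hat{\mathcal{P}}_{\mathrm{OPT}}, \mathcal{P}(D))$ by the definition of $\hat{\mathcal{P}}_{\mathrm{OPT}}$, the event $u_D(\mathcal{P}_{\mathrm{DP}}) \leq \mathrm{OPT}_D - (2\Delta/\epsilon)(\log|\mathcal{Y}| + t)$ becomes a statement that $\db(\mathcal{P}_{\mathrm{DP}}, \mathcal{P}(D))$ exceeds $\db(\hat{\mathcal{P}}_{\mathrm{OPT}}, \mathcal{P}(D))$ by at most $(2\Delta/\epsilon)(\log|\mathcal{Y}| + t)$ with probability at least $1 - e^{-t}$.

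Next I would convert this bound on $\db(\mathcal{P}_{\mathrm{DP}}, \mathcal{P}(D)) - \db(\hat{\mathcal{P}}_{\mathrm{OPT}}, \mathcal{P}(D))$ into a bound on $\db(\mathcal{P}_{\mathrm{OPT}}, \mathcal{P}_{\mathrm{DP}})$ via the triangle inequality for the (summed) bottleneck distance. Writing
\[
\db(\mathcal{P}_{\mathrm{OPT}}, \mathcal{P}_{\mathrm{DP}}) \leq \db(\mathcal{P}_{\mathrm{OPT}}, \mathcal{P}(D)) + \db(\mathcal{P}(D), \mathcal{P}_{\mathrm{DP}}),
\]
I would control the first summand by the discretization error $\db(\mathcal{P}_{\mathrm{OPT}}, \hat{\mathcal{P}}_{\mathrm{OPT}})$ plus $\db(\hat{\mathcal{P}}_{\mathrm{OPT}}, \mathcal{P}(D))$, and the second summand by the exponential-mechanism bound from the previous paragraph. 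The discretization error per dimension is governed by the spacing of the $N$ equally spaced points on $\bar{\mathcal{T}}$: with side length $\mathrm{diam}\,E$ and $N = n^2$ points, the mesh size is of order $\mathrm{diam}\,E/\sqrt{N} = \mathrm{diam}\,E/n$, so each of the at-most-$M$ points per diagram contributes $O(\mathrm{diam}\,E/n)$ in $\ell_\infty$, giving $\db(\mathcal{P}_{\mathrm{OPT}}, \hat{\mathcal{P}}_{\mathrm{OPT}}) = O((\ell+1)\,\mathrm{diam}\,E/n)$ after summing over the $\ell+1$ dimensions.

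The final step is to collect the rates. Substituting $\Delta \approx (\ell+1)/(mn)$ and $|\mathcal{Y}| = N^{M(\ell+1)} = n^{2M(\ell+1)}$ into $(2\Delta/\epsilon)\log|\mathcal{Y}|$ yields a leading term of order $\frac{(\ell+1)}{mn\epsilon}\cdot M(\ell+1)\log n = \frac{(\ell+1)^2 M \log n}{mn\epsilon}$; absorbing the logarithmic factor into $\tilde{O}_p$ and the constants $m, \mathrm{diam}\,E$ into the asymptotic notation gives exactly $\tilde{O}_p\big((\ell+1)^2 M/(n\epsilon)\big)$. The discretization contribution $O((\ell+1)\,\mathrm{diam}\,E/n)$ is of strictly smaller order and is dominated. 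I expect the main obstacle to be the bookkeeping in the triangle-inequality decomposition: one must be careful that $\mathcal{P}_{\mathrm{OPT}}$ (the continuous-space optimum) and $\hat{\mathcal{P}}_{\mathrm{OPT}}$ (the discrete optimum) are compared against the correct anchor $\mathcal{P}(D)$, since the utility in the mechanism is measured against $\mathcal{P}(D)$ while the statement of the proposition phrases the error relative to $\mathcal{P}_{\mathrm{OPT}}$; verifying that $\log|\mathcal{Y}| = M(\ell+1)\log N$ and tracking how the factor $(\ell+1)$ appears both in $\Delta$ and in the cardinality (thereby producing the quadratic $(\ell+1)^2$) is the one place where an off-by-a-factor slip is easy.
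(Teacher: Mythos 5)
Your skeleton---apply the utility guarantee (\ref{exp: ExpMech-utility}) on $\mathcal{Y} = (\widehat{\mathsf{Pers}}_{M,N})^{\ell+1}$ with $\Delta = (\ell+1)\,\mathrm{diam}\,E/(mn)$, translate the utility statement into a bottleneck-distance statement, and add the $O(\mathrm{diam}\,E/n)$ discretization error---matches the paper's, and your accounting of $\log|\mathcal{Y}| \leq M(\ell+1)\log N$, of the mesh size $\mathrm{diam}\,E/\sqrt{N} = \mathrm{diam}\,E/n$, and of where the $(\ell+1)^2$ factor comes from is all correct. The gap is in the translation step. Unwinding your two-part decomposition, what you have actually established (with probability at least $1-e^{-t}$) is
\[
\db(\mathcal{P}_{\mathrm{OPT}}, \mathcal{P}_{\mathrm{DP}}) \;\leq\; \db(\mathcal{P}_{\mathrm{OPT}}, \hat{\mathcal{P}}_{\mathrm{OPT}}) \;+\; 2\,\db\big(\hat{\mathcal{P}}_{\mathrm{OPT}}, \mathcal{P}(D)\big) \;+\; \frac{2\Delta}{\epsilon}\big(\log|\mathcal{Y}| + t\big),
\]
because the anchor term $\db(\hat{\mathcal{P}}_{\mathrm{OPT}}, \mathcal{P}(D))$ enters once through your bound on the first summand $\db(\mathcal{P}_{\mathrm{OPT}}, \mathcal{P}(D))$ and once more through the exponential-mechanism bound on the second summand $\db(\mathcal{P}(D), \mathcal{P}_{\mathrm{DP}})$. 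In your final rate collection this term silently vanishes: you tally only the mechanism term and the discretization term. But $\db(\hat{\mathcal{P}}_{\mathrm{OPT}}, \mathcal{P}(D))$ is the approximation error of $\mathcal{P}(D)$ by the best gridded diagram with at most $M$ points per dimension; it is data-dependent and is not $\tilde{O}_p\big((\ell+1)^2M/(n\epsilon)\big)$ in general. If some $\mathcal{P}_q(D)$ has more than $M$ points of persistence bounded away from the diagonal, this term is bounded below by a constant; and even under the conditions the paper imposes later ($M(n)\geq|\mathcal{P}_q(\mu)|$ eventually, Proposition \ref{prop: estimation error}), it is only controlled at rate $O_p(n^{-1/d})$, which for fixed $\epsilon$ and $d\geq 2$ is of strictly larger order than $1/(n\epsilon)$. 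So, as written, your argument proves only the weaker bound $\db(\mathcal{P}_{\mathrm{OPT}}, \mathcal{P}_{\mathrm{DP}}) \leq 2\,\db(\hat{\mathcal{P}}_{\mathrm{OPT}}, \mathcal{P}(D)) + \tilde{O}_p\big((\ell+1)^2M/(n\epsilon)\big)$, not the proposition.

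For comparison, the paper's proof never routes through $\mathcal{P}(D)$: it pairs the tail bound on the utility gap $|u_D(\mathcal{P}_{\mathrm{DP}}) - u_D(\hat{\mathcal{P}}_{\mathrm{OPT}})|$ with the reverse triangle inequality $\db(\mathcal{P}_{\mathrm{DP}}, \hat{\mathcal{P}}_{\mathrm{OPT}}) \geq |u_D(\mathcal{P}_{\mathrm{DP}}) - u_D(\hat{\mathcal{P}}_{\mathrm{OPT}})|$ and concludes directly, then adds the $O_p(1/n)$ grid error. Note, however, that this inequality lower-bounds the distance by the utility gap, while an upper bound on the distance is what the proposition asserts; the difficulty you ran into is precisely the one this one-line combination elides. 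Being within $\eta$ of optimal utility means $\mathcal{P}_{\mathrm{DP}}$ is almost as close to $\mathcal{P}(D)$ as $\hat{\mathcal{P}}_{\mathrm{OPT}}$ is, but two diagrams equally close to $\mathcal{P}(D)$ can be nearly twice that distance apart from each other---whence the $2\,\db(\hat{\mathcal{P}}_{\mathrm{OPT}}, \mathcal{P}(D))$ term in your version. In other words, your gap is not a bookkeeping slip fixable by a cleverer triangle inequality: any complete proof must either add a hypothesis making $\db(\hat{\mathcal{P}}_{\mathrm{OPT}}, \mathcal{P}(D))$ negligible at the $1/(n\epsilon)$ scale, or exploit structure beyond the bare utility guarantee of the exponential mechanism.
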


\begin{remark}
In fact, the exponential mechanism itself only directly guarantees that privatized diagrams are concentrated at the optimal diagram in the discretized space. More specifically, we have
\[ \label{exp: privacy error discrete} \tag{4.3}
\db (\widetilde{\mathcal{P}}_{\mathrm{OPT}}, \mathcal{P}_{\mathrm{DP}}) = O_p \bigg( \frac{(\ell + 1)^2 M\log N}{n \epsilon}\bigg).
\]
On the other hand, as long as we employ fine enough discretization, it is trivial that the distance $\db (\mathcal{P}_{\mathrm{OPT}}, \widetilde{\mathcal{P}}_{\mathrm{OPT}})$ is negligible compared to the error (\ref{exp: privacy error discrete}). For instance, taking $N = N(n) = n$ ensures that such an approximation error has order $O_p(n^{-1})$ and the term $\log N$ in (\ref{exp: privacy error discrete}) only adds $\log n$ amount of error. This guarantees the result in Proposition \ref{prop: privacy error upper}. 
\end{remark}

To take advantage of the above result, we can estimate each of the two types of errors as follows.
\[ \label{exp: STAT error upper} \tag{4.4}
\db (\mathcal{P}_{\mathrm{DP}}, \mathcal{P}(\mu)) \leq \db (\mathcal{P}_{\mathrm{DP}}, \mathcal{P}_{\mathrm{OPT}}) + \db (\mathcal{P}_{\mathrm{OPT}}, \mathcal{P}(\mu))
\]
and
\[ \label{exp: Cs error upper} \tag{4.5}
\db (\mathcal{P}_{\mathrm{DP}}, \mathcal{P}(D)) \leq \db (\mathcal{P}_{\mathrm{DP}}, \mathcal{P}_{\mathrm{OPT}}) + \db (\mathcal{P}_{\mathrm{OPT}}, \mathcal{P}(D)).
\]
Hence, the remaining part is to estimate $\db (\mathcal{P}_{\mathrm{OPT}}, \mathcal{P} (\mu))$ and $\db (\mathcal{P}_{\mathrm{OPT}}, \mathcal{P}( D))$, respectively. 

Before stating our main theorem in this section, we would like to summarize the terminology that we use to call each of the error terms we are concerned with. First of all, we call $\db(\mathcal{P}(D), \mathcal{P}(\mu))$ the estimation error because $\mathcal{P}(D)$ can be viewed as a statistic estimating $\mathcal{P}(\mu)$. The term $\db (\mathcal{P}_{\mathrm{DP}}, \mathcal{P}_{\mathrm{OPT}})$ is called the privacy error, following the tradition in DP literature. On the other hand, we call the quantity $\db (\mathcal{P}_{\mathrm{OPT}}, \mathcal{P} (D))$ the approximation error because $\mathcal{P}_{\mathrm{OPT}}$ is the best approximation of $\mathcal{P}(D)$ in the space $\mathsf{Pers}_M$. In contrast with the previous two terms, a type of quantity of the form $\db (\mathcal{P}_{\mathrm{OPT}}, \mathcal{P} (\mu))$ has not been analyzed in the literature before to our knowledge. As we noted, $\mathcal{P}(\mu)$ can be regarded as a population parameter describing the probability measure $\mu$ generating the data $D$. Concerning this perspective, we call the quantity $\db (\mathcal{P}_{\mathrm{OPT}}, \mathcal{P}(\mu))$ the constrained estimation error and call the corresponding quantity $\db (\mathcal{P}_{\mathrm{DP}}, \mathcal{P} (\mu))$ the privacy-estimation error in order to allude that this quantity would be interpreted as the amount of error in estimating the population parameter $\mathcal{P} (\mu)$ by the privatized statistic $\mathcal{P}_{\mathrm{DP}}$. 


If we can choose $M$ large enough, both terms $\db (\mathcal{P}_{\mathrm{OPT}}, \mathcal{P}(\mu))$ and $\db (\mathcal{P}_{\mathrm{OPT}}, \mathcal{P}(D))$ can be estimated through the convergence of the empirical distribution on $D$ to the measure $\mu$ in terms of the Wasserstein distance $W_1$ (See Proposition \ref{prop: estimation error}). It turns out that both terms are bounded by $O_p((\ell + 1)n^{-1/d})$. Hence, by taking $M = M(n)$ to be a slowly increasing sequence we can achieve such a bound without degrading the privacy error obtained in Proposition \ref{prop: privacy error upper}.

\begin{theorem}[Upper bound for the privacy-estimation error] \label{thm: STAT error upper}
Set $M = M(n) = \log n$ and $N(n)=n$. Then, for all large enough $n$, the following estimate holds.
\[
\begin{aligned}
\db  (\mathcal{P}_{\mathrm{DP}}, \mathcal{P} (\mu) ) & \leq \db (\mathcal{P}_{\mathrm{DP}}, \mathcal{P}_{\mathrm{OPT}}) + \db (\mathcal{P}_{\mathrm{OPT}}, \mathcal{P} (\mu)) \\
& =  \tilde{O}_p \bigg(\frac{(\ell + 1)^2 }{n \epsilon} +  \frac{(\ell + 1)}{n^{1/d}} \bigg).
\end{aligned}
\]
\end{theorem}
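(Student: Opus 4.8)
The plan is to establish the second (equality) line of the theorem, the first line being the triangle inequality already recorded in (\ref{exp: STAT error upper}). I would control the two summands of that decomposition by two independent arguments: the term $\db(\mathcal{P}_{\mathrm{DP}}, \mathcal{P}_{\mathrm{OPT}})$ is the privacy error contributed by the exponential mechanism, whereas $\db(\mathcal{P}_{\mathrm{OPT}}, \mathcal{P}(\mu))$ is a purely statistical population-estimation error in which the privatization plays no role.

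For the privacy term I would apply the bound (\ref{prop: privacy error upper}) verbatim with the prescribed discretization $N = n^2$ and truncation level $M = \log n$. That bound reads $\db(\mathcal{P}_{\mathrm{OPT}}, \mathcal{P}_{\mathrm{DP}}) = \tilde{O}_p\big((\ell+1)^2 M/(n\epsilon)\big)$; substituting $M = \log n$ only introduces one additional logarithmic factor, which by the very definition of the $\tilde{O}_p$ notation is swallowed, leaving $\tilde{O}_p\big((\ell+1)^2/(n\epsilon)\big)$. This yields the first term of the stated rate.

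For the population-estimation term I would insert the non-private empirical diagram $\mathcal{P}(D)$ and split
\[
\db(\mathcal{P}_{\mathrm{OPT}}, \mathcal{P}(\mu)) \leq \db(\mathcal{P}_{\mathrm{OPT}}, \mathcal{P}(D)) + \db(\mathcal{P}(D), \mathcal{P}(\mu)).
\]
The second piece is handled dimension by dimension: for each $q$ the sub-level-set stability (\ref{exp: bottlneck stability}) bounds $\db(\mathcal{P}_q(D), \mathcal{P}_q(\mu))$ by $\sup_x |\delta^{(1)}_{\mu_n}(x) - \delta^{(1)}_{\mu}(x)|$, and the Wasserstein stability (\ref{exp: Wasserstein stability}) bounds this in turn by $m^{-1} W_1(\mu_n, \mu)$; summing over $q = 0, \dots, \ell$ gives $(\ell+1)m^{-1}W_1(\mu_n,\mu)$. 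The standard convergence rate of the empirical measure, $W_1(\mu_n, \mu) = O_p(n^{-1/d})$, then produces the order $O_p\big((\ell+1)n^{-1/d}\big)$. The first piece is the truncation error from retaining only the $M$ most persistent off-diagonal features in each dimension; since $\mathcal{P}(\mu)$ has only finitely many off-diagonal points, once $M = \log n$ exceeds that fixed count the discarded features all lie within $O_p(n^{-1/d})$ of the diagonal and so contribute the same order. These are exactly the two estimates recorded in Proposition \ref{prop: estimation error}, giving $\db(\mathcal{P}_{\mathrm{OPT}}, \mathcal{P}(\mu)) = O_p\big((\ell+1)n^{-1/d}\big)$.

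Adding the two contributions gives the claimed $\tilde{O}_p\big((\ell+1)^2/(n\epsilon) + (\ell+1)/n^{1/d}\big)$. The main obstacle is the truncation estimate for $\db(\mathcal{P}_{\mathrm{OPT}}, \mathcal{P}(D))$: one must argue that a slowly growing budget $M = \log n$ is eventually large enough to capture every topologically significant feature of the empirical diagram, so that the discarded mass sits within $O_p(n^{-1/d})$ of the diagonal. This amounts to transferring the finiteness of the significant part of $\mathcal{P}(\mu)$ to $\mathcal{P}(D)$ through the bottleneck stability above, i.e., ruling out, with high probability, the appearance of spurious long-lived features in the empirical diagram. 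Everything else is a mechanical combination of the cited stability inequalities with the empirical-Wasserstein rate.
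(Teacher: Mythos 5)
Your proposal is correct and follows the same overall skeleton as the paper's proof: the triangle inequality (\ref{exp: STAT error upper}), Proposition \ref{prop: privacy error upper} for the privacy term (with the extra $M=\log n$ factor absorbed into $\tilde{O}_p$), and the stability inequalities (\ref{exp: bottlneck stability}) and (\ref{exp: Wasserstein stability}) combined with the empirical-Wasserstein rate of \cite{Fournier2015} for the term $\db(\mathcal{P}(D),\mathcal{P}(\mu))$. The one place you diverge is the truncation term $\db(\mathcal{P}_{\mathrm{OPT}},\mathcal{P}(D))$, which you flag as the main obstacle and propose to handle by a matching argument: retain the $M$ most persistent features of $\mathcal{P}(D)$ and show, via bottleneck stability, that every discarded feature lies within $O_p(n^{-1/d})$ of the diagonal. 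That argument does go through (any point of $\mathcal{P}_q(D)$ with persistence exceeding $2\,\db(\mathcal{P}_q(D),\mathcal{P}_q(\mu))$ must be matched to an off-diagonal point of $\mathcal{P}_q(\mu)$, and there are at most $|\mathcal{P}_q(\mu)|\leq M$ of those), but the paper's proof of Proposition \ref{prop: estimation error} makes the obstacle dissolve entirely: once $M\geq|\mathcal{P}_q(\mu)|$, the population diagram $\mathcal{P}_q(\mu)$ is itself a feasible element of $\mathsf{Pers}_M$, so the defining optimality of $\mathcal{P}_{q,\mathrm{OPT}}$ gives $\db(\mathcal{P}_q(D),\mathcal{P}_{q,\mathrm{OPT}})\leq\db(\mathcal{P}_q(D),\mathcal{P}_q(\mu))=O_p(n^{-1/d})$ in one line, and the triangle inequality then yields $\db(\mathcal{P}_q(\mu),\mathcal{P}_{q,\mathrm{OPT}})\leq 2\,\db(\mathcal{P}_q(\mu),\mathcal{P}_q(D))$. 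Your route is more constructive---it exhibits what a near-optimal truncation actually looks like and rules out spurious long-lived empirical features---whereas the paper's route exploits comparison against a feasible candidate and never needs to reason about which features are kept or discarded; both reduce to the same two ingredients, namely $M(n)\geq|\mathcal{P}_q(\mu)|$ eventually and the $O_p(n^{-1/d})$ stability estimate.
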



It is natural to wonder how sharp the obtained upper bounds are. As for the population-estimation error (and the estimation error), unfortunately, 
it is inevitable to get the rate $n^{-1/d}$ so long as the argument relies on the Wasserstein convergence of the empirical measure on $D$ to the measure $\mu$. This means that if a tighter rate is possible,  it is required to use a different approach in order to examine the birth-and-death of homological features of the sub-level sets of the DTM more precisely. In the literature of TDA, there are some approaches that examined such features of \v{C}ech complexes by employing some toolkits from geometry. For example, see \cite{Bobrowski;Adler:2014}. Such approaches may hint how to analyze persistence diagrams of the DTM more precisely.

As for the privacy error, we argue that it is sharp up to constants and logarithmic factors. Recall that $\mathcal{P}_{\mathrm{OPT}}$ is defined to be the persistence diagram in the range of our privacy algorithm which has the smallest distance from $\mathcal{P}(D)$. This definition lets us surmise that the distance $\db (\mathcal{P}_{\mathrm{DP}}, \mathcal{P}_{\mathrm{OPT}})$ could be smaller than the distance $\db (\mathcal{P}_{\mathrm{DP}}, \mathcal{P}(D))$ in a considerable probability. This means that if we are able to find a lower bound of $\db (\mathcal{P}_{\mathrm{DP}}, \mathcal{P}(D))$ matching the upper bound of $\db (\mathcal{P}_{\mathrm{DP}}, \mathcal{P}_{\mathrm{OPT}})$, it underpins that our estimate could be sharp. In the following theorem, we prove that, under some mild conditions, there is no $\epsilon$-DP mechanism whose privacy error with respect to the persistence diagrams from $L^1$-DTM can be smaller than $1/(n \epsilon)$. For the following, we recall that $\mathcal{P}_0(D)$ denotes the $0$th persistence diagram obtained by the $L^1$-DTM to the empirical measure on a given data set, as defined before. 

\begin{theorem} \label{thm: CS error lower}
Suppose that $m < 1/2$. Let $n$ be a positive integer and $\mathcal{M}$ an arbitrary $\epsilon$-DP mechanism that produces a privatized persistence diagram $\mathcal{M}(D)$ of a data set $D$. Assume that $\epsilon$ satisfies $1/n \leq \epsilon \leq 1$. Then it is not possible for $\mathcal{M}$ to achieve $\db (\mathcal{P}_0 (D),\mathcal{M}(D)) = o_p \big( \frac{1}{n \epsilon} \big)$ for every database $D$ with $|D| = n$.
\end{theorem}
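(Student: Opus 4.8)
The plan is to prove this lower bound by a two-point packing argument that pits the accuracy hypothesis against group privacy. Suppose, for contradiction, that some $\epsilon$-DP mechanism $\mathcal{M}$ achieves $\db(\mathcal{P}_0(D), \mathcal{M}(D)) = o_p(1/(n\epsilon))$ uniformly over all databases $D$ with $|D| = n$ as $n \to \infty$ (with $1/n \le \epsilon \le 1$). I would exhibit, for each $n$, two databases $D = D^{(n)}$ and $\tilde{D} = \tilde{D}^{(n)}$ that are close in Hamming distance yet whose $0$th $L^1$-DTM diagrams are far apart in bottleneck distance, and then show that no single mechanism can be accurate on both without violating $\epsilon$-DP.

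First I would build the pair by iterating the extremal construction underlying Proposition \ref{prop: DTM sensitivity lower} and Figure \ref{fig: two clusters}. Choosing $T \asymp \min(1/\epsilon, k)$ with $k = \lceil mn \rceil$ guarantees both $T\epsilon = O(1)$ and that $T$ stays in the regime where the construction behaves linearly. Starting from a tightly clustered configuration, I move points one at a time into a second distant cluster placed at distance $\mathrm{diam}\, E$; each moved point lowers the birth coordinate of the resulting second connected component by $\approx \mathrm{diam}\, E / k$ (its birth being the local DTM minimum $\mathrm{diam}\,E\,(k - T)/k$ when $T$ points sit at the far cluster), while its death coordinate (the DTM value at the saddle between the clusters, $\approx \mathrm{diam}\,E/2$) and the rest of the diagram are unaffected. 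Because this single off-diagonal point travels monotonically along one coordinate, the per-step bottleneck distances add rather than collapse, so after the $T$ moves
\[
\db\big(\mathcal{P}_0(D), \mathcal{P}_0(\tilde{D})\big) \;\gtrsim\; \frac{T\,\mathrm{diam}\, E}{k} \;\gtrsim\; \frac{1}{n\epsilon}, \qquad H(D, \tilde{D}) \le T.
\]

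With the pair in hand, set $\rho := \db(\mathcal{P}_0(D), \mathcal{P}_0(\tilde{D}))$ and let $A := \{\mathcal{P} : \db(\mathcal{P}, \mathcal{P}_0(D)) < \rho/2\}$ be the open bottleneck ball of radius $\rho/2$ about $\mathcal{P}_0(D)$. Since $\rho \gtrsim 1/(n\epsilon)$, accuracy applied to $D$ forces $\mathbb{P}(\mathcal{M}(D) \in A) \to 1$, whereas the reverse triangle inequality together with accuracy applied to $\tilde{D}$ (whose diagram lies at distance $\rho$ from $\mathcal{P}_0(D)$, so that $\mathcal{M}(\tilde{D}) \in A$ would force $\db(\mathcal{M}(\tilde{D}), \mathcal{P}_0(\tilde{D})) > \rho/2$) forces $\mathbb{P}(\mathcal{M}(\tilde{D}) \in A) \to 0$. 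On the other hand, group privacy — obtained by chaining the $\epsilon$-DP inequality along the $T$ adjacent steps between $D$ and $\tilde{D}$ — gives $\mathbb{P}(\mathcal{M}(D) \in A) \le e^{T\epsilon}\,\mathbb{P}(\mathcal{M}(\tilde{D}) \in A)$, where $e^{T\epsilon}$ is bounded because $T\epsilon = O(1)$. Letting $n \to \infty$ yields $1 \le e^{O(1)} \cdot 0 = 0$, a contradiction, which establishes the theorem.

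The main obstacle is the geometric claim in the displayed inequality: that the bottleneck distance accumulates linearly along the $T$-step chain to reach order $1/(n\epsilon)$, even though bottleneck distance is not additive in general. I would secure this by a careful choice of the two clusters and of the far-cluster counts at the two endpoints, keeping them inside the window (roughly $T \in (k/2, k)$ in the basic construction) in which the moved component remains a genuine off-diagonal feature matched to its counterpart rather than to the diagonal. The boundary cases — where $1/(n\epsilon)$ approaches $\mathrm{diam}\, E$, or where the linear window is narrower than $1/\epsilon$ (handled by capping $T$ at $\Theta(k)$, which still keeps $T\epsilon = O(1)$ while delivering separation $\asymp \mathrm{diam}\, E \gtrsim 1/(n\epsilon)$) — together with the dependence on $m$ through $k = \lceil mn\rceil$, only affect constants. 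Everything downstream, namely the ball argument and the group-privacy inequality, is routine once the separation is in place.
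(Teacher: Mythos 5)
Your proposal is correct and takes essentially the same route as the paper: the paper likewise pairs an extremal two-cluster DTM construction in which roughly $K \approx \min(1/\epsilon, k)$ points are relocated (Lemma \ref{lemma: proof of DTM lower 1}, yielding bottleneck separation $\gtrsim K/n$) with a packing argument driven by group privacy (Lemmas \ref{lemma: proof of DTM lower 2} and \ref{lemma: proof of DTM lower 3}). The only cosmetic differences are that the paper moves the $K$ points to the midpoint of the two clusters and tracks the resulting drop in the death (saddle) value of a component rather than the birth value of a far cluster's component, and it packages your direct group-privacy inequality on the two disjoint bottleneck balls as a hypothesis-testing bound, $\text{Type I error} + \text{Type II error} \geq 2/(1+e^{K\epsilon})$.
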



\section{Simulation Studies}
\label{sec: simulation}

In the following simulation studies and the real-world data analysis, we only consider the $0$th and the $1$st persistence diagrams; namely, the utility we use is given by taking $\ell = 1$, i.e., we set $u_D : \big( \mathsf{Pers}_M \big)^2 \to \mathbb{R}$ by
\[
u_D(\mathcal{P}_0, \mathcal{P}_1) := u_D^{(0)}(\mathcal{P}_0) + u_D^{(1)}(\mathcal{P}_1).
\]
The purpose of such restriction is only to present our algorithm succinctly; the algorithm can readily be extended to take the higher-dimensional features into consideration.

We produce the differentially private persistence diagrams and investigate the impact of the key hyper-parameters: the privacy budget $\epsilon$ and the sample size $n$, where the resolution of the DTM $m$ is set to 0.2. For the exponential mechanism, the default parameters are  $\epsilon = 1$, $m=0.2$, $n=4000$, and $M=5$. These hyper-parameters were chosen by preliminary simulations. To sample from the exponential mechanism, we use a Markov chain Monte Carlo algorithm, specified in Appendix~\ref{appendix: supplement-alg}. We choose the last iterate out of $T=10000$ Monte Carlo diagrams as the reporting privatized diagram to be used for analysis\footnote{The \texttt{R} code is available at \url{https://github.com/jwsohn612/DPTDA}.}.

The simulation is based on the example in Figure \ref{fig: two clusters}, which has two circles at the origin $(1.5,1.5)$ and $(-1.5,-1.5)$ whose radii are 1.5 and 1 respectively. Each circle consists of 200 observations of uniformly generated points along the boundary of the circle. There is one more point in the middle of the circles for (i-2) and (ii-2). When inducing the Vietoris-Rips diagrams, the maximum filtration scale is specified as 3. All analyses are based on 500 sampled data sets, and we apply the privacy mechanism for each replication.  



Figure~\ref{simul:replicates} illustrates the outputs of the exponential mechanism as $\ep$ and $n$ are varied. To reflect the variation of diagrams, we consider 500 replicated data sets. Our exponential algorithm is independently applied to each data set, and the algorithm reports the final diagram only. By repeating this procedure for all 500 replicates, we obtain the 500 reporting private diagrams. Each panel in Figure~\ref{simul:replicates} is drawn based on the 500 private diagrams that illustrate the shape of private diagrams' distribution. 
As expected, the variability in the privatized persistence diagrams becomes smaller as either $\epsilon$ or $n$ becomes larger.

The overall tendency in terms of the bottleneck distance is exhibited in Figure~\ref{simul:one_sens}. Note that the x-axis is written in the log scale. Gray areas in the panel show 95\% point-wise quantile intervals of the bottleneck distance between the non-private diagram and its private one. 
Figure~\ref{simul:one_sens} depicts that both $\ep$ and $n$ in the log scale have an approximately linear relationship to the log-bottleneck distance as shaded areas decently contain the red dotted lines e.g., $\log \db (\mathcal{P}(D),\cdot) \approx -\log n + c$ with some constant $c$.  These results heuristically support that $d_B(\mathcal(P)_{DP},\mathcal{P}(D))=\tilde O_p(1/(n\ep))$ (considering $\ell$ to be fixed).


\begin{figure}
\centering
    \psfrag{e1}{$\epsilon~=~0.1$}
    \psfrag{e2}{$\epsilon~=~1$}
    \psfrag{e3}{$\epsilon~=~10$}
    \psfrag{n1}{$n=250$}
    \psfrag{n2}{$n=1000$}
    \psfrag{n3}{$n=8000$}
\begin{subfigure}[b]{1.00\textwidth}
    \centering
    \includegraphics[width=0.9\textwidth]{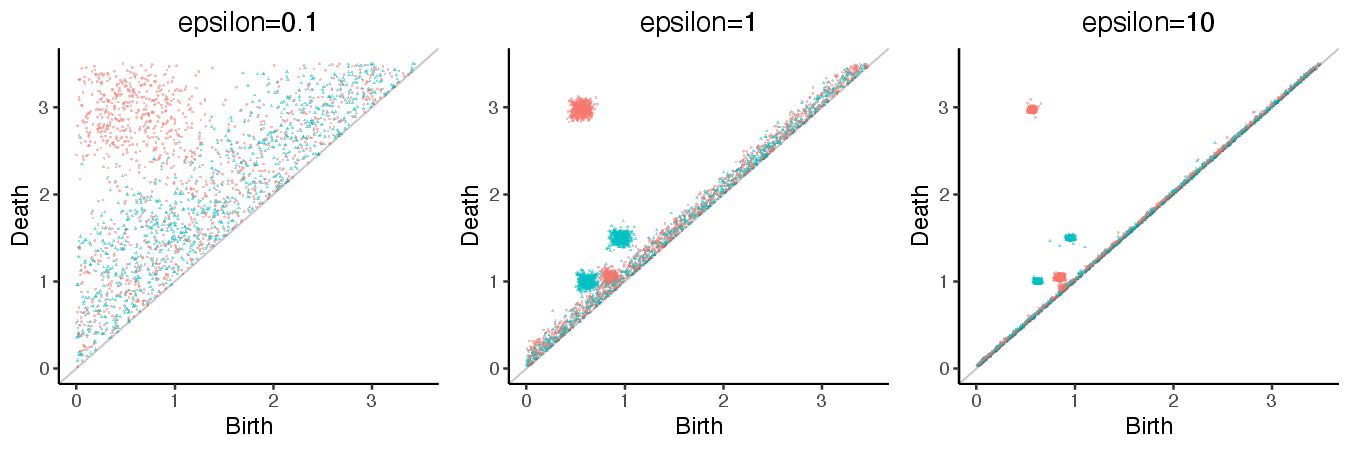}
    \caption{Privatized persistence diagrams of different $\epsilon$ where $n=4000$.}
\end{subfigure}
\hfill
\begin{subfigure}[b]{1.00\textwidth}
    \centering
    \includegraphics[width=0.9\textwidth]{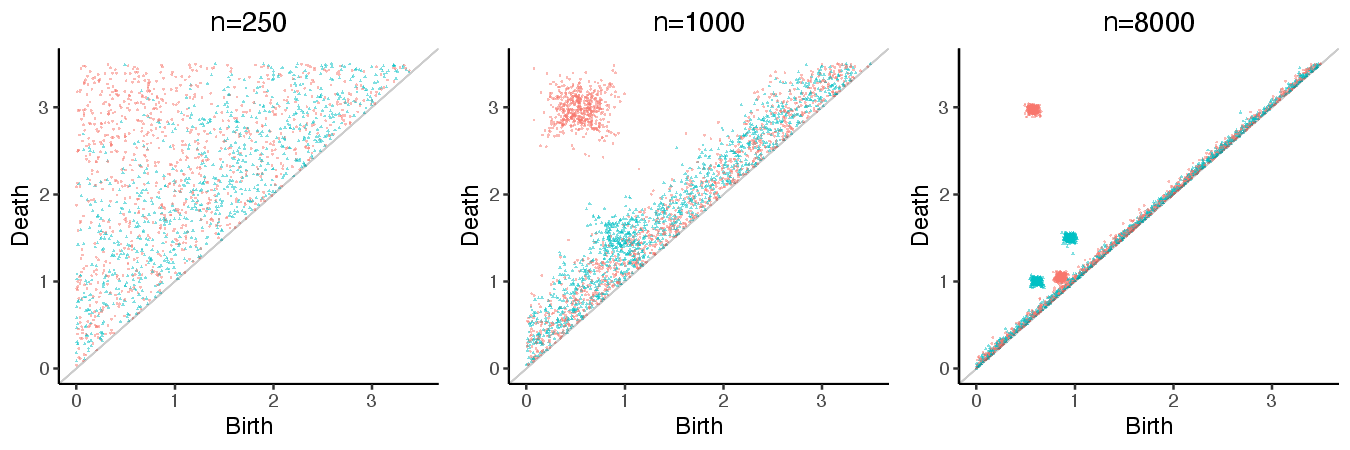}
    \caption{Privatized persistence diagrams of different $n$ where $\epsilon=1$.}
\end{subfigure} 
\caption{Privatized diagrams over 500 replicated data sets as described in Section \ref{sec: simulation}: $0$-dimensional connected components (orange) and $1$-dimensional loops (green). }
\label{simul:replicates}
\end{figure}

\begin{figure}
\centering
    \psfrag{leps}{$\log \epsilon$}
    \psfrag{lresl}{$\log m$}
    \psfrag{lsize}{$\log n$}
    \psfrag{med}{$W_{\infty}$}
  \includegraphics[width=0.75\textwidth]{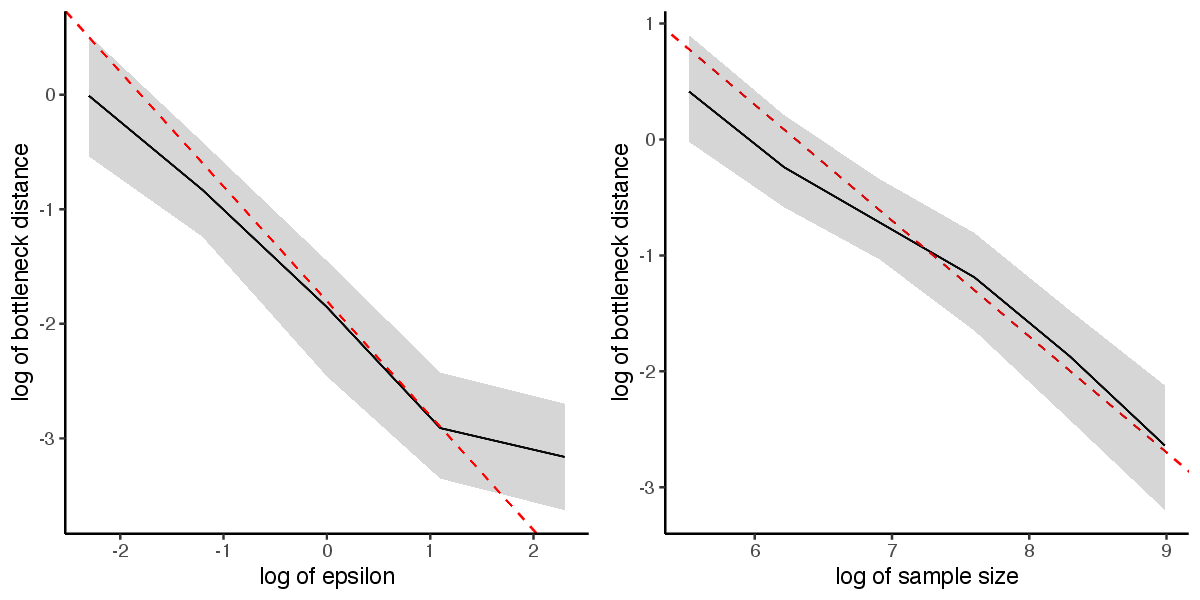}
      
\caption{The 95\% quantile intervals of $\db$ between $\mathcal{P}(D)$ and the corresponding private diagram over 500 replicates where $\epsilon$ (left) and $n$ (right) increase respectively. Red dotted lines captured overall in the shaded areas have -1 slopes.} 
\label{simul:one_sens}
\end{figure}

\section{Real Data Analysis}
\label{sec: real data}

In this section, we apply our mechanism to a real-world data set\footnote{Data is provided at \url{http://bertrand.michel.perso.math.cnrs.fr/Enseignements/TDA/Tuto-Part1.html}}, which tracks the locations of three people walking around within a building, recorded on smartphones. We are going to call those people Walker A, B, and C. The $3$-dimensional coordinates $(x, y, z)$ of the location of each person were measured $20000$ times over time so that the data set consists of $20000$ location vectors $(x, y, z)$ for each of Walker A, B, and C. 


We calculate the persistence diagram corresponding to each of the walkers and apply our mechanism in order to privatize it. We would like to remark that we are not concerned with the privacy of individual walkers, but we consider the privacy of an individual's time stamps when they change. If a particular walker's persistence diagram changes significantly in accordance with a change of location at a certain timestamp, then the location information could be exposed to a risk of privacy leakage. Our privacy mechanism retains the topological structure of each walker's travel while protecting the information associated with each timestamp.

To obtain a privatized diagram, we carry out $50000$ iterations in the MCMC procedure in our mechanism; the persistence diagram obtained at the last iteration is proposed as the reporting privatized diagram. We set the size of the sampling space $M = 5$, the resolution of the $L^1$-DTM $m = 0.05$, the privacy budget $\epsilon = 1$.

Figure \ref{fig: real data 2} depicts the results of comparing the $L^1$-DTM persistence diagram corresponding to Walker C and its privatized diagram. One can see the diagrams look quite similar. In fact, the bottleneck distances $\db (\mathcal{P}_0(D), \mathcal{P}_{0, \mathrm{DP}})$ and $\db (\mathcal{P}_1(D), \mathcal{P}_{1, \mathrm{DP}})$ are both $0.01$, which supports that our mechanism achieves high accuracy. Note that points near the diagonal are not considered significant, and do not substantially affect the bottleneck distance. In the right plot of Figure \ref{fig: real data 2}, we see that the bottleneck distance converges to a region $<.025$, and that the Markov chain seems to have converged after $\approx 5000$ iterations. 

\begin{figure}
    \includegraphics[width=\textwidth]{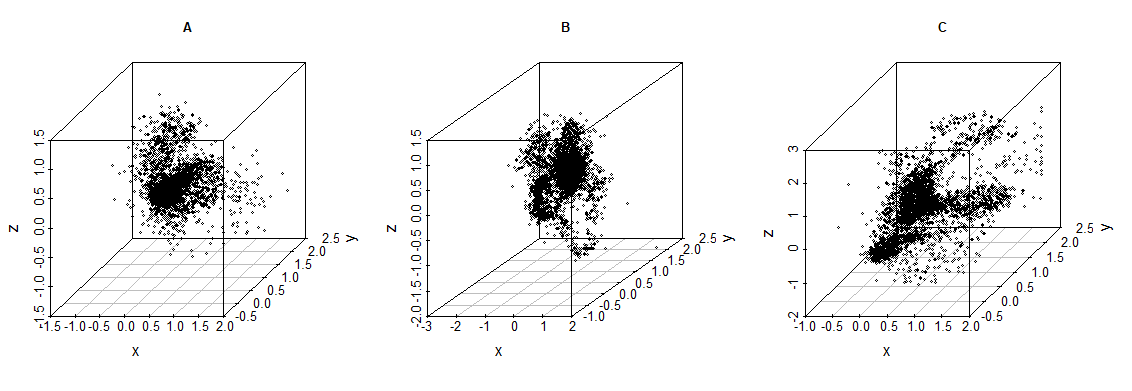}
    \caption{Scatter plots of the location information of Walker A,B, and C.}
    \label{fig: real data 1}
\end{figure}

\begin{figure}
    \includegraphics[width=\textwidth]{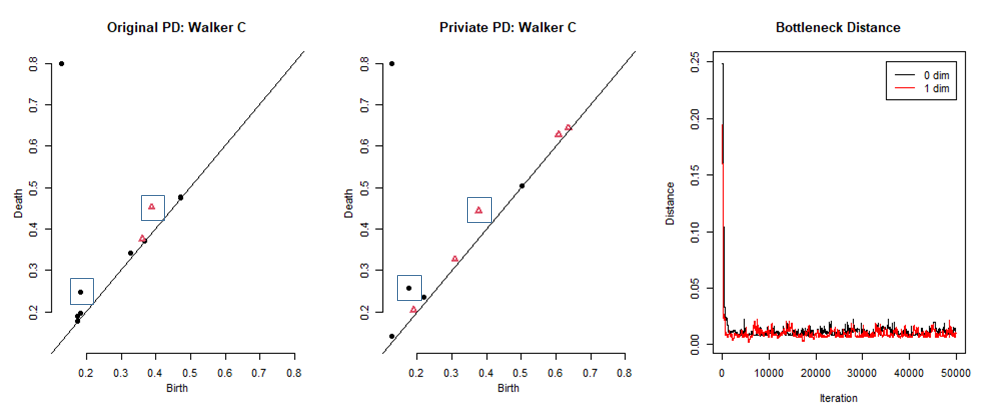}
    \caption{(Left) The true persistence diagram of Walker C; (Middle) a privatized persistence diagram of Walker C at the last iteration of a MCMC procedure, (Right) the bottleneck distances $\db (\mathcal{P}_0(D), \mathcal{P}_{0, \mathrm{DP}})$ (Black) and $\db(\mathcal{P}_1(D), \mathcal{P}_{1, \mathrm{DP}})$ (Red) of the true and a privatized diagram over MCMC iterations.  }
    \label{fig: real data 2}
\end{figure}

\section{Discussion}
\label{sec: discuss}
In this paper, we propose the first mechanism for producing a differentially private persistence diagram, while highlighting the role of outlier-robustness in the sensitivity analysis. Even though our results offer significant contributions to private TDA, as well as a general understanding of the robust TDA measures, there are still some important weaknesses of our work as well as directions for future work:

As noted in other papers (e.g., \citealp{minami2016differential,ganesh2020faster,seeman2021exact,awan2023privacy}), MCMC implementations of the exponential mechanism can incur additional privacy costs, which should be incorporated into the analysis for more rigorous studies. The proposed techniques in the above papers could be applied to our instance of the exponential mechanism to ensure that the MCMC approximation is taken into account for end-to-end privacy protection. 

While in this paper we recommended choosing $M$, the number of components in the persistence diagram, to be either a sufficiently large constant or an increasing function such as $M = \log n$, one could also consider $M$ to be an unknown quantity that also needs to be learned in a private manner. As one of the anonymous reviewers suggested, one may be able to develop a reversible jump MCMC algorithm to sample from the exponential mechanism in this case. Some challenges of this approach would be 1) developing a base measure over the infinite-dimensional space $\prod_{M=1}^\infty (\widetilde{\mathsf{Pers}}_{M,N})^{\ell+1}$, which ensures that the exponential mechanism results in a valid probability distribution, 2) determine a reversible jump rule that allows for conversion between the dimensions, and 3) perform a customized utility analysis of this new mechanism. We leave it to future researchers to consider this direction.

On the side of TDA, we would like to mention that some other outlier-robust TDA methods could be discussed for privacy protection. For instance, a kernel distance which was also discussed by \cite{Chazal2018} may be a good candidate. 

Besides TDA, the overall framework of how we propose a privacy mechanism can be applied to any other statistics that take their values in a metric space. A utility function concerned with a metric space-valued statistic can be defined similarly as we do with a persistence diagram and the bottleneck distance; this was already recognized in \cite{dwork2006calibrating}. However, the utility analysis for each different problem requires unique understanding of the specific structure and properties of the statistic and metric space at hand.

A theoretical limitation of our method is its scaling with the number of dimensions, denoted by $\ell + 1$, we consider. It is well known that the error in $\ep$-DP mechanisms typically scales linearly with the dimension, and our instance of the exponential mechanism is no different. Since this is a limitation of $\ep$-DP, it can only be addressed by using a different privacy framework. Future researchers may consider developing DP-TDA methods in the frameworks of approximate DP \citep{Dwork2014}, zero-concentrated DP \citep{bun2016concentrated} or Gaussian DP \citep{dong2022gaussian}, which often allow for the magnitude of the privacy noise to scale only in the square-root of the dimension. 

Another limitation of our work is that our utility analysis depends on an artificial discretization of the persistence diagram space. This limitation is caused by the use of Proposition \ref{prop: ExpMechUtility}, and could be potentially addressed by using more advanced techniques. 

Finally, \cite{dong;etal;20} proposed an alternative quantity to sensitivity for the exponential mechanism, which may improve the finite-sample performance.

\acks{The authors are thankful to the anonymous reviewers for their helpful comments that significantly improved the presentation of this manuscript. Taegyu Kang's research was partially supported by the AFOSR grant FA9550-22-0238 at Purdue University. Jordan Awan's research is supported in part by NSF grant SES-2150615. }


\bibliography{reference}


\appendix

\section{More on Persistent Homology} \label{appendix: TDA}

This part is devoted to providing more detailed background information about how to construct persistent homology and the corresponding persistence diagram. We start with the definition of simplicial complexes and simplicial homology, and then we introduce how to construct persistent homology.

\subsection{Simplicial homology} \label{appendix: TDA-1}

Let us start with the definition of simplicial complexes. Most of the contents of this subsection is based on \cite{Munkres1984}.

\begin{definition}[Simplicial complexes]
An (abstract) simplicial complex is a collection $K$ of finite non-empty sets, such that if $\sigma$ is an element of $K$, so is every non-empty subset of $\sigma$.
\end{definition}

Each element $\sigma$ of a simplicial complex $K$ is called a simplex of $K$. The dimension of the simplex $\sigma$ is defined to be $|\sigma|-1$, i.e., the number of elements in $\sigma$ minus one. When $\sigma$ is a $q$-dimensional simplex, we simply say that $\sigma$ is a $q$-simplex. The dimension $\dim K$ of the simplicial complex $K$ is defined to be the maximum dimension of the simplices in $K$, i.e., 
\[
\dim K := \max_{\sigma \in K} \dim \sigma.
\]
If the set $\{\dim \sigma \: : \: \sigma \in K\}$ is not bounded, set $\dim K = \infty$. Each non-empty subset of $\sigma$ is called a face of $\sigma$.

Let $K$ be a simplicial complex. For each simplex $\sigma = \{v_0, \dots, v_q\}$ in $K$, one can consider ordered tuples of vertices in $\sigma$. Namely, for every permutation $\alpha$ on $\{0, \dots, q\}$, there exists an ordered tuple $(v_{\alpha(0)}, \dots, v_{\alpha(1)})$. Such an ordered tuple is called a ordered simplex of $\sigma$. The collection of all ordered simplices of every simplex in $K$ is called the ordered simplicial complex of $K$, and denoted by $K_{\mathrm{ord}}$.

Let $K_{\mathrm{ord}}$ be an ordered simplicial complex of a simplicial complex $K$. Let $v^{\alpha} = (v_{\alpha(0)}, \dots, v_{\alpha(q)})$ and $v^{\beta} = (v_{\beta(0)}, \dots, v_{\beta(q)})$ be two ordered $q$-simplices of a common $q$-simplex $\sigma = \{v_0, \dots, v_q\}$. Declare $v^{\alpha} \sim v^{\beta}$ if and only if $\alpha$ and $\beta$ have the same sign, i.e., $\alpha$ and $\beta$ differ only by even numbers of transpositions. Notice that this relation defines an equivalence relation on the set of ordered simplices of every simplex $\sigma$. Let $[v_0, \dots, v_q]$ denote the equivalence class of the ordered simplex $(v_0, \dots, v_q)$. Such an equivalence class is called an oriented $q$-simplex. Namely, every $q$-simplex with $q \geq 1$ induces two oriented $q$-simplices. Let $K_{\mathrm{ori}}$ denote the set of all oriented simplices of every simplex in $K$. When there is no confusion, we use the symbol $\sigma$ to denote both a simplex and an oriented simplex.

For every natural number $q \geq 0$, set $K_{\mathrm{ori}}^q$ be the set of all oriented $q$-simplices of $K$. Define $C_q(K)$ be the set of all functions $c: K_{\mathrm{ori}}^q \to \mathbb{Z}$ satisfying the following.

\begin{itemize}

\item $c(\sigma) = - c(\sigma')$ if $\sigma$ and $\sigma'$ are opposite orientations of the same simplex.

\item $c(\sigma) = 0$ for all but finitely many oridented $q$-simplices $\sigma$, i.e., each $c$ is finitely supported.

\end{itemize}

One can equip a group structure on $C_q(K)$ by defining the group operation to be element-wise addition. Notice that $C_q(K)$ is an abelian group with that group structure. Moreover, it is straightforward that $C_q(K)$ is a free abelian group whose basis can be constructed by choosing exactly one oriented simplex for every simplex $\sigma$. One can represent every element $c$ in $C_q(K)$ by the finite $\mathbb{Z}$-linear combinations of oriented $q$-simplices of $K$, i.e., each $c$ can be written as
\[
c = \sum_{i = 1}^k n_i \sigma_i,
\]
where $k$ is finite, $n_i \in \mathbb{Z}$ and $\sigma_i \in K_{\mathrm{ori}}^q$ for all $1 \leq i \leq k$. Each function $c$ is called a $q$-chain of $K$ and $C_q(K)$ is called the group of oriented $q$-chains of $K$. We set $C_q(K) = 0$ if $q < 0$ or $q > \dim K$.

Now, we define the boundary operator of oriented chain complexes. 

\begin{definition}[Boundary operator]
Let $K$ be a simplicial complex. For every integer $q$, define
\[
\partial_q : K_{\mathrm{ori}, q} \to C_{q-1}(K)
\]
by assigning
\[
\partial_q : [v_0, \dots, v_q] \mapsto \sum_{i=0}^q (-1)^i [v_9, \dots, \hat{v}_i, \dots, v_q],
\]
where $[v_0, \dots, \hat{v}_i, \dots, v_q]$ is the $(q-1)$-oriented simplex obtained by deleting $v_i$ from $[v_0, \dots, v_q]$. Since $C_q(K)$ is a free abelian group, the map $\partial_q$ can be extended into a unique group homomorphism $\partial_q : C_q(K) \to C_{q-1}(K)$. This homomorphism is called the boundary operator.
\end{definition}

The key property of the boundary operator is the following:
\[
\partial_{q-1} \circ \partial_q = 0 \quad \text{ for every } q.
\]
In other words, the sequence $(C_q(K), \partial_q)_{q \in \mathbb{Z}}$ of abelian groups and group homomorphisms form a chain complex. This property can be rephrased as follows.
\[
\Image \partial_{q-1} \subseteq \Ker  \partial_q \quad \text{ for every } q,
\]
where $\Ker$ and $\Image$ mean the kernel and the image of a homomorphism, respectively. Since the sequence of groups of oriented chain complexes form a chain complex, it is possible to define the homology groups of it. Moreover, the kernel $\Ker \partial_q$ is usually written as $Z_q(K)$ and each of its elements is called a $q$-cycle; and, the image $\Image \partial_{q-1}$ is usually written as $B_q(K)$ and each of its elements is called a $q$-boundary.

\begin{definition}[Simplicial homology]
Let $K$ be a simplicial complex. For every integer $q$, the $q$th simplicial homology group is defined to be the following quotient group:
\[
Z_q(K) \big/ B_q(K) = \frac{\Ker (\partial_q : C_q(K) \to C_{q-1}(K))}{\Image (\partial_{q-1} : C_{q}(K) \to C_q(K))},
\]
and denoted by $H_q(K)$
\end{definition}

\begin{remark}
Instead of constructing $C_q(K)$ to be an abelian group, one can consider the free $R$-module on the same basis where $R$ is a commutative ring. The boundary operator can be defined by the same, and now it can be uniquely extended to be an $R$-module homomorphism $\partial_q : C_q(K) \to C_{q-1}(K)$. The resulting sequence $(C_q(K), \partial_q)$ of $R$-modules and $R$-module homomorphisms form a chain complex of $R$-modules, so the simplicial homology of it can be defined by the same way; in this case, each homology group $H_q(K)$ becomes an $R$-module as well. In such a case, we denote the $q$th simplicial homology module of $K$ by $H_q(K; R)$ and call it the $q$th simplicial homology of $K$ with coefficients in $R$.
\end{remark}

\subsection{Persistent homology} \label{appendix: TDA-2}

Let $\{K_r\}_{r \in \mathbb{R}}$ be a collection of simplicial complexes satisfying $K_{r_1} \subseteq K_{r_2}$ if $r_1 \leq r_2$. Such a collection is called a filtration of simplicial complexes (parametrized by $\mathbb{R}$). For each simplicial complex $K_r$ in the filtration, it is possible to construct the chain complex $(C_q(K_r), \partial_q)_{q \in \mathbb{Z}}$ and the corresponding homology groups $(H_q(K_r))_{q \in \mathbb{Z}}$. In addition, each inclusion map $\iota_{r_1, r_2} : K_{r_1} \to K_{r_2}$ ($r_1 \leq r_2$), induces a group homomorphism $C_q(K_{r_1}) \to C_q(K_{r_2})$, which is actually the inclusion map $C_q(K_{r_1}) \hookrightarrow C_q(K_{r_2})$ for every integer $q$; and, all such homomorphisms (inclusions) commute with boundary operators, i.e., each inclusion induces a chain map between chain complexes of oriented chains. Thus, each inclusion $\iota_{r_1, r_2}$ induces a homomorphism $\iota_{r_1, r_2}^q : H_q(K_{r_1}) \to H_q(K_{r_2})$ between homology groups for every $q$. This produces a collection $\{H_q(K_r)\}_{r \in \mathbb{R}}$ of simplicial homology groups accompanied with a group homomorphism $\iota_{r_1, r_2}^q : H_q(K_{r_1}) \to H_q(K_{r_2})$ for every $q$ and every pair $r_1 \leq r_2$. 

For each pair $r_1 \leq r_2$ and each $q$, the image of $\iota_{r_1, r_2}^q : H_q(K_{r_1}) \to H_q(K_{r_2})$, denoted by $\Image \iota_{r_1, r_2}^q$, is called the $q$th persistent homology group that persists from $r_1$ to $r_2$. The rank of the group $\Image \iota_{r_1, r_2}^q$ is called the $q$th persistent Betti number that persists from $r_1$ to $r_2$ and denoted by $\beta_{r_1, r_2}^q$. Intuitively, the Betti number $\beta_{r_1, r_2}^q$ represents the number of independent $q$-cycles that were born before the parameter $r_1$ and have not been dead until the parameter $r_2$ in the filtration. Furthermore, for each $q$-cycle in the filtration, it is possible to consider the parameter values at which the cycle shows up at first (birth) and disappears (death), respectively. 

Let $\sigma$ be a $q$-cycle that shows up in the filtration at some point, i.e., $\sigma$ is an element of $\Ker \partial_q (K_r)$ for some $r$. Then, it is possible to consider the birth and death times (parameter values) of it. By bringing together all birth-death pairs of all $q$-cycles in the filtration, one can form a multiset of points of the form $(b, d)$ with $b \leq d \leq \infty$. That multiset is called the $q$th persistence diagram of the filtration. The formal construction of the persistence diagram is involved with the structure theorem of finitely generated graded modules over a principal ideal domain, which is a theorem from abstract algebra. Please refer to \cite{Carlsson2009} and \cite{Edelsbrunner2008} for more formal and comprehensive discussion.

\subsection{Some ways to construct a filtration of simplicial complexes} \label{appendix: TDA-3}

Now, we introduce several ways to obtain a filtration of simplicial complexes that play a role in the main discussion of this paper. The contents of this subsection can be found in \cite{Edelsbrunner2009}.

Let $D = \{x_1, \dots, x_n\}$ be a finite subset of a metric space $(\mathcal{X}, d)$. For every non-negative real number $r \geq 0$, consider the ball $B(x_i; r) := \{y \in \mathcal{X} \: : \: d(y, x_i) < r)$ centered at each $i \in \{1, \dots, n\}$. The \v{C}ech complex $\breve{\mathcal{C}}(D; r)$ on $D$ with radius $r$ is the simplicial complex defined as follows. A subset $\sigma = \{x_{i_0},\dots, x_{i_q}\}$ of $D$ is a member of $\breve{\mathcal{C}}(D; r)$ if and only if $\cap_{j=0}^q B(x_{i_j}; r) \neq \emptyset$. Notice that $\breve{\mathcal{C}}(D; r_1) \subseteq \breve{\mathcal{C}}(D; r_2)$ for every pair $r_1 \leq r_2$. Hence, the collection $\{\breve{\mathcal{C}}(D; r)\}_{r \geq 0}$ of \v{C}ech complexes forms a filtration of simplicial complexes. 

There are several variants of the \v{C}ech complex. One of such variants is the Vietoris-Rips complex. The Vietoris-Rips complex $\mathrm{VR}(D; r)$ on $D$ with radius $r$ is defined as follows. A subset $\sigma = \{x_{i_0}, \dots, x_{i_q}\}$ of $D$ is a member of $\mathrm{VR}(D; r)$ if and only if $B(x_{i_{j_k}}; r) \cap B(x_{i_{j_l}}; r) \neq \emptyset$ for every $k, l \in \{0, \dots, q\}$, i.e., The balls $B(x_{i_0}; r), \dots B(x_{i_q}; r)$ pairwise intersect with one another. It is also obvious that $\mathrm{VR}(D; r_1) \subseteq \mathrm{VR}(D; r_2)$ whenever $r_1 \leq r_2$. Hence, the collection $\{\mathrm{VR}(D; r)\}_{r \geq 0}$ of Vietoris-Rips complexes forms a filtration of simplicial complexes. The following relationship between the \v{C}ech complex and the Vietoris-Rips complex indicates that, on a finite subset in an Euclidean space, the filtration of \v{C}ech complexes and that of Vietoris-Rips complexes have essentially the same information.

\begin{proposition} \label{prop: VR lemma}
Let $D = \{x_1, \dots, x_n\}$ be a finite subset of a Euclidean space equipped with the metric induced by the $\ell^2$-norm on it. Then, for every $r \geq 0$, the following holds.
\[
\breve{\mathcal{C}}(D; r) \subseteq \mathrm{VR}(D; r) \subseteq \breve{\mathcal{C}}(D; \sqrt{2}r).
\]
\end{proposition}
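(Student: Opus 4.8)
The plan is to prove the two inclusions separately: the left-hand one is immediate from the definitions, while the right-hand one reduces to a classical fact of Euclidean geometry.

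First I would dispatch $\breve{\mathcal{C}}(D; r) \subseteq \mathrm{VR}(D; r)$. If $\sigma = \{x_{i_0}, \dots, x_{i_q}\} \in \breve{\mathcal{C}}(D;r)$, then by definition there is a common point $y \in \bigcap_{j=0}^q B(x_{i_j}; r)$. In particular $y$ lies in $B(x_{i_k}; r) \cap B(x_{i_l}; r)$ for every pair $k,l$, so all the pairwise intersections are nonempty and $\sigma \in \mathrm{VR}(D; r)$. Nothing beyond unwinding the definitions is needed here.

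The substance is in the reverse inclusion $\mathrm{VR}(D; r) \subseteq \breve{\mathcal{C}}(D; \sqrt{2} r)$. The first step is to translate the combinatorial intersection conditions into metric statements about the vertex set. Using the triangle inequality for one direction and the midpoint for the converse, two open balls $B(x; r)$ and $B(y; r)$ meet if and only if $\|x - y\| < 2r$. Hence $\sigma = \{x_{i_0}, \dots, x_{i_q}\} \in \mathrm{VR}(D; r)$ is equivalent to $\mathrm{diam}\{x_{i_0}, \dots, x_{i_q}\} < 2r$, whereas $\sigma \in \breve{\mathcal{C}}(D; \sqrt{2}\,r)$ holds as soon as the vertex set admits a common point $c$ with $\|c - x_{i_j}\| < \sqrt{2}\,r$ for all $j$, i.e. the set has circumradius $< \sqrt{2}\,r$. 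So the claim reduces to a purely Euclidean statement: a finite point set of diameter $< 2r$ has circumradius $< \sqrt{2}\,r$.

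This is exactly where Jung's theorem enters, and I expect it to be the crux. Jung's theorem states that any set of diameter $D$ in $\mathbb{R}^d$ is contained in a closed ball of radius $R \leq D\sqrt{\tfrac{d}{2(d+1)}}$; since $\tfrac{d}{2(d+1)} < \tfrac12$ for every $d$, this yields the dimension-free bound $R \leq D/\sqrt{2}$. Applying it to the vertex set, whose diameter satisfies $D < 2r$, produces a center $c$ with $\|c - x_{i_j}\| \leq R \leq D/\sqrt{2} < \sqrt{2}\,r$ for all $j$, hence $c \in \bigcap_j B(x_{i_j}; \sqrt{2}\,r)$ and $\sigma \in \breve{\mathcal{C}}(D; \sqrt{2}\,r)$. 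The only care needed is to track the strict inequalities arising from the open-ball convention, which propagate because $D < 2r$ is strict; the rest is bookkeeping. I would invoke Jung's theorem as a cited external input rather than reprove it, since its proof via the minimal enclosing ball and the convex-combination characterization of its supporting points is standard and tangential to the point of this paper.
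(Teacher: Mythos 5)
Your proof is correct. The paper itself does not prove this proposition---it is stated as background in Appendix A.3 and deferred to the cited reference \cite{Edelsbrunner2009}---and your argument (the first inclusion by unwinding definitions, the second by reducing pairwise ball intersections to a diameter bound and invoking Jung's theorem) is exactly the standard proof found there, with the strict inequalities from the open-ball convention handled properly. The only remark worth adding is that Jung's theorem actually yields the sharper dimension-dependent inclusion $\mathrm{VR}(D;r) \subseteq \breve{\mathcal{C}}\bigl(D; \sqrt{2d/(d+1)}\,r\bigr)$, of which the stated $\sqrt{2}$ bound is the dimension-free relaxation.
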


The last way of construction is obtained from a real-valued function defined on a metric space. Let $(\mathcal{X}, d)$ be triangulable a metric space and $f: \mathcal{X} \to \mathbb{R}$ be a real-valued continuous function. For each $r \in \mathbb{R}$, consider the sub-level set $L_r := f^{-1} \big( (-\infty, r] \big)$, which is a subset of $\mathcal{X}$. Notice that $L_{r_1} \subseteq L_{r_2}$ whenever $r_1 \leq r_2$. Moreover, since $\mathcal{X}$ is triangulable, all sub-level sets can be triangulized while respecting the inclusion relationships. Hence, the collection of such triangulizations of the collection $\{L_r\}_{r \in \mathbb{R}}$ of sub-level sets produces a filtration of simplicial complexes.

Before closing this section, we introduce a certain condition on a continuous function $f: \mathcal{X} \to \mathbb{R}$ that ensures that $f$ does not behave too wildly. 

\begin{definition}[Tame functions] \label{def: tame}
Let $(\mathcal{X}, d)$ be a triangulable metric space and $f: \mathcal{X} \to \mathbb{R}$ a real-valued continuous function. Set $X_r$ to be the triangulization of the sub-level set $f^{-1} \big( (-\infty, r] \big)$. Let $\iota_{r_1, r_2}^q : H_q(X_{r_1}) \to H_q(X_{r_2})$ be the group homomorphism induced by the inclusion map $\iota_{r_1, r_2} : X_{r_1} \to X_{r_2}$ for every pair $r_1 \leq r_2$. We call $r \in \mathbb{R}$ a homological critical value if there is no positive number $\epsilon > 0$ for which $\iota_{r - \epsilon, r+ \epsilon}^q$ is an isomorphism for each dimension $q$. The function $f$ is said to be tame if $f$ produces only finitely many homological critical values and all homology groups of all sub-level sets of it have finite rank.
\end{definition}

\section{Proofs of the Main Results} \label{appendix: proofs}

In this part, we present the detailed proofs of the theorems in Section \ref{sec: sensitivity} and Section \ref{sec: ExpMech}. Throughout this section, unless there is no further specification, the symbol $\parallel \cdot \parallel$ denotes the $\ell^2$-norm in the Euclidean space where the data points discussed in each proof live.

\subsection{Proofs of the Results in Section 3} \label{appendix: proofs-section3}

\begin{proof}[Proof of Lemma \ref{lem: Cech sensitivity lower}]
Fix $r > 0$ so that $d_{m-1} < r < d_m$, and let $G(D ; r)$ be the geometric graph with vertex set $\mathcal{X}$ and connecting threshold $r$; i.e., $D$ is the vertex set of $G(D; r)$, and each pair $\{x_i, x_j\}$ of vertices is an edge of it if $d(x_i, x_j) \leq r$. Since $(0, d_m)$ is an element of the diagram $\PCech(D)$, there are at least two connected components $\mathcal{Y}_1$ and $\mathcal{Y}_2$ in $G(D; r)$ which satisfy
\[ \label{proof thm4 1} \tag{A.1.1}
\min_{x_1 \in \mathcal{Y}_1, x_2 \in \mathcal{Y}_2} d(x_1, x_2) = 2d_m.
\]
Let $\mathcal{Y}_1$ and $\mathcal{Y}_2$ be such connected components, and let $x_1 \in \mathcal{Y}_1$ and $x_2 \in \mathcal{Y}_2$ be the points attaining the minimum, i.e., $d(x_1, x_2) = d_m$. Set $D'$ to be the set obtained by adding one point, say $z$, at the mid-point of $x_1$ and $x_2$. It is obvious that the death time of $\mathcal{Y}_1$ (or equivalently, $\mathcal{Y}_2$) is cut in half. Notice that the death times of the other connected components in the filtration of $D$ cannot be bigger by adding the point $z$. Thus, we can write
\[
\PCech(D') = \big\{(0, d_1^{'}), \dots, (0, d_t^{'}), (0, \infty) \} \cup \{(0, d_m/2)\}
\]
with $d_j^{'} \leq d_{m-1}$ for all $j = 1, \dots, t$. Here, the element $(0, d_m/2)$ has multiplicity at least $2$.  

To calculate the bottleneck distance between $\PCech(D)$ and $\PCech(D')$ we have to consider all possible bijections between $\PCech(D)$ and $\PCech(D')$. All such bijections can be classified into three categories. First, $(0, d_m) \in \PCech(D)$ is associated with element $(0, d_j^{'}) \in \PCech(D')$ for some $j \in \{1, \dots, t\}$. Second, $(0, d_m) \in \PCech(D)$ is associated with $(0, d_m/2) \in \PCech(D')$. Third, $(0, d_m) \in \PCech(D)$ is associated with a point in the diagonal line. In the first case, the possible minimum distance concerning $(0, d_m)$ cannot be smaller than $\delta$. In the second case, the distance between $(0, d_m)$ and $(0, d_m/2)$ is obviously $d_m/2$. In the last case, the distance between $(0, d_m)$ and the diagonal line is $d_m/ \sqrt{2}$. Since the bottleneck distance is defined by taking the minimum over all such bijections, the desired result follows. 
\end{proof}

\begin{proof}[Proof of Theorem \ref{thm: Cech sensitivity lower}]
Suppose that $n$ is even. Let $a$ and $b$ be two points in the set $E$ with $|a - b| = \mathrm{diam} E$, and $D$ consist of $n/2$ copies of $a$ and $n/2$ copies of $b$. Let $D'$ be obtained by moving one of $a$s to the mid-point of $a$ and $b$, say $c$. Then, it is obvious that
\[
\PCech(D) = \left\{ \left(0, \mathrm{diam} E/2\right), \left(0, \infty\right) \right\}
\]
and
\[
\PCech(D') = \left\{ \left(0, \mathrm{diam} E/4\right), \left(0, \mathrm{diam} E/4\right), \left(0, \infty\right) \right\}.
\]
This proves that the bottleneck distance between these two diagrams is lower bounded by $\mathrm{diam} E /4$, which implies the desired result. When $n$ is odd, one can take $D$ to have $(n-1)/2$ copies of $a$ and $(n+1)/2$ copies of $b$, and the result does not change.

As for the second result, the proposed upper bound can be established by applying the reverse triangle inequality. To establish the lower bound, notice that for any pair of sets $D$ and $D'$, 
\[
\begin{aligned}
\sup_{\mathcal{P}} \left|v_D(\mathcal{P}) - v_{D'}(\mathcal{P}) \right| & \geq \left|v_D\left(\PCech (D)\right) - v_{D'}\left(\PCech (D)\right) \right| \\
& = \db \left( \PCech(D'), \PCech(D) \right)
\end{aligned}
\]
The supremum of the last expression over all adjacent pairs $D$ and $D'$ is lower bounded by $\mathrm{diam} E/4$ as a consequence of the first result. This completes the proof.
\end{proof}

\begin{proof}[Proof of Theorem \ref{thm: DTM sensitivity upper}]
Let $\delta_D^{(p)}$ and $\delta_{D'}^{(p)}$ be $L^p$-DTM to the empirical distributions of $D$ and $D'$, respectively. By the stability theorem (\ref{exp: bottlneck stability}) and the Wasserstein stability (\ref{exp: Wasserstein stability}) of the DTM, we have
\[ \label{proof thm8 2} \tag{A.1.2}
\db \left(\PDTMq(D), \PDTMq(D')\right) \leq \left\lVert \delta_D^{(p)} - \delta_{D'}^{(p)} \right\rVert{\infty} \leq \frac{1}{m^{1/p}} W_p\left(\hat{\mu}_D, \hat{\mu}_{D'}\right),
\]
where $\hat{\mu}_D$ and $\hat{\mu}_{D'}$ represent the empirical distributions on $D$ and $D'$, respectively. We are going to establish a upper bound of the right-hand side of the inequality (\ref{proof thm8 2}).

Assume that $H(D, D') = 1$. Let $x$ be the element that is in $D$ but not in $D'$, and $z$ be the element that is in $D'$ but not in $D$. Let $\pi$ be the coupling of $\hat{\mu}_D$ and $\hat{\mu}_{D'}$ defined as follows: For every $y \in D$, set
\[
\pi(y, y) = \frac{1}{n},
\]
and
\[
\pi(x, z) = \frac{1}{n}.
\]
It is straightforward to verify that $\pi$ is indeed a coupling of $\hat{\mu}_D$ and $\hat{\mu}_{D'}$. With this $\pi$ we have
\[
\begin{aligned}
\int_{(z_1, z_2) \in \mathbb{R}^d \times \mathbb{R}^d} \parallel z_1 - z_2 \parallel^p d \pi(z_1, z_2)  =  \parallel x - z \parallel^p \frac{1}{n}  \leq \big( \mathrm{diam} E \big)^p \frac{1}{n}
\end{aligned}
\]
By the definition of the Wasserstein distance $W_p$, we have
\[
\begin{aligned}
W_p(\hat{\mu}_D,\hat{\mu}_{D'})^p & = \inf_{\nu} \int_{(z_1, z_2) \in \mathbb{R}^d \times \mathbb{R}^d} \parallel z_1 - z_2 \parallel^p d\nu(z_1, z_2) \\
& \leq \int_{(z_1, z_2) \in \mathbb{R}^d \times \mathbb{R}^d} \parallel z_1 - z_2 \parallel^p d\pi(z_1, z_2)
\end{aligned}
\]
where $\nu$ ranges over all couplings of $P_n$ and $P_n^{'}$. Therefore, we obtain the following:
\[
W_p(\hat{\mu}_D,  \hat{\mu}_{D'}) \leq \frac{\mathrm{diam} E}{n^{1/p}},
\]
which implies the desired result.
\end{proof}

\begin{proof}[Proof of Proposition \ref{prop: DTM sensitivity lower}]
Let $D$ be a data set whose points are split into $50\%$ and $50\%$ at two ends $a, b$ of $E$ respectively. More specifically, $\parallel a - b \parallel = \diam E$ and the set $D$ contains $2/n$ copies of $a$ and $2/n$ copies of $b$. Let $c$ be the mid-point of $a$ and $b$; that is $\parallel a - c \parallel = \parallel b - c\parallel = \diam E/2$. Construct $D'$ by moving one $a$ in $D$ to $c$; namely, $D'$ has $n/2 - 1$ copies of $a$, $n/2$ copies of $b$, and one $c$. Let $\delta_D$ be the $L^1$-DTM to the empirical distribution on $D$ with resolution $m$ and $\delta_{D'}$ likewise. Then, we have
\[
\begin{aligned}
\delta_D(x) = \begin{cases}
0 & \text{ if } x = a, \\
0 & \text{ if } x = b, \\
\diam E/2 & \text{ if } x = c.
\end{cases}
\end{aligned}
\]
On the other hand,
\[
\begin{aligned}
\delta_{D'}(x) = \begin{cases}
0 & \text{ if } x = a, \\
0 & \text{ if } x = b, \\
\frac{k-1}{k} \frac{\diam E}{2} & \text{ if } x = c.
\end{cases}
\end{aligned}
\]
Recall that $k = \lceil mn \rceil$. Notice that any point $x$ on the line segment $\overline{ab}$ satisfies $\delta_D(x) \leq \delta_D(c)$ and $\delta_{D'}(x) \leq \delta_{D'}(c)$. Hence, the $0$th persistence diagram $\PDTMone(D)$ of $D$ is obtained as follows:
\[
\PDTMone(D) = \{ (0, \diam E/2), (0, \infty) \}.
\]
Similarly, $\PDTMone(D')$ is obtained as follows:
\[
\PDTMone(D') = \bigg\{ ( 0, (k-1)\diam E/(2k)), (0, \infty) \bigg\}.
\]
The bottleneck distance between the two diagrams above is calculated as follows:
\[
\begin{aligned}
\db (\PDTMone(D), \PDTMone(D')) = \frac{\diam E}{2} - \bigg( \frac{k-1}{k} \bigg) \frac{\diam E}{2} & = \frac{1}{k} \frac{\diam E}{2}  = \frac{\diam E}{2\lceil mn \rceil }.
\end{aligned}
\]
\end{proof}

\begin{proof}[Proof of Corollary \ref{cor: DTM sensitivity}]
The upper bound is obtained by applying the reverse triangle inequality. 

As for the lower bound, notice that
\[
\begin{aligned}
\sup_{\mathcal{P}} |u_D(\mathcal{P}) - u_{D'}(\mathcal{P})| &\geq |u_D(\mathcal{P}^{\mathrm{DTM}_1}(D)) - u_{D'}(\mathcal{P}^{\mathrm{DTM}_1}(D)) |\\ & = \db \big( \mathcal{P}^{\mathrm{DTM}_1}(D'), \mathcal{P}^{\mathrm{DTM}_1}(D) \big)  \geq \db \big( \PDTMone(D'), \PDTMone(D) \big).
\end{aligned}
\]
The last expression is bounded below by $\mathrm{diam} E/(  2 \lceil mn \rceil)$ as a result of Proposition \ref{prop: DTM sensitivity lower}. 
\end{proof}

\subsection{Proofs of the Results in Section 4} \label{appendix: proofs-section4}

\begin{proof}[Proof of Proposition \ref{prop: privacy error upper}]
Notice that the inequality (\ref{exp: ExpMech-utility}) gives
\[
\mathbb{P} \bigg[ \ | u_D(\mathcal{P}_{\mathrm{DP}}) - u_D(\widetilde{\mathcal{P}}_{\mathrm{OPT}})| \geq \frac{2 \Delta}{\epsilon} (\log |\widetilde{\mathsf{Pers}}_{M,N}^{(\ell + 1)}| + t) \bigg] \leq e^{-t},
\]
for every $t \geq 0$. The reverse triangle inequality yields
\[
\db(\mathcal{P}_{\mathrm{DP}}, \widetilde{\mathcal{P}}_{\mathrm{OPT}}) \geq |u_D(\mathcal{P}_{\mathrm{DP}}) - u_D(\widetilde{\mathcal{P}}_{\mathrm{OPT}})|.
\]
Combining those two yields
\[
\db (\mathcal{P}_{\mathrm{DP}}, \widetilde{\mathcal{P}}_{\mathrm{OPT}}) = O_p \left( \frac{\Delta}{\epsilon} (\ell + 1) \log \left|\widetilde{\mathsf{Pers}}_{M,N}\right| \right). 
\]
Recall that $\Delta$ is chosen to be $\Delta = (\ell + 1) \mathrm{diam} E/(mn)$ and $|\widetilde{\mathsf{Pers}}_{M,N}| = N^M$. Thus, we have
\[
\db (\mathcal{P}_{\mathrm{DP}}, \widetilde{\mathcal{P}}_{\mathrm{OPT}}) = O_p \bigg( \frac{(\ell + 1)^2 M \log N}{n \epsilon} \bigg).
\]

Now, recall that the upper-left triangle $\bar{\mathcal{T}}$ is discretized by $N^2 = N^2(n) = n^2$ equally-spaced points; the length of each spacing is bounded by $C \mathrm{diam} E/ n$ for some constant $C$ that only depends on the chosen Euclidean distance. Hence, with all large enough $n$, the error in approximating $\mathcal{P}_{\mathrm{OPT}}$ by $\widetilde{\mathcal{P}}_{\mathrm{OPT}}$ satisfies
\[
\db (\mathcal{P}_{\mathrm{OPT}}, \widetilde{\mathcal{P}}_{\mathrm{OPT}}) = O_p (n^{-1}),
\]
which completes the proof.
\end{proof}

Theorem \ref{thm: STAT error upper} can be proved by establishing the following result.

\begin{proposition} \label{prop: estimation error}
Let $M = M(n)$ be an non-decreasing sequence of positive integers satisfying $M(n) \geq |\mathcal{P}_q(\mu)|$ for all large enough $n$. Then, for every $q \geq 0$, we have
\[
\db (\mathcal{P}_{q, \mathrm{OPT}}, \mathcal{P}_q(\mu)) = O_p(n^{-1/d}).
\]
Moreover, we also have
\[
\db (\mathcal{P}_{q, \mathrm{OPT}}, \mathcal{P}_q(D)) = O_p(n^{-1/d}).
\]
\end{proposition}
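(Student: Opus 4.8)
The plan is to reduce both estimates to the single quantity $\db(\mathcal{P}_q(\mu), \mathcal{P}_q(D))$ and then control that quantity through the Wasserstein convergence of the empirical measure. The first step exploits the optimality built into the definition of $\mathcal{P}_{q,\mathrm{OPT}}$. Since $M(n) \geq |\mathcal{P}_q(\mu)|$ for all large $n$, the population diagram $\mathcal{P}_q(\mu)$ itself lies in $\mathsf{Pers}_M$ and is therefore an admissible competitor in the minimization defining $\mathcal{P}_{q,\mathrm{OPT}}$. Hence, for all large $n$,
\[
\db(\mathcal{P}_{q,\mathrm{OPT}}, \mathcal{P}_q(D)) = \min_{\mathcal{P} \in \mathsf{Pers}_M} \db(\mathcal{P}, \mathcal{P}_q(D)) \leq \db(\mathcal{P}_q(\mu), \mathcal{P}_q(D)).
\]
This is exactly the second assertion once the right-hand side is bounded, and the first assertion then follows from the triangle inequality,
\[
\db(\mathcal{P}_{q,\mathrm{OPT}}, \mathcal{P}_q(\mu)) \leq \db(\mathcal{P}_{q,\mathrm{OPT}}, \mathcal{P}_q(D)) + \db(\mathcal{P}_q(D), \mathcal{P}_q(\mu)) \leq 2\,\db(\mathcal{P}_q(\mu), \mathcal{P}_q(D)).
\]
So everything reduces to estimating $\db(\mathcal{P}_q(\mu), \mathcal{P}_q(D))$.

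For this I would concatenate the two stability inequalities recalled in Section \ref{sec: prelim}. The diagrams $\mathcal{P}_q(\mu)$ and $\mathcal{P}_q(D)$ arise as sub-level-set persistence diagrams of the DTM functions $\delta^{(1)}_\mu$ and $\delta^{(1)}_{\mu_n}$, where $\mu_n$ is the empirical measure on $D$. After noting that the DTM to a compactly supported measure is $1$-Lipschitz and induces a tame sub-level filtration, so that the hypotheses of the diagram-stability bound (\ref{exp: bottlneck stability}) are met, I would combine (\ref{exp: bottlneck stability}) with the Wasserstein stability of the DTM (\ref{exp: Wasserstein stability}) specialized to $p = 1$ to obtain
\[
\db(\mathcal{P}_q(\mu), \mathcal{P}_q(D)) \leq \| \delta^{(1)}_\mu - \delta^{(1)}_{\mu_n} \|_\infty \leq \frac{1}{m}\, W_1(\mu, \mu_n).
\]

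Finally I would invoke the classical rate of convergence of the empirical measure in the $1$-Wasserstein distance. For $\mu$ supported on a bounded subset of $\mathbb{R}^d$ with $d \geq 3$—the regime in which the stated rate is the governing one—the bounds of Fournier and Guillin give $\EE\, W_1(\mu, \mu_n) \lesssim n^{-1/d}$, so Markov's inequality yields $W_1(\mu, \mu_n) = O_p(n^{-1/d})$. Substituting into the display above and recalling the reductions of the first paragraph proves both estimates. I expect the genuine content to be concentrated entirely in this last step: the bottleneck bound is a mechanical concatenation of two already-stated stability results, whereas the $n^{-1/d}$ rate—together with its dependence on the ambient dimension $d$ (the borderline low-dimensional cases are slower or carry logarithmic corrections and must be handled separately) and the verification that the DTM sub-level filtration is tame—is where the real work and the dimension-dependent sharpness reside.
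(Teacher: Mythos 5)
Your proposal matches the paper's own proof essentially step for step: the admissibility of $\mathcal{P}_q(\mu)$ in the minimization defining $\mathcal{P}_{q,\mathrm{OPT}}$, the triangle-inequality reduction to $2\,\db(\mathcal{P}_q(\mu),\mathcal{P}_q(D))$, and the control of that quantity by chaining the stability bound (\ref{exp: bottlneck stability}) with the Wasserstein stability (\ref{exp: Wasserstein stability}) and the Fournier--Guillin empirical Wasserstein rate. If anything, you are more careful than the paper, which silently ignores the tameness hypothesis and the logarithmic/low-dimensional corrections to the $n^{-1/d}$ rate when $d \leq 2$ that you correctly flag.
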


\begin{proof}[Proof of Proposition \ref{prop: estimation error}]
With large enough $n$, we can assume that $M \geq |\mathcal{P}_q (\mu)|$. In other words, $\mathcal{P}_q(\mu)$ belongs to the space of persistence diagrams having at most $M$ elements. Hence, by the definition of $\mathcal{P}_{q, \mathrm{OPT}}$,
\[
\db (\mathcal{P}_q(D), \mathcal{P}_{q, \mathrm{OPT}}) \leq \db (\mathcal{P}_q(D), \mathcal{P}_q(\mu)).
\]

As for $\db (\mathcal{P}_q (\mu), \mathcal{P}_{q, \mathrm{OPT}})$, the triangle inequality gives
\[
\begin{aligned}
\db (\mathcal{P}_q (\mu), \mathcal{P}_{q, \mathrm{OPT}}) & \leq \db (\mathcal{P}_q (\mu), \mathcal{P}_q(D)) + \db (\mathcal{P}_q(D), \mathcal{P}_{q, \mathrm{OPT}}) \\
& \leq 2 \db ( \mathcal{P}_q(\mu), \mathcal{P}_q(D)).
\end{aligned}
\]

According to Theorem 2 in \cite{Fournier2015} along with the stability theorem (\ref{exp: bottlneck stability}) of the bottleneck distance and the Wasserstein stability (\ref{exp: Wasserstein stability}) of the DTM, it is straightforward to deduce that
\[
\db (\mathcal{P}_q(\mu), \mathcal{P}_q(D)) = O_p(n^{-1/d})
\]
for every $q \geq 0$.
\end{proof}

\begin{proof}[Proof of Theorem \ref{thm: STAT error upper}]
According to Proposition \ref{prop: estimation error}, we have
\[
\db (\mathcal{P}_{q, \mathrm{OPT}}, \mathcal{P}_q(\mu)) = O_p(n^{-1/d})
\]
for every $q \geq 0$. This estimate, together with the estimate given in Proposition \ref{prop: privacy error upper}, establishes the desired result.
\end{proof}

The proof of Theorem \ref{thm: CS error lower} is achieved by establishing the following three lemmas. The first lemma is rather technical.

\begin{lemma} \label{lemma: proof of DTM lower 1}
Set $\mathcal{P} = \PDTMone$. Assume that the resolution $m$ of the DTM is chosen to satisfy $m \leq 1/2$. For any pair of positive integers $K$ and $n$ with $1 \leq K \leq n$, there exists a pair of data sets $X_n$ and $Y_n$ satisfying $|X_n| = |Y_n| = n$, $H(X_n, Y_n) = K$, and $\db (\mathcal{P}(X_n), \mathcal{P}(Y_n)) \geq \frac{C K}{n}$ for some constant $C$ independent of $K$ and $n$, where $H(X_n, Y_n)$ denotes the Hamming distance between $X_n$ and $Y_n$.
\end{lemma}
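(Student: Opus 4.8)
The plan is to generalize the single-point construction of Proposition \ref{prop: DTM sensitivity lower} to a $K$-point perturbation. Fix $a,b\in E$ with $\|a-b\|=\mathrm{diam} E$ and let $c$ be their midpoint. Take $X_n$ to consist of $n/2$ copies of $a$ and $n/2$ copies of $b$ (for odd $n$, split as evenly as possible), and obtain $Y_n$ by relocating exactly $K$ of the copies of $a$ to $c$. Then $|X_n|=|Y_n|=n$ and $H(X_n,Y_n)=K$ by construction, so the task reduces to bounding $\db(\mathcal{P}(X_n),\mathcal{P}(Y_n))$ from below.

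First I would compute the two $0$th diagrams, working along the segment $\overline{ab}$, which carries the only nontrivial $0$-dimensional feature. For $X_n$, since $m\le 1/2$ forces $k=\lceil mn\rceil\le n/2$, each cluster centre has at least $k$ coincident points, so $\delta_{X_n}$ vanishes at $a$ and $b$, rises, and attains its maximum $\mathrm{diam} E/2$ at $c$; hence $\mathcal{P}(X_n)=\{(0,\mathrm{diam} E/2),(0,\infty)\}$, exactly as in Proposition \ref{prop: DTM sensitivity lower}. For $Y_n$ the $K$ relocated points depress the profile near $c$: the $k$-nearest-neighbour average gives $\delta_{Y_n}(c)=\frac{k-K}{k}\cdot\frac{\mathrm{diam} E}{2}$ whenever $K<k$. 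I would then check that for $K$ small enough that each cluster still retains $\ge k$ coincident points and $c$ remains the unique mountain pass (which holds once $K<k/2$ and $n/2-K\ge k$, hence for all $K=o(n)$ and all large $n$), the profile still has exactly two basins merging at $c$, so $\mathcal{P}(Y_n)=\{(0,\tfrac{k-K}{k}\tfrac{\mathrm{diam} E}{2}),(0,\infty)\}$.

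The bottleneck distance then follows from the optimal matching: the two essential classes $(0,\infty)$ pair off, and the two finite points are matched to each other at $\ell_\infty$-cost $\frac{K}{k}\cdot\frac{\mathrm{diam} E}{2}$, which for $K<k/2$ is strictly cheaper than sending either to the diagonal (cost $\ge\mathrm{diam} E/4$). Thus $\db(\mathcal{P}(X_n),\mathcal{P}(Y_n))=\frac{K}{k}\cdot\frac{\mathrm{diam} E}{2}$, and since $k=\lceil mn\rceil\le n$ this is at least $\frac{\mathrm{diam} E}{2}\cdot\frac{K}{n}$, giving the claim with $C=\mathrm{diam} E/2$ in this regime.

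Finally I would dispatch the complementary regime where $K$ is a constant fraction of $n$ (which also requires moving some $b$-points once $K>n/2$). Here $K/n$ is bounded below by an absolute constant, so it suffices to exhibit, at Hamming distance $K$, a pair whose diagrams differ by a positive constant in bottleneck distance; a routine variant of the same two-cluster construction, collapsing the merging death time by a constant as in Theorem \ref{thm: Cech sensitivity lower}, yields $\db\ge c_0>0$, and since $K/n\le 1$ this already gives $\db\ge c_0\ge c_0\cdot\frac{K}{n}$. Taking $C=\min\{\mathrm{diam} E/2,\,c_0\}$ then covers all $1\le K\le n$. The main obstacle is the persistence-diagram bookkeeping for $Y_n$: I must track the merge tree of the piecewise-linear DTM profile and pin down the thresholds at which $c$ switches between a single pass and a genuine third basin, uniformly in $m$, $K$, and $n$. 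The saving grace is that the delicate transitional regime $K\asymp k\asymp mn$ has $K/n$ bounded below, so only a constant lower bound on $\db$ is needed there, which is robust to the exact shape of the diagram.
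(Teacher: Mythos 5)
Your construction and two--regime strategy are exactly those of the paper's own proof: two clusters of $n/2$ points at antipodal points $a,b$ of $E$, with $Y_n$ obtained by moving $K$ copies of $a$ to the midpoint $c$; an exact diagram computation giving $\db(\mathcal{P}(X_n),\mathcal{P}(Y_n)) = \frac{K}{k}\cdot\frac{\mathrm{diam} E}{2}$ when $K$ is small relative to both $k/2$ and $n/2-k$; and a constant lower bound in the remaining regime, which suffices because $K/n\le 1$. For $m<1/2$ strictly, this is correct and reproduces the paper's case analysis.

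There is, however, a genuine gap at the boundary value $m=1/2$, which the lemma's hypothesis ($m\le 1/2$) permits. When $m=1/2$ we have $k=\lceil n/2\rceil$, so your clean-regime condition $n/2-K\ge k$ is satisfied by no $K\ge 1$: the clean regime is empty, and every $K$, including $K=1$, falls into your complementary regime. There your claimed ``saving grace'' --- that $K/n$ is bounded below in that regime --- is false, and, worse, the strategy of exhibiting a \emph{constant} lower bound on $\db$ at Hamming distance $K$ is not merely unproven but impossible: by Theorem \ref{thm: DTM sensitivity upper} (with $p=1$) and the triangle inequality along a chain of $K$ single-point swaps, every pair at Hamming distance $K$ satisfies $\db \le K\,\mathrm{diam} E/(mn)$, which tends to $0$ for fixed $K$. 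So for $m=1/2$ and small $K$ your outline proves nothing. The missing piece is the regime $n/2-k < K < k$, in which \emph{both} the cluster at $a$ and the cluster at $c$ retain fewer than $k$ points; there one must compute the DTM profile directly: $\delta_{Y_n}(a) = \frac{k-n/2+K}{k}\cdot\frac{\mathrm{diam} E}{2}$, $\delta_{Y_n}(c)=\frac{k-K}{k}\cdot\frac{\mathrm{diam} E}{2}$, $\delta_{Y_n}(b)=0$, so that for $K<k/2$ the finite point of $\mathcal{P}(Y_n)$ is $\bigl(\delta_{Y_n}(a),\delta_{Y_n}(c)\bigr)$ --- born strictly after $0$ --- and matching it against $(0,\mathrm{diam} E/2)\in\mathcal{P}(X_n)$ still yields $\db \geq \frac{K}{k}\cdot\frac{\mathrm{diam} E}{2} \gtrsim K/n$. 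This is precisely the sub-case the paper treats separately (its case $k>n/2-k$). Your proof needs that computation; otherwise the lemma's hypothesis must be weakened to $m<1/2$.
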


\begin{proof}

Recall that $k = \lceil mn \rceil$. The whole situation will be broken down into three cases: (i) $1 \leq K \leq \min\{k, n/2 - k\}$, (ii) $\min\{k, n/2 - k\} < K < \max\{k, n/2 - k\}$, and (iii) $K \geq \max\{k, n/2 - k\}$.

First, let us assume that $1 \leq K \leq \min\{k, n/2 - k\}$. Choose two points $a$ and $b$ satisfying $\parallel a - b \parallel = \diam E$; for instance, in the case of $E = [0, 1]^d$, one may choose $a = (0, \dots, 0)$ and $b = (1, \dots, 1)$. Choose the data set $X_n$ that consists of $n/2$ copies of $a$ and $n/2$ copies of $b$ (If $n$ is odd, take $(n-1)/2$ copies of $a$ and $(n+1)/2$ copies of $b$; the results will be the same). On the other hand, choose the data set $Y_n$ constructed by moving $K$ copies of $a$ to the mid-point of $a$ and $b$, say $c$, i.e., as multisets, $X_n$ and $Y_n$ can be expressed as follows:
\[
X_n = \big\{\left(a, n/2\right), \left( b, n/2 \right) \big\} \text{ and } Y_n = \big\{ \left(a, n/2 - K\right), \left(c, K\right), \left( b, n/2 \right) \big\}.
\]
Since $M \leqslant n/2 - k$, the point $a$ still has more than $k$ numbers of points in the data set $Y_n$. Thus, we have
\[
\begin{aligned}
\delta_{Y_n}(x) = \begin{cases}
0 & \text{ if } x = a, \\
\frac{k-K}{k} \frac{\diam E}{2} & \text{ if } x = c, \\
0 & \text{ if } x = b.
\end{cases}
\end{aligned}
\]
Let $x(t)$ be the point in the line segment $\overline{ac}$ that divides $\overline{ac}$ into the ratio $t : (1-t)$ with $t \in [0, 1]$. Then, we have
\[
\begin{aligned}
\delta_{Y_n}(x(t)) = \begin{cases}
t \frac{\diam E}{2} & \text{ if } 0 \leq t \leq 1/2, \\
\frac{(k-K)t \diam E / 2 + K(1-t) \diam E/2}{k} & \text{ if } 1/2 < t \leq 1.
\end{cases}
\end{aligned}
\]

Now, let us further decompose the situation into two cases: (i-1) $(k - 2K) \geq 0 \Leftrightarrow K \leq k/2$ and (i-2) $(k - 2K) < 0 \Leftrightarrow K > k/2$. In the case (i-1), $\delta_{Y_n}(x(t))$ is increasing in $t$. Hence, $\mathcal{P}(Y_n)$ is obtained to be
\[
\mathcal{P}(Y_n) = \{(0, \infty), (0, \delta_{Y_n}(c)) \}.
\]
Notice that $\mathcal{P}(X_n)$ is obtained to be
\[
\mathcal{P}(X_n) = \{(0, \infty), (0, \diam E/2)\}.
\]
Therefore, 
\[
\begin{aligned}
\db (\mathcal{P}(X_n), \mathcal{P}(Y_n)) = \frac{\diam E}{2} - \delta_{Y_n}(c) & = \frac{\diam E}{2} - \left( \frac{k-K}{k} \right) \frac{\diam E}{2} \\
& = \frac{K}{k} \frac{\diam E}{2} \\
& = \frac{\diam E}{2 m} \frac{K}{n}.
\end{aligned}
\]

In the case (i-2), $\delta_{Y_n}((x(t))$ decreases from $t = 1/2$ to $t = 1$. Thus, $\mathcal{P}(Y_n)$ is given to be
\[
\mathcal{P}(Y_n) = \{(0, \infty), (\delta_{Y_n}(c), \diam E/4), (0, \diam E / 4)\}. 
\]
the bottleneck distance between $\mathcal{P}(X_n)$ and $\mathcal{P}(Y_n)$ can be derived by comparing the two different scenarios. First case corresponds $(0, \diam /4)$ in $\mathcal{P}(Y_n)$ to $(0, \diam E /2)$ in $\mathcal{P}(X_n)$. The distance obtained from this case must be greater than or equal to $\diam E/4$. The other case corresponds $(0, \diam E/2)$ in $\mathcal{P}(X_n)$ to $(\delta_{Y_n}(c), \diam E/4)$ in $\mathcal{P}(Y_n)$. Consequently, $(0, \diam E/4)$ in $\mathcal{P}(Y_n)$ must correspond to a point in the diagonal. Thus, the distance obtained in this case must be greater than or equal to $\diam E / (4 \sqrt{2})$. Therefore,
\[
\db (\mathcal{P}(X_n), \mathcal{P}(Y_n)) \geq \frac{\diam E}{4 \sqrt{2} }.
\]

Now, let us turn our attention to the case (ii), which assumes that $\min\{k , n/2 - k\} < K < \max\{k, n/2 - k\}$. First, consider the case $k < n/2 - k$, so that $k < K < n/2 - k$. In this case, both $a$ and $c$ have at least $k$ points, so
\[
\delta_{Y_n}(x) = 0 \text{ for all } x = a, b, \text{and } c.
\]
The above result gives us 
\[
\mathcal{P}(Y_n) = \{(0, \infty), (0, \diam/4), (0, \diam E/4)\}.
\]
Thus,
\[
\db (\mathcal{P}(X_n), \mathcal{P}(Y_n)) = \frac{\diam E}{4 \sqrt{2}}.
\]

Second, consider the case $k > n/2 - k$, so that $n/2 - k < K < k$. In this case both $a$ and $c$ have less than $k$ points. Thus,
\[
\begin{aligned}
\delta_{Y_n}(x) = \begin{cases} \frac{k - n/2 + K}{k} \frac{\diam E}{2} & \text{ if } x = a, \\
\frac{k - K}{k} \frac{\diam E}{2} & \text{ if } x = c, \\
0 & \text{ if } x = b
\end{cases}.
\end{aligned}
\]
Using the similar argument we utilized in the case (i), it is possible to demonstrate that the desired result is true in this case too. 

Finally, let us consider the case (iii) where $K \geq \max\{k, n/2 - k\}$. In this case, $\mathcal{P}(Y_n)$ has at least one element $(0, \diam/4)$. Hence its bottleneck distance from $\mathcal{P}(X_n)$ is always greater than or equal to $\diam E/4$. This completes the proof of the lemma. 

\end{proof}

The next two lemmas address the concept of DP in terms of a hypothesis testing framework. Lemma \ref{lemma: proof of DTM lower 2} is a well-known folklore result in the DP literature. It can be easily derived using the $f$-DP framework \citep{dong2022gaussian}. We give a direct proof that does not require using $f$-DP.

\begin{lemma} \label{lemma: proof of DTM lower 2}
Let $X$ and $X'$ be adjacent data sets, and $\mathcal{M}$ be any $\epsilon$-DP mechanism. For a hypothesis test $H_0: {\cal M}(X)$ versus $H_1: {\cal M}(X')$, 
\[
\emph{Type I error} + \emph{Type II error} \geq \frac{2}{1+e^{\ep}}.
\]
\end{lemma}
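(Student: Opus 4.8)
The plan is to reduce the statement to two applications of the defining $\ep$-DP inequality followed by simply adding them. First I would model the test $\mathcal{M}$ as a randomized mechanism with output in $\{0,1\}$, where output $1$ denotes rejecting $H_0$ and output $0$ denotes accepting it. With this convention, the Type I error is $\alpha := \mathbb{P}(\mathcal{M}(X) = 1)$ and the Type II error is $\beta := \mathbb{P}(\mathcal{M}(X') = 0)$, so that $\mathbb{P}(\mathcal{M}(X) = 0) = 1 - \alpha$ and $\mathbb{P}(\mathcal{M}(X') = 1) = 1 - \beta$.

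Next, since $X$ and $X'$ are adjacent and adjacency is a symmetric relation, I would invoke the $\ep$-DP inequality $\mathbb{P}(\mathcal{M}(D) \in S) \leq e^{\ep}\,\mathbb{P}(\mathcal{M}(D') \in S)$ on the two singleton events $S = \{0\}$ and $S = \{1\}$, selecting the orientation of the adjacent pair in each case so as to bound the two error types from below. Applying it to $S = \{0\}$ with $(D, D') = (X, X')$ gives $1 - \alpha \leq e^{\ep}\beta$, and applying it to $S = \{1\}$ with $(D, D') = (X', X)$ gives $1 - \beta \leq e^{\ep}\alpha$.

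Finally, I would add these two inequalities to get $2 - (\alpha + \beta) \leq e^{\ep}(\alpha + \beta)$, and solving for the sum yields $\alpha + \beta \geq \frac{2}{1 + e^{\ep}}$, which is exactly the claim.

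There is essentially no computational obstacle here; the only point requiring genuine care is the bookkeeping in the second step, namely pairing each error type with the correct tail event $\{0\}$ or $\{1\}$ and the correct orientation of the adjacent pair, so that both resulting inequalities point in the direction that lower-bounds $\alpha + \beta$ once summed. A sanity check at $\ep = 0$, where the bound reads $\alpha + \beta \geq 1$, confirms the orientation is right: perfect privacy forces the two output distributions to coincide, so no test can separate $X$ from $X'$ and the errors must sum to at least one.
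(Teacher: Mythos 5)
Your proof is correct, and its core --- applying the $\ep$-DP inequality in both orientations across the adjacent pair and then summing the two resulting inequalities --- is exactly the paper's argument; the algebra $2 - (\alpha+\beta) \leq e^{\ep}(\alpha+\beta)$ is identical. The only difference is in formalization: the paper proves the bound for an arbitrary randomized test $\phi:\mathcal{Y}\to[0,1]$ applied to the output of a DP mechanism over a general output space, which forces it to invoke a dominating measure $\nu$ and densities satisfying $f_X \leq e^{\ep} f_{X'}$ (citing a proposition of Awan and Slavkovi\'c) and to integrate that pointwise inequality against $\phi$; you instead take the lemma's statement at face value --- the test itself is an $\ep$-DP mechanism with output in $\{0,1\}$ --- and apply the DP inequality directly to the two singleton events. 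Your more elementary route loses nothing for the paper's purposes: in the downstream application (Lemma \ref{lemma: proof of DTM lower 3}), the test is the indicator $\phi(\mathcal{M}(\cdot)) = I(\mathcal{M}(\cdot) \in S_Y)$ of a measurable set, so the DP inequality applies directly to the event $\{\mathcal{M}(\cdot) \in S_Y\}$ and the binary formulation suffices; the paper's density machinery would only be needed if one wanted the bound for genuinely randomized post-processing tests taking values strictly inside $(0,1)$.
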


\begin{proof}
Call $\mathcal Y$ the probability space that $\mathcal M(X)$ lives in. Call $\mu_X$  the probability measures on $\mathcal Y$ for $\mathcal M(X)$.  By \citet[Proposition 2.3]{awan2019benefits}, there exists a base measure $\nu$, which dominates $\mu_X$ for all databases $X$. Call $f_X$ the density of $\mu_X$ with respect to $\nu$, which by \citet[Proposition 2.3]{awan2019benefits} satisfies $f_X\leq e^\ep f_{X'}$ almost everywhere $\nu$, for adjacent databases $X$ and $X'$. 

Let $X$ and $X'$ be adjacent databases, and let $\phi:\mathcal Y\rightarrow [0,1]$ be a test. Then the type I and type II errors are $\mathrm{I}=\EE \phi(\mathcal M(X))$ and $\mathrm{II}=1-\EE\phi(\mathcal M(X'))$, respectively. Then 
\[
\begin{aligned}
\mathrm{I} = \EE\phi(\mathcal M(X)) = \int \phi(t) f_X(t)  d\nu & \geq e^{-\ep} \int \phi(t) f_{X'}(t)  d\nu \\
& = e^{-\ep}\EE\phi(\mathcal M(X'))\\
& = e^{-\ep}(1- \mathrm{II}).
\end{aligned}
\]
Repeating the argument using $\phi'=1-\phi$ and swapping the roles of $X$ and $X'$, we get 
\[
\mathrm{II} \geq e^{-\ep}(1-\mathrm{I}).
\]
Then,
\[
\mathrm{I} + \mathrm{II} \geq e^{-\ep}[2-(\mathrm{I} + \mathrm{II})],
\]
which implies that $\mathrm{I} + \mathrm{II} \geq \frac{2}{1+e^{\ep}}$.

\end{proof}

\begin{lemma} \label{lemma: proof of DTM lower 3}
Let $(\epsilon_n)_{n=1}^{\infty}$ be a sequence of positive numbers satisfying $1/n \leq \epsilon_n \leq 1$ for every $n$. Set $K_n = \lfloor 1/\epsilon_n \rfloor$. For each $n$, For a given sequence of positive numbers $(\Delta_n)_{n=1}^{\infty}$, let $\{(X_n,Y_n)\}_{n=1}^{\infty}$ be a sequence of finite data sets satisfying, for each $n$, $H(X_n,Y_n) = K_n$ and $\db (\mathcal{P}(X_n),\mathcal{P}(Y_n)) \geq K_n \Delta_n$. Here, $H$ denotes the Hamming distance between sets and $\mathcal{P}$ means an arbitrary persistence diagram. Then for any $\epsilon(n)$-DP mechanism $\mathcal{M}$ that produces a privatized persistence diagram, it is not possible for both $\db (P_{X_n},\mathcal{M}(X_n))=o_p(\Delta_n/\epsilon_n)$ and $\db (P_{Y_n},\mathcal{M}(Y_n))=o_p(\Delta_n/\epsilon_n)$. 
\end{lemma}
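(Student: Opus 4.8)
The plan is to argue by contradiction using the hypothesis-testing characterization of differential privacy established in Lemma~\ref{lemma: proof of DTM lower 2}. Interpreting $P_{X_n}$ as $\mathcal{P}(X_n)$, suppose some $\epsilon_n$-DP mechanism $\mathcal{M}$ simultaneously achieves $\db(\mathcal{P}(X_n), \mathcal{M}(X_n)) = o_p(\Delta_n/\epsilon_n)$ and $\db(\mathcal{P}(Y_n), \mathcal{M}(Y_n)) = o_p(\Delta_n/\epsilon_n)$. The heuristic is that if $\mathcal{M}$ concentrates this tightly around each true diagram, then, since $\mathcal{P}(X_n)$ and $\mathcal{P}(Y_n)$ are separated by at least $K_n\Delta_n$, a distance of strictly larger order than the target rate $\Delta_n/\epsilon_n$, the output $\mathcal{M}(D)$ can be used to decide almost perfectly whether the input is $X_n$ or $Y_n$, which contradicts the privacy guarantee enforced by group privacy.

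First I would record two elementary facts about $K_n = \lfloor 1/\epsilon_n \rfloor$ that follow from the standing hypothesis $1/n \leq \epsilon_n \leq 1$. Because $1/\epsilon_n \geq 1$ and $\lfloor x \rfloor \geq x/2$ for all $x \geq 1$, one has the two-sided bound $\tfrac{1}{2} \leq K_n \epsilon_n \leq 1$. Consequently the separation obeys $K_n \Delta_n \geq \Delta_n/(2\epsilon_n)$, so it is of exact order $\Delta_n/\epsilon_n$; and, by the group-privacy property of $\epsilon_n$-DP, the mechanism $\mathcal{M}$ is $K_n\epsilon_n$-DP for the pair $(X_n, Y_n)$ at Hamming distance $K_n$, with privacy parameter $K_n\epsilon_n \leq 1$ bounded away from infinity.

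Next I would build the test. Define the bottleneck ball $S_n := \{\mathcal{P} : \db(\mathcal{P}, \mathcal{P}(X_n)) < K_n\Delta_n/2\}$ and the test $\phi_n := \mathbf{1}[\mathcal{M}(\cdot) \notin S_n]$ for $H_0 : X_n$ versus $H_1 : Y_n$. Using $K_n\Delta_n/2 \geq \Delta_n/(4\epsilon_n)$, the assumption $\db(\mathcal{P}(X_n), \mathcal{M}(X_n)) = o_p(\Delta_n/\epsilon_n)$ gives Type~I error $\mathbb{P}[\mathcal{M}(X_n) \notin S_n] \to 0$ directly. For the Type~II error I would use the triangle inequality together with the separation $\db(\mathcal{P}(X_n), \mathcal{P}(Y_n)) \geq K_n\Delta_n$: membership $\mathcal{M}(Y_n) \in S_n$ forces $\db(\mathcal{M}(Y_n), \mathcal{P}(Y_n)) > K_n\Delta_n/2 \geq \Delta_n/(4\epsilon_n)$, an event of vanishing probability under $\db(\mathcal{P}(Y_n), \mathcal{M}(Y_n)) = o_p(\Delta_n/\epsilon_n)$; hence $\mathbb{P}[\mathcal{M}(Y_n) \in S_n] \to 0$. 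Therefore the sum of the two error types tends to $0$.

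Finally I would invoke Lemma~\ref{lemma: proof of DTM lower 2} with privacy parameter $K_n\epsilon_n$ applied to the $K_n\epsilon_n$-DP pair $(X_n, Y_n)$, yielding Type~I $+$ Type~II $\geq \frac{2}{1+e^{K_n\epsilon_n}} \geq \frac{2}{1+e} > 0$, in direct contradiction with the preceding paragraph. The main obstacle, and the reason the precise choice $K_n = \lfloor 1/\epsilon_n \rfloor$ is forced, is the simultaneous control of two competing quantities: the group-privacy parameter $K_n\epsilon_n$ must stay bounded (here by $1$) so that the testing lower bound remains a positive constant, yet the separation $K_n\Delta_n$ must remain at least a constant multiple of the target rate $\Delta_n/\epsilon_n$. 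Both requirements are exactly the two-sided inequality $\tfrac{1}{2} \leq K_n\epsilon_n \leq 1$, which is precisely what the hypotheses $\epsilon_n \leq 1$ and $K_n = \lfloor 1/\epsilon_n \rfloor$ secure.
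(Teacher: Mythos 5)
Your proof is correct and follows essentially the same route as the paper's: invoke group privacy to treat $(X_n, Y_n)$ as a $K_n\epsilon_n$-DP pair with $K_n\epsilon_n \leq 1$, build a hypothesis test whose acceptance region is a bottleneck ball of radius $K_n\Delta_n/2$, and apply the DP testing bound of Lemma \ref{lemma: proof of DTM lower 2} to contradict the assumed $o_p(\Delta_n/\epsilon_n)$ concentration. Your explicit verification of the two-sided bound $\tfrac{1}{2} \leq K_n\epsilon_n \leq 1$ is a nice touch that the paper leaves implicit, but the argument is the same.
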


\begin{proof}
For simplicity of notation, we will suppress the dependence of $X$, $Y$, $\ep$, $\Delta$, and $K$ on $n$.  We will construct a hypothesis test for $H_0:\mathcal M(X)$ versus $H_1:\mathcal{M}(Y)$. Note that since $\mathcal{M}$ is $\ep$-DP for groups of size 1, it is $K\ep$-DP for groups of size $K$ \citep[Theorem 2.2]{Dwork2014}.

Define the sets $S_X$ and $S_Y$ as follows:
\[
\begin{aligned}
S_X &= \{\mathcal{P} \mid \db (\mathcal{P},\mathcal{P}(X)) < K \Delta/2\}\\
S_Y &= \{\mathcal{P} \mid \db (\mathcal{P},\mathcal{P}(Y)) < K \Delta/2\}\\
\end{aligned}
\]
and define our test to be $\phi(\mathcal{M}(\cdot)) = I(\mathcal{M}(\cdot) \in S_Y)$, which is the indicator function on the event $\mathcal{M}(\cdot) \in S_Y$. Then 
\[
\begin{aligned}
\text{Type I error }&= \mathbb{P}(\mathcal{M}(X) \in  S_Y) \leq \mathbb{P}( \mathcal{M}(X) \not\in S_X)\\
\text{Type II error} &= \mathbb{P} (\mathcal{M}(Y) \not\in S_Y).
\end{aligned}
\]
As a result of Lemma \ref{lemma: proof of DTM lower 2}, we have that 
\[
\mathbb{P}( \mathcal{M}(X) \not\in S_X) + \mathbb{P}( \mathcal{M}(X') \not\in S_Y) \geq \frac{2}{1+e^{k\ep}}\geq \frac{2}{1+e},
\]
since $k\ep\leq 1$, which implies that either 
\[
\mathbb{P}(\db (\mathcal{M}(X), \mathcal{P}(X)) \geq K\Delta/2) \geq \frac{1}{1+e},
\]
or 
\[
\mathbb{P}( \db (\mathcal{M}(Y),\mathcal{P}(Y)) \geq K \Delta/2) \geq \frac{1}{1+e}.
\]
This rules out the possibility that both are $o_p(K\Delta)\leq o_p(\Delta/\ep)$.
\end{proof}

\begin{proof}[Proof of Theorem \ref{thm: CS error lower}]
According to Lemma \ref{lemma: proof of DTM lower 1}, it is guaranteed that the $0$th persistence diagram of the $L^1$-DTM filtration $\PDTMone$ satisfies the conditions stated in Lemma \ref{lemma: proof of DTM lower 3} with $\Delta_n = \frac{C}{mn}$ for some constant $C$ independent of $n$. For brevity, set $\mathcal{P} = \PDTMone$. Then, for a sequence $(X_n, Y_n)$ of data sets satisfying the condition, Lemma \ref{lemma: proof of DTM lower 3} tells us that either 
\[
\mathbb{P}( \db (\mathcal{M}(X_n), \mathcal{P}(X_n)) \geq CK/n) \geq \frac{1}{1+e}
\]
or 
\[
\mathbb{P}( \db (\mathcal{M}(Y_n), \mathcal{P}(Y_n)) \geq CK/n) \geq \frac{1}{1+e}
\]
holds, which rules out the possibility that they are $o_p(K/n)\leq o_p(1/(n\ep))$. This completes the proof.
\end{proof}

\section{Supplements of the Simulation and the Real Data Analysis}
\label{appendix: implementation}

\subsection{More Detailed Description of Our Algorithm}
\label{appendix: supplement-alg}

Here, the algorithm of our privacy mechanism, which is introduced in Section \ref{sec: ExpMech}, is explained in detail. For a given data set $D$, Let $S$ denote the maximum value of the $L^1$-DTM function on the data set and $M$ a specified positive integer. 

To get an initial diagram $\mathcal{P}_{\mathrm{DP}}^{(0)} = \big(\mathcal{P}_{0, \mathrm{DP}}^{(0)}, \mathcal{P}_{1, \mathrm{DP}}^{(0)}\big)$, generate independently and identically distributed sample $x_1, \dots, x_M, z_1, \dots, z_M$ from the uniform distribution on the closed interval $[0, S]$, i.e.,  $x_1, \dots, x_n, z_1, \dots, z_M \overset{i.i.d.}{\sim} \mathrm{Unif}[0, S]$, symbolically. For each $i \in \{1, \dots, M\}$, set
\[
y_i = x_i + (1 - x_i) z_i;
\]
the diagram $\mathcal{P}_{0, \mathrm{DP}}^{(0)}$ is constructed as follows:
\[
\mathcal{P}_{0, \mathrm{DP}}^{(0)} = \{(x_1, y_1), \dots, (x_M, y_M)\}.
\]
The other initial diagram $\mathcal{P}_{1, \mathrm{DP}}^{(0)}$ is generated in the same way independent of $\mathcal{P}_{0, \mathrm{DP}}^{(0)}$. Notice that each initial diagram consists of $M$ points uniformly distributed on the upper-left triangle $\{(x, y) \: : \: x, y \in [0, S] \text{ and } y \geqslant x\}$.

For generated $\mathcal{P}_{\mathrm{DP}}^{(t)} = (\mathcal{P}_{0, \mathrm{DP}}^{(t)}, \mathcal{P}_{1, \mathrm{DP}}^{(t)}$), the next candidate $\mathcal{P}_{\mathrm{DP}}^{(t+1)}$ by adding Gaussian noise to each element in each diagram in $t$th step. To be precise, write 
\[
\mathcal{P}_{0, \mathrm{DP}}^{(t)} = \{(x_1^{(t)}, y_1^{(t)}), \dots, (x_M^{(t)}, y_M^{(t)}) \}.
\]
Generate i.i.d. sample $Z_1^{(t)}, \dots, Z_M^{(t)}$ from the $2$-dimensional Gaussian distribution with mean $(0, 0)$ and covariance matrix $\sigma^2 I_2$, where $\sigma$ is a pre-specified positive number and $I_2$ is the $2$ by $2$ identity matrix. Set
\[
\mathcal{P}_0^{'} = \{(x_1^{(t)}, y_1^{(t)}) + Z_1^{(t)}, \dots, (x_M^{(t)}, y_M^{(t)}) + Z_M^{(t)} \}
\]
$\mathcal{P}_1^{'}$ is constructed in the same with independent of $\mathcal{P}_0^{'}$, and set $\mathcal{P}^{'} = (\mathcal{P}_0^{'}, \mathcal{P}_1^{'})$. Then, calculate the accept/reject probability in the Metropolis-Hastings sampler $p$ defined in terms of the bottleneck distance:
\[
p = \min \bigg\{0, - \frac{\epsilon}{2 \Delta} \big( u_D(\mathcal{P}_{\mathrm{DP}}^{(t)}) - u_D(\mathcal{P}^{'}) \big) \bigg\},
\]
where 
\[
\Delta = \frac{2 \sqrt{2}}{mn}.
\]
Generate $U \sim \mathrm{Unif}(0, 1)$. If $\log U \leq p$, take $\mathcal{P}_{\mathrm{DP}}^{(t+1)} = \mathcal{P}^{'}$; otherwise, take $\mathcal{P}_{\mathrm{DP}}^{(t+1)} = \mathcal{P}_{\mathrm{DP}}^{(t)}$. This procedure is carried out repeatedly again, and the $\mathcal{P}_{\mathrm{DP}}^{(t)}$ at the final iteration is proposed as a privatized persistence diagram. The whole procedure is summarized in Algorithm \ref{alg: Our Mech-detailed}.

\begin{algorithm}[htbp]
   \caption{MCMC implementation of the exponential mechanism} 
   \label{alg: Our Mech-detailed}
\begin{algorithmic}
   \STATE {\bfseries Input:}  $\mathcal{P}_{0}(D),\mathcal{P}_{1}(D)$, and a positive integer $M$
   \STATE {\bfseries Initialization:} $\mathcal{P}_{0, \mathrm{DP}}^{(0)},\mathcal{P}_{1, \mathrm{DP}}^{(0)} \sim \mathrm{Unif}(\mathsf{Pers}_{M})$
   \FOR{$i=1,2,\dots,$}
   \STATE $\mathcal{P}_0^{'}=\mathcal{P}_{0, \mathrm{DP}}^{(t-1)}+{\rm N}(0,\sigma^{2} I_2)$, $\mathcal{P}_1^{'} = \mathcal{P}_{1, \mathrm{DP}}^{(t-1)}+{\rm N}(0,\sigma^{2} I_2)$
   \STATE $\mathcal{P}^{'} = (\mathcal{P}_0', \mathcal{P}_1')$
   \STATE $p= \min \big\{0, -\frac{\epsilon}{2\Delta}\big( u_D(\mathcal{P}^{(t-1)})-u_D(\mathcal{P}^{'})\big) \big\}$
   \STATE $U \sim {\rm Unif}(0,1)$
  \IF{$\log U \leq p$}
    \STATE $\mathcal{P}_{0, \mathrm{DP}}^{(t)} = \mathcal{P}_0',\mathcal{P}_{1, \mathrm{DP}}^{(t)} = \mathcal{P}_1'$
  \ELSE
    \STATE $\mathcal{P}_{0, \mathrm{DP}}^{(t)} = \mathcal{P}_{0, \mathrm{DP}}^{(t-1)},\mathcal{P}_{1, \mathrm{DP}}^{(t)} = \mathcal{P}_{1, \mathrm{DP}}^{(t-1)}$
  \ENDIF
   \ENDFOR
\end{algorithmic}
\end{algorithm}

\subsection{Additional Results in the Real Data Analysis}

The following illustration, Figure \ref{fig: real data 3}, depicts the accuracy of privatized persistence diagrams for Walker $A$ and $B$. The procedure of implementing the mechanism is the same with that for Walker C described in Section \ref{sec: real data}. One can see that we also obtain quite accurate privatized diagrams in this cases as well. 

\begin{figure}
     \centering
     \begin{subfigure}[b]{1\textwidth} 
         \centering
         \includegraphics[width=\textwidth]{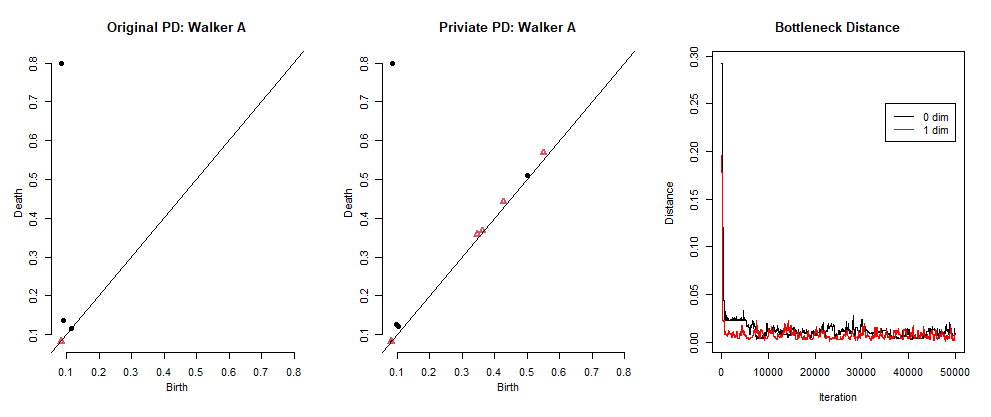}
         \caption{Walker A }
     \end{subfigure}
     \hfill
     \begin{subfigure}[b]{1\textwidth}
         \centering
         \includegraphics[width=\textwidth]{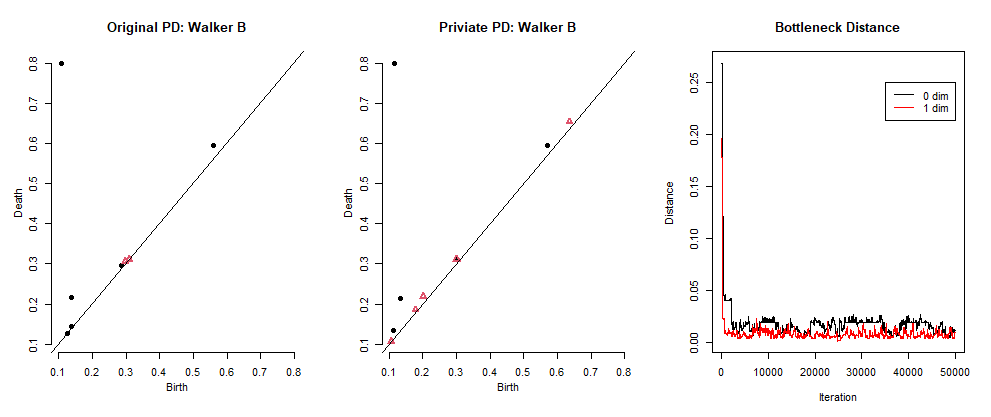}
           \caption{Walker B}
     \end{subfigure}
     \caption{
Figure 8-(a) presents the $L^1$-DTM persistence diagram of the data of Walker A and its privatized diagram. Also, the change of the bottleneck distance between the true and privatized diagram over the MCMC iterations is depicted. At the final iteration, we have $\db (P_{0}(D),P_{0, \mathrm{DP}})=0.01$ and $\db (P_{1}(D),P_{1,\mathrm{DP}})=0.009$. Figure 8-(b) presents the same kind of information about Walker B. Here, at the final iteration, we obtain $\db (P_0(D),P_{0, \mathrm{DP}})=0.011$ and $\db (P_1(D),P_{1, \mathrm{DP}})=0.009$.}
\label{fig: real data 3}
\end{figure}

\end{document}